\newcommand{\eqcomment}[1]{%
  \text{\phantom{#1}} \tag*{#1}
}
\definecolor{darkgreen}{rgb}{0,0.5,0}
\newcommand{\hght}{5.5cm}
\newtheorem{remark}{Remark}
\newtheorem{lemma}{Lemma}
\newtheorem{theorem}{Theorem}
\newtheorem{corollary}{Corollary}
\newtheorem{definition}{Definition}
\newenvironment{assumption}[1]
  {\innerassumption}
  {\endinnerassumption}
\newenvironment{hypothesis}[1]
  {\innerhypothesis}
  {\endinnerhypothesis}
\newenvironment{mythm}[1]
  {\innermythm}
  {\endinnermythm}
\def\bM{\bfM}
\def\bx{\bfx}
\def\bz{\bfz}
\def\order{\mathcal{O}}
\newcommand{\review}[1]{{\color{black} #1}}
\def\u{\linewidth/30} 
\def\hrh{2.5\u} 
\def\hrl{10\u} 
\def\vrh{10\u} 
\def\vrl{2.5\u} 
\def\eg{\textit{e.g.}}
\def\etal{\textit{et al.}}
\def\ie{\textit{i.e.}}
\newcommand{\bfX}{\mathbf{X}}
\newcommand{\bfW}{\mathbf{W}}
\newcommand{\bfM}{\mathbf{M}}
\newcommand{\bfz}{\mathbf{z}}
\newcommand{\bfx}{\mathbf{x}}
\newcommand{\bfr}{\mathbf{r}}
\newcommand{\bfe}{\mathbf{e}}
\newcommand{\bfy}{\mathbf{y}}
\newcommand{\bfI}{\mathbf{I}}
\newcommand{\bfU}{\mathbf{U}}
\newcommand{\bfV}{\mathbf{V}}
\newcommand{\bfv}{\mathbf{v}}
\newcommand{\sumk}{\sum_{k=1}^K}
\newcommand{\sumn}{\sum_{i=1}^n}
\newcommand{\dataset}{\mathcal{X}}
\newcommand{\neucl}[1]{\left\lVert#1\right\rVert_2}
\newcommand{\ipeucl}[2]{\left\langle #1,#2 \right\rangle_2}
\newcommand{\pmeas}{\mathcal{P}}
\newcommand{\gaussset}{\mathcal{G}}
\newcommand{\gaussmix}[1]{{\mathcal{G}_{#1}}}
\newcommand{\fracsqm}{\frac{1}{\sqrt{m}}}
\newcommand{\argmin}[1]{\underset{#1}{\text{argmin}}}
\newcommand{\freqs}{\Omega} 
\newcommand{\freq}{{\boldsymbol\omega}} 
\newcommand{\hyppar}{\Theta} 
\newcommand{\chrc}{\psi} 
\newcommand{\skop}{\mathcal{A}} 
\newcommand{\malpha}{{\boldsymbol\alpha}} 
\newcommand{\mmu}{{\boldsymbol\mu}} 
\newcommand{\mSigma}{{\boldsymbol\Sigma}} 
\newcommand{\msigma}{{\boldsymbol\sigma}} 
\newcommand{\mrho}{{\boldsymbol\varphi}} 
\newcommand{\mtheta}{{\boldsymbol\theta}} 
\newcommand{\thetaset}{\mathcal{T}}
\newcommand{\gmmcover}{\Gamma}
\newcommand{\pp}{P}
\newcommand{\PP}{\mathbb{P}}
\newcommand{\qq}{Q}
\newcommand{\dens}{p}
\newcommand{\cstdom}{\mathrm{A}}
\newcommand{\rkhs}{\mathcal{H}}
\newcommand{\ipH}[2]{\left\langle #1,#2 \right\rangle_\rkhs}
\newcommand{\freqdist}{\Lambda}
\newcommand{\model}{\Sigma}
\newcommand{\Xspace}{X}
\newcommand{\TIk}{\mathbf{K}}
\newcommand{\supp}{\text{\upshape supp}}
\newcommand{\secant}{\mathcal{S}}
\newcommand{\decod}{\Delta}
\newcommand{\enet}{\mathcal{N}}
\newcommand{\kernel}{\kappa}
\newcommand{\meas}{\nu} 
\newcommand{\mmap}{\varphi}
\newcommand{\sigmaker}{\sigma_\freqdist}
\newcommand{\sigkersmall}{a}
\newcommand{\imaginaryi}{\mathsf{i}}
\definecolor{light-gray}{gray}{0.3}
\newcommand{\pproj}{\pp_\text{proj}}
\newcommand{\distfun}[3]{\gamma_{#3}\left(#1,#2\right)}
\newcommand{\distfunsq}[3]{\gamma^2_{#3}\left(#1,#2\right)}
\newcommand{\distfuns}[1]{\gamma_{#1}}
\newcommand{\normK}[2]{\distfun{#1}{#2}{\freqdist}}
\newcommand{\normKs}{\distfuns{\freqdist}}
\newcommand{\normKsq}[2]{\distfunsq{#1}{#2}{\freqdist}}
\newcommand{\rad}[1]{\text{\upshape radius}\left(#1\right)}
\newcommand{\embd}{\phi}
\newcommand{\nosemic}{\renewcommand{\@endalgocfline}{\relax}}
\newcommand{\dosemic}{\renewcommand{\@endalgocfline}{\algocf@endline}}
\begin{document}


\author{{
\sc Nicolas Keriven}$^*$,\\[2pt]
Universit\'e Rennes 1, INRIA Rennes-Bretagne Atlantique \\
Campus de Beaulieu, Rennes, France\\
$^*${{Corresponding author: Email: nicolas.keriven@inria.fr}}\\[2pt]
{\sc Anthony Bourrier}\\[2pt]
Gipsa-Lab, 11 rue des Mathématiques, St-Martin-d'H\'er\`es, France\\
{{Email: anthony.bourrier@gmail.com}}\\[6pt]
{\sc R\'emi Gribonval}\\[2pt]
INRIA Rennes-Bretagne Atlantique, Campus de Beaulieu, Rennes, France\\
{{Email: remi.gribonval@inria.fr}}\\[6pt]
{\sc and}\\[6pt]
{\sc Patrick P\'erez} \\[2pt]
Technicolor, 975 Avenue des Champs Blancs, Cesson S\'evign\'e, France\\
{{Email: patrick.perez@technicolor.fr}}}

\title{Sketching for Large-Scale Learning of Mixture Models}

\maketitle

\begin{abstract}
{Learning parameters from voluminous data can be prohibitive in terms of memory and computational requirements. We propose a ``compressive learning'' framework where we estimate model parameters from a \emph{sketch} of the training data. This sketch is a collection of generalized moments of the underlying probability distribution of the data. It can be computed in a single pass on the training set, and is easily computable on streams or distributed datasets. The proposed framework shares similarities with compressive sensing, which aims at 
drastically reducing the dimension of high-dimensional signals while preserving the ability to reconstruct them.

To perform the estimation task, we derive an iterative algorithm analogous to sparse reconstruction algorithms in the context of linear inverse problems. We exemplify our framework with the compressive estimation of a Gaussian Mixture Model (GMM), providing heuristics on the choice of the sketching procedure and theoretical guarantees of reconstruction. We experimentally show on synthetic data that the proposed algorithm yields results comparable to the classical Expectation-Maximization (EM) technique while requiring significantly less memory and fewer computations when the number of database elements is large. We further demonstrate the potential of the approach on real large-scale data (over $10^{8}$ training samples) for the task of model-based speaker verification.

Finally, we draw some connections between the proposed framework and approximate Hilbert space embedding of probability distributions using random features. We show that the proposed sketching operator can be seen as an innovative method to design translation-invariant kernels adapted to the analysis of GMMs. We also use this theoretical framework to derive information preservation guarantees, in the spirit of infinite-dimensional compressive sensing.}
\\
{Compressive Sensing, Compressive Learning, Database Sketching, Gaussian Mixture Model}
\end{abstract}

\section{Introduction}
\label{sec:intro}

An essential challenge in machine learning is to develop scalable techniques able to cope with large-scale training collections comprised of numerous elements of high dimension. To achieve this goal, it is necessary to come up with approximate learning schemes which perform the learning task with a fair precision while reducing the memory and computational requirements compared to standard techniques. A possible way to achieve such savings is to compress data beforehand, as illustrated in Figure~\ref{fig:comp_data}. Instead of the more classical individual compression of each element of the database with random projections \cite{Achlioptas2001, Fradkin2003,Calderbank2009,Reboredo2013,Maillard2009} the framework we consider in this paper corresponds to the top right scheme: a large collection of training data is compressed into a fixed-size representation called \emph{sketch}. The dimension of the sketch --and therefore the cost of infering/learning the parameters of interest from this sketch-- does not depend on the number of data items in the initial collection.
A complementary path to handling large-scale collections is online learning \cite{Cappe2009}. Sketching, which leverages ideas originating from streaming algorithms \cite{Cormode2005}, can trivially be turned into an online version and is amenable to distributed computing.

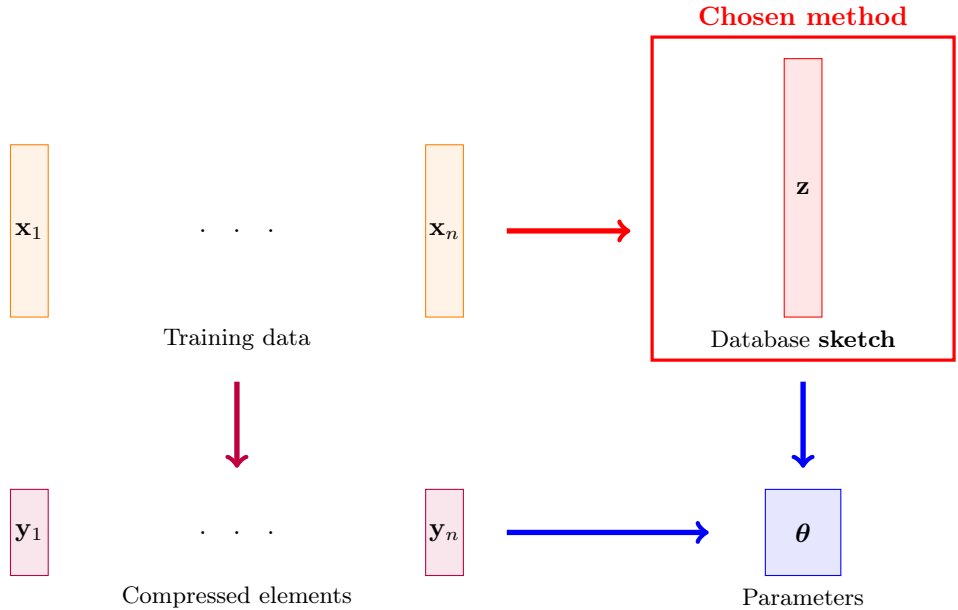
\begin{figure}
\centering
\begin{tikzpicture}
\def\u{13pt} 
\def\hrh{2.5*\u} 
\def\hrl{10*\u} 
\def\vrh{10*\u} 
\def\vrl{2.5*\u} 
 \def\hv{0.5*\vrh}
 \def\lv{\vrl/2.3}
 \def\hs{\hv}
 \def\gap{\hv/2}
 \def\gapt{\hv/3.33}
 
  \draw[orange,fill=orange!10] (-9*\lv,0) rectangle (-8*\lv,-\hv);
  \draw (-8.5*\lv,-\hv/2) node{ $\bfx_1$};
  \draw (-3*\lv,-\hv/2) node{.\quad .\quad .};
  \draw[orange,fill=orange!10] (2*\lv,0) rectangle (3*\lv,-\hv);
  \draw (2.5*\lv,-\hv/2) node{ $\bfx_n$};
  \draw (-3*\lv,-\hv-\hv/8) node{\small Training data};
  
  \draw[->, line width=2pt, color=purple] (-3*\lv,-\hv-\hv/4-\hv/8) -- (-3*\lv,-2*\hv+\hv/8);
  \draw[purple,fill=purple!10] (-9*\lv,-2*\hv) rectangle (-8*\lv,-2*\hv-\hv/2);
  \draw (-8.5*\lv,-2*\hv-\hv/4) node{ $\bfy_1$};
  \draw[purple,fill=purple!10] (2*\lv,-2*\hv) rectangle (3*\lv,-2*\hv-\hv/2);
  \draw (2.5*\lv,-2*\hv-\hv/4) node{ $\bfy_n$};
  \draw (-3*\lv,-7*\hv/4-\hv/2) node{.\quad .\quad .};
  \draw (-3*\lv,-2*\hv-\hv/2-\hv/8) node{\small Compressed elements};

  \draw[->, line width=2pt, color=red] (3*\lv+\hv/4,-\hv/2) -- (8*\lv-\hv/8,-\hv/2);
  \draw[red,fill=red!10] (11.5*\lv,\hv/2) rectangle (12.5*\lv,-\hv);
  \draw (12*\lv,-\hv/4) node{ $\bfz$};
  \draw (12*\lv,-\hv-\hv/8) node{\small Database \emph{\bf\small sketch}};
  \draw[red,very thick] (8*\lv,\hv/2+\hv/8) rectangle (16*\lv,-\hv-\hv/4);
\draw[red] (12*\lv,\hv/2+\hv/4) node{\bf Chosen method};

 \draw[->, line width=2pt, color=blue] (3*\lv+\hv/4,-2*\hv-\hv/4) -- (9.5*\lv,-2*\hv-\hv/4);
  \draw[->, line width=2pt, color=blue] (12*\lv,-\hv-\hv/4-\hv/8) -- (12*\lv,-2*\hv+\hv/8);
\draw[blue,fill=blue!10] (11*\lv,-2*\hv) rectangle (13*\lv,-2*\hv-\hv/2);
\draw (12*\lv,-2*\hv-\hv/4) node{ $\mtheta$};
\draw (12*\lv,-2*\hv-\hv/2-\hv/8) node{\small Parameters};
\end{tikzpicture}

\caption{{\bf Paths to compressive learning}.  The training data is compressed into a smaller representation, typically through random projections. This can either consist in reducing the dimensions of each individual entry (left bottom) or in computing a more global compressed representation of the data, called \emph{sketch} (top right). Parameters are then inferred from such a compressed representation by a learning algorithm adapted to it. The proposed approach explores the second sketch-based option.
}
\label{fig:comp_data}
\end{figure}

\subsection{From compressive sensing to sketched learning.}
As we will see, compressing a training collection into a sketch before learning is reminiscent of --and indeed inspired by-- compressive sensing (CS)~\cite{Foucart2013} and streaming algorithms \cite{Cormode2004,Cormode2005}. The main goal of CS is to find a dimensionality-reducing linear operator $\bM$ such that certain high-dimensional vectors (or signals) can be reconstructed from their observations by $\bM$. Initial work on CS~\cite{Candes2006a,Donoho2006} showed that such a reconstruction is possible for $k$-sparse signals of dimension $d$; from only $\order(k)$ linear measurements by (theoretically) solving an intractable NP-complete problem (\cite{Foucart2013}, chap. 2), and in practice from $\order(k\ln(d))$ linear measurements by solving a convex relaxation of this problem. Matrices $\bM$ with such reconstruction guarantees can be obtained as typical draws of certain random matrix ensembles. This reconstruction paradigm from few random measurements has subsequently been considered and proven to work for more general signal models \cite{Bourrier2014}. Examples of such models include low-rank matrices~\cite{Candes2011}, cosparse vectors~\cite{Nam2013} and dictionary models~\cite{Candes2011a}. Reconstruction from compressive measurements for these models is made possible, among other properties, by the fact that they correspond to unions of subspaces~\cite{Blumensath2011} which have a much lower dimension than the ambient dimension.

Low-dimensional models also intervene in learning procedures, where one aims at fitting a model of moderate ``dimension'' to some training data $\{\bx_{1} \ldots \bx_{n}\} \subset X$ in order to prevent overfitting and ensure good generalization properties. In this paper, we consider \emph{mixture models} comprising probability distributions on the set $X$ of the form
\begin{equation}\label{eq:MixtureModel}
\pp = \sum_{k=1}^K \alpha_k \pp_k,
\end{equation}
where the $\pp_k$'s are probability distributions taken in a certain set and the $\alpha_k$'s, $\alpha_{k}\geq 0$, $\sum_{k}\alpha_{k}=1$, are the weights of the mixture. Such a model 
can be considered as a generalized sparse model in the linear space $E$ of finite measures over the set $X$.

Similarly to compressive sensing, one can define a linear compressive operator $\skop: E\rightarrow \mathbb{C}^m$ which computes \emph{generalized moments}~\cite{Hall2005} of a measure $\meas$:
\begin{equation}\label{eq:generalized_moments}
\skop: \meas \mapsto \skop\meas := \fracsqm \left[\int_{X} M_1 d\meas, \ldots , \int_{X} M_m d\meas\right],
\end{equation}
where the $M_j$'s are well-chosen functions on $X$ and the constant $\fracsqm$ is used for normalization purposes. In the case where $\meas$ is a probability measure $\pp$, the integrals are the expectations of $M_j(\bx)$ with $\bx\sim \pp$.

Given some training data $\dataset=\left\{\bx_1,\ldots,\bx_n\right\}$ drawn from $X$, the corresponding empirical distribution is
\begin{equation}\label{eq:empirical_sketch}
\hat{\pp}=\frac{1}{n}\sum_{i=1}^n \delta_{\bx_i},
\end{equation}
where $\delta_{\bx_i}$ is a unit mass at $\bx_i$. A practical sketch of the data can then be defined\footnote{Any other unbiased empirical estimator of the moments, for example using the empirical median, can be envisioned.} and computed as 
\begin{equation}\label{eq:EmpSketch}
\hat{\bz} = \skop \hat{\pp}=\frac{1}{n\sqrt{m}}\left[\sum_{i=1}^n M_j(\bfx_i)\right]_{j=1,...,m}.
\end{equation}
Denoting $\Sigma \subset E$ the set of probability distributions satisfying~\eqref{eq:MixtureModel}, fitting a probability mixture to the training collection $\dataset$ in a compressive fashion can be expressed as the following optimization problem
\begin{equation}\label{eq:problemset}
\argmin{\pp\in \Sigma}~\neucl{\hat{\bz}-\skop \pp},
\end{equation}
which corresponds to the search for the probability mixture in the model set $\Sigma$ whose sketch is closest to the empirical data sketch $\hat{\bz}$. By analogy with sparse reconstruction, we propose an iterative greedy reconstruction algorithm to empirically address this problem, and exemplify our framework on the estimation of GMMs.

\subsection{Related works}
\label{sec:related_works}

A large set of the existing literature on random projections for dimension reduction in the context of learning focuses on the scheme represented on the bottom left of Figure~\ref{fig:comp_data}\,: each item of the training collection is individually compressed with random projections \cite{Achlioptas2001, Fradkin2003} prior to learning for classification \cite{Calderbank2009,Reboredo2013} or regression \cite{Maillard2009}, or to fitting a GMM \cite{Dasgupta1999}. In contrast, we consider here a framework where the whole training collection is compressed to a fixed-size sketch, corresponding to the top right scheme in Figure~\ref{fig:comp_data}. This framework builds on work initiated in \cite{Bourrier2013}\cite{Bourrier2015}. Compared to \cite{Bourrier2013}\cite{Bourrier2015}, the algorithms we propose here: a) are inspired by Orthogonal Matching Pursuits rather than Iterative Hard Thresholding; b) can handle GMMs with arbitrary diagonal variances; c) yield much better empirical performance 
(see Section~\ref{sec:results} for a thorough experimental comparison).

\review{Our approach bears connections with the Generalized Method of Moments (GeMM) \cite{Hall2005}, where parameters are estimated by matching empirical generalized moments computed from the data with theoretical generalized moments of the distribution. Typically used in practice when the considered probability models do not yield explicit likelihoods, the GeMM also provides mathematical tools to study the identifiability and learnability of parametric models such as GMMs~\cite{Belkin2010,Belkin2010a,Hsu2013}. 
Using the empirical characteristic function is a natural way of obtaining moment conditions \cite{Feuerverger1977,Feuerverger1981,Tran1998}. Following developments of GeMM with a continuum of moments instead of a finite number of them \cite{Carrasco2000}, powerful estimators can be derived when the characteristic function is available at all frequencies simultaneously \cite{Carrasco2002,Xu2010,Carrasco2014}. Yet, these estimators are rarely implementable in practice.
}

This is naturally connected with a formulation of mixture model estimation as a linear inverse problem. In~\cite{Bunea2010,Bertin2011} for example, this is addressed by considering a finite and incoherent dictionary of densities and the unknown density is reconstructed from its scalar products with every density of the dictionary. These scalar products can be interpreted as generalized moments of the density of interest. Under these assumptions, the authors provide reconstruction guarantees for their cost functions. In our framework, we consider possibly infinite and even uncountable dictionaries, and only collect a limited number of ``measurements'', much smaller than the number of elements in the dictionary. 

Sketching is a classical technique in the database literature \cite{Cormode2005}. A sketch is a fixed-size data structure which is updated with each element of a data stream, allowing one to perform some tasks without keeping the data stored. Applications include the detection of frequent elements, called \emph{heavy hitters} \cite{Cormode2009} and simple statistical estimations on the data \cite{Gilbert2002}. The sketches used in these works typically involve quantization steps which we do not perform in our work. We also consider the density estimation problem, which is closer to machine learning than the typical application of sketches. Closer to our work, the authors in~\cite{Thaper2002} consider the estimation of 2-dimensional histograms from random linear sketches. Even though this last method is theoretically applicable in higher dimensions, the complexity would grow exponentially with the dimension of the problem. Such a ``curse of dimensionality'' is also found in \cite{Bunea2010,Bertin2011}: the size of the dictionary grows exponentially with the dimension of the data vectors, and naturally impacts the cost of the estimation procedure. In our work, we rather consider parametric dictionaries that are described by a finite number of parameters. This enables us to empirically leverage the structure of iterative algorithms from sparse reconstruction and compressive sensing to optimize with respect to these parameters, offering better scalability to higher dimensions. This is reminiscent of generalized moments methods for the reconstruction of measures supported on a finite subset of the real line~\cite{Castro2012}, and can be applied to much more general families of probability measures.

\review{
The particular sketching operator that we propose to apply on GMMs (see Section \ref{sec:freq} and further) is obtained by randomly sampling the (empirical) characteristic function of the distribution of the training data. This can be seen as a combination between two techniques from the Reproducing Kernel Hilbert Space (RKHS) literature, namely embedding of probability distributions in RKHS using a feature map referred to as the ``Mean Map'' \cite{Borgwardt2006, Smola2007,Sriperumbudur2010}, and Random Fourier Features (RFFs) for approximating translation-invariant reproducing kernels \cite{Rahimi2007}.

Embedding of probability distributions in RKHS with the Mean Map has been successfully used for a large variety of tasks, such as two-sample test \cite{Gretton2006}, classification \cite{Muandet2012} or even performing algebraic operations on distributions \cite{Scholkopf2015}. In \cite{Sriperumbudur2011}, the  estimation of a mixture model with respect to the metric of the RKHS is considered with a greedy algorithm. The proposed algorithm is however designed to approximate the target distribution by a large mixture with many components, resulting in an approximation error that decreases as the number of components increases, while our approach considers \eqref{eq:MixtureModel} as a ``sparse'' combination of a fixed, limited number of components which we aim at identifying. Furthermore, unlike our method that uses RFFs to obtain an efficient algorithm, the algorithm proposed in \cite{Sriperumbudur2011} does not seem to be directly implementable.

Many kernel methods can be performed efficiently using finite-dimensional, nonlinear embeddings that approximate the feature map of the RKHS \cite{Rahimi2007,Vedaldi2012}. A popular method approximates translation-invariant kernels with RFFs \cite{Rahimi2007}. There has been since many variants of RFFs that are faster \cite{Le2013,Yang2015,Choromanski2016}, more precise \cite{Xinnan2016}, or designed for a different type of kernel \cite{Vedaldi2012}. Similar to our sketching operator, structures combining RFFs with Mean Map embedding of probability distributions have been recently used by the kernel community \cite{Bo2009,Sutherland2015,Oliva2015} to accelerate methods such as classification with the so-called Support Measure Machine \cite{Muandet2012,Sutherland2015,Oliva2015} or two-sample test \cite{Zhao2014,Chwialkowski2015,Jitkrittum2016,Paige2016}.

Our point of view, \ie~that of generalized compressive sensing, is sensibly different: we consider the sketch as a \emph{compressed} representation of the probability distribution, and demonstrate 
that it contains enough information to robustly recover the distribution from it, resulting in an effective ``compressive learning'' alternative to usual mixture estimation algorithms.
}

\subsection{Contributions and outline}
Our main contributions can be summarized as follows:
\begin{itemize}
\item Inspired by Orthogonal Matching Pursuit (OMP) and its variant OMP with Replacement (OMPR), we design in Section~\ref{sec:algo} an algorithmic framework for general compressive mixture estimation.
\item In the specific context of GMMs, we design in Section \ref{sec:BS} an algorithm that scales better with the number $K$ of mixture components. 
\item Inspired by random Fourier sampling for compressive sensing, we consider sketching operators $\skop$ defined in terms of random sampling of the characteristic function \cite{Bourrier2013,Bourrier2015}. 
However we show that the sampling pattern of \cite{Bourrier2013,Bourrier2015} is not adapted in high dimension. In Section \ref{sec:freqdist}, in the specific case of GMMs we propose a new heuristic and devise a practical scheme for randomly drawing the frequencies that define $\skop$. This is empirically demonstrated to yield significantly improved performance in Section~\ref{sec:results}. 
\item \review{We establish in Section~\ref{sec:kernel} a connection between the choice of the proposed sampling pattern and the design of a reproducing kernel on probability distributions. Compared to existing literature \cite{Gretton2012,Oliva2015}, our method is relatively simpler, faster to perform and fully unsupervised.}
\item Extensive tests on synthetic data in Section \ref{sec:results} demonstrate that our approach matches the estimation precision of a state-of-the-art C++ implementation of the EM algorithm while enabling significant savings in time and memory.
\item In the context of hypothesis testing-based speaker verification, we also report in Section \ref{sec:speaker_verification}
results on real data, where we exploit a corpus of 1000 hours of speech at scales inaccessible to traditional methods, and match  using a very limited number of measurements the results obtained with EM.
\item \review{We provide preliminary theoretical results (Theorem~\ref{thm:appli} in Section~\ref{sec:theory}) on the information preservation guarantees of the sketching operator. 
The proofs of these results (Appendices \ref{sec:proof} and \ref{sec:add_proofs}) introduce a new variant of the Restricted Isometry Property (RIP) \cite{Candes2005}, connected here to kernel mean embedding and Random Features. Compared to usual guarantees in the GeMM literature, our results have less of a ``statistics'' flavor and more of a ``signal processing'' one, such as robustness to modeling error, \ie the true distribution of the data is not exactly a GMM but close to one.}
\end{itemize}


\section{A Compressive Mixture Estimation Framework}
\label{sec:algo}


%
%


In classical compressive sensing \cite{Foucart2013}, a signal $\bfx \in \mathbb{R}^d$ is encoded with a measurement matrix $\bfM \in \mathbb{R}^{m \times d}$ into a compressed representation $\bfz \in \mathbb{R}^m$:
\begin{equation}\label{eq:CS}
\bfz=\bfM\bfx
\end{equation}
and the goal is to recover $\bfx$ from those linear measurements. Often the system is underdetermined ($m<d$) and recovery can only be done with additional assumptions, typically sparsity. The vector $\bfx=[x_\ell]_{\ell=1}^d$ is said to be \emph{sparse} if it has only a limited number $k<d$ of non-zero coefficients. Its \emph{support} is the set of indices of non-zero entries: $\Gamma(\bfx)=\{\ell\ |\ x_\ell\neq 0\}$. The notation $\bfM_{\Gamma}$ (resp. $\bfx_{\Gamma}$) denotes the restriction of matrix $\bfM$ (resp. vector $\bfx$) to columns (resp. entries) with indices in $\Gamma$.

A sparse vector can be seen as a combination of few basis elements: $\bfx=\sum_{\ell \in \Gamma} x_\ell \bfe_\ell$, where $\left\lbrace\bfe_\ell\right\rbrace_{\ell=1,...,d}$ is the canonical basis of $\mathbb{R}^d$. The measurement vector $\bz$ is thus expressed as a combination of few \emph{atoms}, corresponding to the columns of the measurement matrix: $\bfz=\sum_{\ell \in \Gamma}x_\ell\bfM\bfe_\ell$. The set of all atoms is referred to as a \emph{dictionary} $\left\lbrace\bfM\bfe_\ell\right\rbrace_{\ell=1,...,d}$.

\subsection{Mixture model and generalized compressive sensing} Let $E$ = $E(X)$ be a space of signed finite measures over a measurable space $(X,\mathcal{B})$, and $\pmeas$ the set of probability distributions over $X$, $\pmeas := \left\lbrace \pp \in E; \pp\geq 0, \int_X d\pp=1\right\rbrace$. In our framework, a distribution $\pp \in \pmeas$ is encoded with a linear measurement operator (or \emph{sketching} operator) $\skop: \pmeas \rightarrow \mathbb{C}^m$:
\begin{equation}
\bfz=\skop \pp.
\end{equation}
As in classical compressive sensing, we define a ``sparse'' model in $\pmeas$. As mentioned in the introduction, it is here assimilated to a mixture model \eqref{eq:MixtureModel}, generated by combining elements from some given set $\gaussset=\left\lbrace \pp_\mtheta\right\rbrace_{\mtheta \in \thetaset} \subset \pmeas$ of basic distributions indexed by a parameter $\mtheta \in \thetaset$. For some finite $K \in \mathbb{N}^*$, a distribution 
is thus said to be \emph{K-sparse} (in $\gaussset$) if it is a convex combination of $K$ elements from $\gaussset$:
\begin{equation}
\pp_{\hyppar,\malpha}=\sumk \alpha_k \pp_{\mtheta_k},
\end{equation}
with $\pp_{\mtheta_k} \in \gaussset$, $\alpha_k \geq 0$ for all $k$'s, and $\sumk \alpha_k=1$.
We name \emph{support} of the representation\footnote{Note that this representation might not be unique.} $(\hyppar,\malpha)$ of such a sparse distribution the set of parameters $\hyppar=\left\lbrace\mtheta_1,...,\mtheta_K\right\rbrace$.


The measurement vector $\bfz=\skop \pp_{\hyppar,\malpha}=\sumk \alpha_k \skop \pp_{\mtheta_k}$ of a sparse distribution is a combination of \emph{atoms} selected from the \emph{dictionary} $\left\lbrace\skop \pp_\mtheta\right\rbrace_{\mtheta \in \thetaset}$ indexed by the parameter $\mtheta$. Table \ref{tab:notation} summarizes the notations used in the context of compressive mixture estimation and their correspondence with more classical notions from compressive sensing.

\begin{table}[t]
\centering
\begin{tabular}{c|c|c|}
\cline{2-3}
& Usual compressive sensing & Compressive mixture estimation \\ \hline
\multicolumn{1}{|c|}{Signal} & $\bfx \in \mathbb{R}^d$ & $\pp \in \pmeas$ \\ \hline
\multicolumn{1}{|c|}{Model} & $K$-sparse vectors & $K$-sparse mixtures $\pp_{\hyppar,\malpha} = \sumk \alpha_k \pp_{\mtheta_k} $ \\ \hline
\multicolumn{1}{|c|}{Measurement operator} & $\bfM \in \mathbb{R}^{m \times d}$ & $\skop: \pmeas \rightarrow \mathbb{C}^m$ \\ \hline
\multicolumn{1}{|c|}{Support 
} & $\Gamma(\bfx)=\{\ell\ |\ x_\ell \neq 0\}$ & $\Gamma(\pp_{\hyppar,\malpha})=\hyppar=\lbrace\mtheta_1,...,\mtheta_K\rbrace$ \\ \hline
\multicolumn{1}{|c|}{Dictionary of atoms} & $\left\lbrace\bfM\bfe_\ell\right\rbrace_{\ell=1,...,d}$ & $\left\lbrace\skop \pp_\mtheta\right\rbrace_{\mtheta \in \thetaset}$ \\ \hline
\end{tabular}
\caption{Correspondance between objects manipulated in usual compressive sensing of finite-dimensional signals and in the proposed compressive mixture estimation framework.}
\label{tab:notation}
\end{table}

\subsection{Principle for reconstruction: moment matching}
As mentioned in Section \ref{sec:intro}, usual reconstruction algorithms (also known as {\em decoders}~\cite{Cohen2009,Bourrier2014}) in generalized compressive sensing are designed with the purpose of minimizing the measurement error while enforcing sparsity \cite{Blumensath2008}, as formulated in equation \eqref{eq:problemset}. Here it also corresponds to traditional parametric optimization in the Generalized Method of Moments (GeMM) \cite{Hall2005}:
\begin{equation}\label{eq:costfun}
\argmin{\hyppar,\malpha} \neucl{\hat\bfz-\skop \pp_{\hyppar,\malpha}},
\end{equation}
where $\hat\bfz=\skop\hat\pp=\tfrac{1}{n}\sum_{i=1}^n\skop\delta_{\bfx_i}$ is the empirical sketch. This problem is usually highly non-convex and does not allow for an efficient direct optimization, nevertheless we show in Section \ref{sec:theory} that in some cases it yields a decoder robust to modeling errors and empirical estimation errors, with high probability.

Convex relaxations of~\eqref{eq:costfun} based on sparsity-promoting penalization terms \cite{Bunea2010,Bertin2011,Castro2012,Pilanci2012} can be envisioned in certain settings, however their direct adaptation to general uncountable dictionaries of atoms (\eg, with GMMs) seems difficult. The main alternative is greedy algorithms. Using an algorithm inspired by Iterative Hard Thresholding (IHT) \cite{Blumensath2009}, Bourrier \etal~\cite{Bourrier2013} estimate mixtures of isotropic Gaussian distributions with fixed variance, using a sketch formed by sampling the empirical characteristic function. As will be shown in Section \ref{sec:results_N}, this IHT-like algorithm often yields an unsatisfying local minimum of \eqref{eq:costfun} when the variance is estimated. Instead, we propose a greedy approach similar to Orthogonal Matching Pursuit (OMP) \cite{Mallat1993,Pati1993} and its extension OMP with Replacement (OMPR) \cite{Jain2011}.

\review{Another intuitive solution would be to discretize the space of the parameter $\mtheta$ to obtain a finite dictionary of atoms and apply the classic convex relaxation or greedy methods mentioned above. However, one quickly encounters the well-known curse of dimensionality: for a grid with precision $\varepsilon$ and a parameter of dimension $p$, the size of the dictionary is as $\mathcal{O}\left(\varepsilon^{-p}\right)$, which is intractable even for moderate $p$. Initial experiments for learning GMMs in dimension $d=2$ with diagonal covariance (\ie the dimension of the parameter $\mtheta$ is $p=4$) show that this approach is extremely long and has a very limited precision. Instead, in the next section we propose an adaptation of OMPR directly in the continuous domain.}

\subsection{Inspiration: OMPR for classical compressive sensing}\label{sec:OMPR}

Matching Pursuit \cite{Mallat1993} and Orthonormal Matching Pursuit \cite{Pati1993} deal with general sparse approximation problems. They gradually extend the sparse support by selecting atoms most correlated with the residual signal, until the desired sparsity is attained.

An efficient variation of OMP called OMP with Replacement (OMPR) \cite{Jain2011} exhibits better reconstruction guarantees. Inspired by IHT\cite{Blumensath2009}, and similar to CoSAMP or Subspace Pursuit \cite{Foucart2013}, it increases the number of iterations of OMP and extends the size of the support \emph{further than} the desired sparsity before reducing it with Hard Thresholding to suppress spurious atoms.

\subsection{Proposed algorithms: Compressive Learning OMP/OMPR}\label{sec:OMPRC}
Adapting OMPR to the considered compressive mixture estimation framework requires several modifications. We detail them below, and summarize them in Algorithm \ref{algo:CL-OMP}.

\begin{algorithm2e*}[htbp]
\SetKwBlock{uBegin}{Step}{end}
\KwData{Empirical sketch $\hat{\bfz}$, sketching operator $\skop$, sparsity $K$, number of iterations $T\geq K$}
\KwResult{Support $\hyppar$, weights $\malpha$}
$\hat\bfr \leftarrow \hat\bfz$; $\hyppar \leftarrow \emptyset$ \;
\For{$t \leftarrow 1$ \KwTo $T$}{
\uBegin({\bf 1}: Find a normalized atom highly correlated with the residual with a gradient descent){$\mtheta \leftarrow \texttt{maximize}_\mtheta \left(\mathrm{Re}\ipeucl{\frac{\skop \pp_\mtheta}{\neucl{\skop \pp_\mtheta}}}{\hat\bfr}, \texttt{init}=\texttt{rand}\right)$ \;
}
\uBegin({\bf 2}: Expand support){$\hyppar \leftarrow \hyppar\cup \{\mtheta\}$ \;
}
\uBegin({\bf 3}: Enforce sparsity by Hard Thresholding if needed){\If{$|\hyppar|>K$}{$\boldsymbol\beta \leftarrow \arg\min_{\boldsymbol\beta \geq 0} \neucl{\hat\bfz - \sum_{k=1}^{|\hyppar|} \beta_k \frac{\skop \pp_{\mtheta_k}}{\neucl{\skop \pp_{\mtheta_k}}}
}$\; 
Select $K$ largest entries $\beta_{i_1},...,\beta_{i_K}$\;
Reduce the support $\hyppar \leftarrow \{\mtheta_{i_1},...,\mtheta_{i_K}\}$\;
}
}
\uBegin({\bf 4}: Project to find weights){$\malpha \leftarrow \arg\min_{\malpha \geq 0} \neucl{\hat\bfz - \sum_{k=1}^{|\hyppar|} \alpha_k \skop \pp_{\mtheta_k}}$\;
}
\uBegin({\bf 5}: Perform a gradient descent \emph{initialized with current parameters}){$\hyppar,\malpha \leftarrow \texttt{minimize}_{\hyppar,\malpha}\left( \neucl{\hat\bfz - \sum_{k=1}^{|\hyppar|} \alpha_k \skop \pp_{\mtheta_k}}, \texttt{init}=\left({\hyppar},{\malpha}\right), \texttt{constraint}=\{\malpha \geq 0\}\right)$\;
}
\nosemic Update residual: $\hat\bfr \leftarrow \hat\bfz - \sum_{k=1}^{|\hyppar|} {\alpha}_k \skop \pp_{\mtheta_k}$\;
}
\nosemic Normalize $\malpha$ such that $\sumk \alpha_k=1$\;
\caption{Compressive mixture learning {\em {\`a} la} OMP: CL-OMP ($T=K$) and CL-OMPR ($T=2K$)}
 \label{algo:CL-OMP}
\end{algorithm2e*}

Several aspects of this framework must be highlighted:
\begin{itemize}
\item
{\bf Non-negativity.}
Unlike classical compressive sensing, the compressive mixture estimation framework imposes a non-negativity constraint on the weights $\malpha$, that we enforce at each iteration. Thus Step 1 is modified compared to classical OMPR by replacing the modulus of the correlation by its real part, to avoid negative correlation between atom and residual. Similarly, in Step 4 we perform a Non-Negative Least-Squares (NNLS) \cite{Lawson1995} instead of a classical Least-Squares.
\item
{\bf Normalization.}
Note that we do not enforce normalization $\sumk \alpha_k=1$ at each iteration. Instead, a normalization of $\malpha$ is performed at the end of the algorithm to obtain a valid distribution. Enforcing the normalization constraint at each iteration was found on initial experiments to have a negligible effect while increasing computation time.
\item
{\bf Continuous dictionary.}
The set $\gaussset$ of elementary distributions is often continuously indexed (as with GMMs, see Section~\ref{sec:BS}) and cannot be exhaustively searched. Instead we propose to replace the maximization in Step 1 of classical OMPR with a randomly initialized gradient descent, denoted by a call to a sub-routine $\texttt{maximize}_\mtheta$, leading to a -- local -- maximum of the correlation between atom and residual. Note that the atoms are normalized during the search, as is often the case with OMP.
\item
{\bf Global optimization step to handle coherent dictionaries.}
Compared to classical OMPR, the proposed algorithm includes a new step at each iteration (Step 5), which further reduces the cost function with a gradient descent initialized with the \emph{current} parameters $(\hyppar,\malpha)$. This is denoted by a call to the sub-routine $\texttt{minimize}_{\hyppar,\malpha}$. The need for this additional step stems from the lack of incoherence between the elements of the uncountable dictionary. 
For instance, in the case of GMM estimation, a $(K+1)$-GMM approximation of a distribution cannot be directly derived from a $K$-GMM by simply adding a Gaussian. 
This is reminiscent of a similar problem handled in High Resolution Matching Pursuit (HRMP) \cite{Gribonval1996}, which uses a multi-scale decomposition of atoms, while we handle here the more general case of a continuous dictionary using a global gradient descent that adjusts all atoms. \review{Experiments show that this step is the most time-consuming of the algorithm, but that it is necessary.}
%
\end{itemize}
Similar to classical OMPR, Algorithm \ref{algo:CL-OMP} yields two algorithms depending on the number of iterations:

\begin{enumerate}
\item {\bf Compressive Learning OMP} (CL-OMP) if run \emph{without} Hard Thresholding (\ie~ with $T=K$);
\item {\bf CL-OMPR} (with Replacement) if run with $T=2K$ iterations.
\end{enumerate}

\review{
\paragraph{Learning the number of components ?} In the proposed framework, the number of components $K$ is known in advance and provided by the user. However, it is known that greedy approaches such as OMP are convenient to derive stopping conditions, that could be readily applied to CL-OMP: when the residual falls below a fixed (or adaptive) threshold, stop the algorithm (additional strategies would be required for CL-OMPR however). In this paper however, we only compare the proposed method with classical approaches such as EM for Gaussian Mixture Models, that consider the number of components $K$ {\em known in advance}. We leave for future work the implementation of a stopping condition for CL-OMP(R) and comparison with existing methods for model selection.
}

\subsection{Complexity of CL-OMP(R).}

Just as OMP, which complexity scales quadratically with the sparsity parameter $K$, proposed greedy approaches CL-OMP or CL-OMPR have a computational cost of the order of $\order(mdKT)$, where $T \geq K$ is the number of iterations, resulting in a quadratic cost with respect to the number of components $K$.

This is potentially a limiting factor for the estimation of mixtures of many basic distributions (large $K$). In classical sparse approximation, approximate least squares approaches such as Gradient Pursuit \cite{Blumensath2008} or LocOMP \cite{Mailhe2009} have been developed to overcome this computational bottleneck.  One could probably get inspiration from these approaches to further scale up compressive mixture estimation, however in the context of GMMs we propose in Section~\ref{sec:BS} to rather leverage ideas from existing fast Expectation-Maximization (EM) algorithms that are \emph{specific to GMMs}.

\subsection{Sketching by randomly sampling the characteristic function}\label{sec:sampling}

Let us now assume $X=\mathbb{R}^d$. {\em The reader will notice that in classical compressive sensing, the compressed object is a vector $\bfx \in \mathbb{R}^{d}$, while in this context, a training collection of vectors $\{\bfx_{1},\ldots,\bfx_{n}\} \subset \mathbb{R}^{d}$ is considered as an empirical version of some probability distribution $\pp \in E(\mathbb{R}^{d})$ which is the compressed object.}

The proposed algorithms CL-OMP(R) are suitable for any sketching operator $\skop$ and any mixture model of parametric densities $\pp_\mtheta$, {\em as long as the optimization schemes in Steps 1 and 5 can be performed}. 
In the case of a continuous dictionary the optimization steps can be performed with simple gradient descents implicitly represented by calls to the sub-routines $\texttt{maximize}_\mtheta$ and $\texttt{minimize}_{\hyppar,\malpha}$, provided $\skop \pp_\mtheta$ and its gradient with respect to $\mtheta$ have a closed-form expression. 

In many important applications such as medical imaging (MRI and tomography), astrophysics or geophysical exploration, one wishes to reconstruct a signal from incomplete samples of its discrete Fourier transform. Random Fourier sampling was therefore one of the first problems to give rise to the classical notions of compressive sensing \cite{Candes2005,Candes2006,Candes2006a}. Indeed, a random uniform selection of rows of the full Fourier matrix, \ie~a random selection of frequencies, forms a partial Fourier matrix that satisfies a certain Restricted Isometry Property (RIP) with overwhelming probability \cite{Candes2006a}, and is therefore appropriate for the encoding and recovery of sparse signals. For more details, we refer the reader to \cite{Foucart2013,Candes2006a,Candes2005,Candes2006,Baraniuk2007} and references therein. Inspired by this methodology, we form the sketch by sampling the characteristic function (\ie~the Fourier transform) of the probability distribution $\pp$.

The characteristic function $\chrc_\meas$ of a finite measure $\meas \in E(\mathbb{R}^d)$ is defined as:
\begin{equation}\label{eq:general_chrc}
\chrc_\nu(\freq)=\int\left(e^{-\imaginaryi\freq^T\bfx}\right)d\meas(\bfx)\quad \forall \freq \in \mathbb{R}^{d}.
\end{equation}
For a sparse distribution $\pp_{\hyppar,\malpha}$ (in some given set of basic distributions $\gaussset \subset \pmeas$), we also denote $\chrc_{\hyppar,\malpha}=\chrc_{\pp_{\hyppar,\malpha}}$ for simplicity.

The characteristic function $\chrc_\pp$ of a probability distribution $\pp$ is a well-known object with many desirable properties. It completely defines any distribution with no ambiguity and often has a closed-form expression (even for distributions which may not have a probability density function with closed-form expression, \eg, for $\alpha$-stable distributions~\cite{Salas-Gonzalez2009}), which makes it a suitable choice to build a sketch used with CL-OMP(R). It has been used as an estimation tool at an early stage \cite{Feuerverger1981} as well as in more recent developments on GeMM \cite{Carrasco2002}.

The proposed sketching operator is defined as a sampling of the characteristic function. Given frequencies $\freqs=\left\lbrace\freq_1,...,\freq_m\right\rbrace$ in $\mathbb{R}^d$, we define generalized moment functions:
\begin{equation}\label{eq:chrc_moment}
M_j(\bfx)=\exp\left(-\imaginaryi \freq_j^T\bfx\right),~j=1\cdots m,
\end{equation}
and the sketching operator \eqref{eq:generalized_moments} is therefore expressed as
\begin{equation}
\label{eq:skop}
\skop \meas=\fracsqm\left[\chrc_\meas(\freq_1),...,\chrc_\meas(\freq_m)\right]^{T}.
\end{equation}

Given a training collection $\dataset=\left\lbrace\bfx_1,...,\bfx_n\right\rbrace$ in $\mathbb{R}^d$, we denote $\hat\chrc(\freq)=\frac{1}{n} \sumn e^{-\imaginaryi\freq^T\bfx_i}$ the empirical characteristic function\footnote{Other more robust estimators can be envisioned such as the empirical median. The empirical average allows more easy streaming or distributed computing.}. The empirical sketch $\hat\bfz=\skop \hat \pp$ is
\begin{equation}\label{eq:empirical_sketch_chf}
\hat\bfz=\fracsqm\left[\hat\chrc(\freq_1),...,\hat\chrc(\freq_m)\right]^{T}.
\end{equation}

 
To fully specify the sketching operator \eqref{eq:skop}, one needs to choose the frequencies $\freq_j$ that define it. In the spirit of Random Fourier Sampling, we propose to define a probability distribution $\freqdist \in \pmeas$ to draw $(\freq_1,...,\freq_m) \overset{i.i.d.}{\sim} \freqdist$. Choosing this distribution is a problem of its own that will be discussed in details in Section~\ref{sec:freqdist}.

%

\paragraph{Connections with Random Neural Networks.} It is possible to draw connections between the proposed sketching operation and neural networks. Denote $\bfW:=\left[\freq_1,...,\freq_m\right] \in \mathbb{R}^{d\times m}$ and $\bfX:=\left[\bfx_1,...,\bfx_n\right]\in\mathbb{R}^{d\times n}$. To derive the sketch, one needs to compute the matrix $\bfU:=\bfW^T\bfX \in \mathbb{R}^{m\times n}$, take the complex exponential of each individual entry $\bfV:=\bar{f}(\bfU)$ where $\bar{f}$ is the pointwise application of the function $f(x)=e^{-\imaginaryi x}/\sqrt{m}$ and finally pool the columns of $\bfV$ to obtain the empirical sketch $\hat\bfz=\left(\sum_{i=1}^n\bfv_i\right)/n$. This procedure indeed shares similarities with a $1$-layer neural network: the output $\bfy\in\mathbb{R}^m$ of such a network can be expressed as $\bfy=f(\bfW^T\bfx)$, where $\bfx\in\mathbb{R}^d$ is the input signal, $f$ is a pointwise non-linear function and $\bfW \in \mathbb{R}^{d\times m} $ is some weighting matrix. Therefore, in the proposed framework, such a $1$-layer network is applied to many inputs $\bfx_i$, then the empirical average of the outputs $\bfz_i$ is taken to obtain a representation of the underlying distribution of the $\bfx_i$.

Neural networks with weights $\bfW$ chosen at random rather than learned on training data -- as is done with the frequencies $\freq_j$-- have been studied in the so-called \emph{Random Kitchen Sinks} \cite{Rahimi2009} or in the context of Deep Neural Networks (DNN) with Gaussian weights \cite{Giryes2015}. In the latter, they have been shown to perform a stable embedding of the input $\bfx$ when it lives on a low-dimensional manifold. Similar to \cite{Giryes2015}, we show in Section \ref{sec:theory} that with high probability the sketching procedure is a stable embedding {\em of the probability distribution of $\bfx$} when this distribution belongs to, \eg, a compact manifold.

\subsection{Complexity of sketching}\label{sec:complexity}

The main computational load of the sketching operation is the computation of the matrix $\bfU=\bfW^T\bfX$, which theoretically scales in $\order(dmn)$. Large multiplications by random matrices are indeed a well-known computational bottleneck in Compressive Sensing, and some methods circumvent this issue by using approximated fast transforms \cite{Do2011,Lemagoarou2015}. Closer to our work (see Section \ref{sec:kernel}), fast transforms have also been used in the context of Random Fourier Features and kernel methods \cite{Le2013,Yang2015}. 
We leave the study of a possible adaptation of these acceleration methods for future work, and focus on simple practical remarks about the computation of the sketch.
\paragraph{GPU computing.} Matrix multiplication is one of the most studied problem in the context of large-scale computing. A classical way to drastically reduce its cost consists in using GPU computing \cite{Volkov2008}. Recent architectures can even leverage giant matrix multiplication using multiple GPUs in parallel \cite{Zhang2015}.
\paragraph{Distributed/online computing.} The computation of the sketch can also be performed in a distributed manner. One can divide the database $\dataset$ in $T$ subsets $\dataset_t$ containing $n_t$ items respectively, after which individual computing units can compute the sketches $\hat\bfz_t$ of each subset $\dataset_t$ in parallel, using the same frequencies. Those sketches are then easily merged\footnote{A similar strategy can also be used on a single machine when the matrix $\bfU$ is too large to be stored} as $\hat\bfz=\sum_{t=1}^T n_t\hat\bfz_t/\sum_{t=1}^T n_t$. The cost of computing the sketch is thus divided by the number of units $T$. Similarly, this simple observation allows the sketch to be computed in an online fashion.

\section{Sketching Gaussian Mixture Models}\label{sec:freq}





In this section, we instantiate the proposed framework in the context of Gaussian Mixture Models (GMMs). A more scalable algorithm specific to GMMs is first introduced as a possible alternative to CL-OMP(R). We then focus on the design of the sketching operator $\skop$, \ie~on the design of the frequency distribution $\freqdist$ (see Section \ref{sec:sampling}).

\subsection{Gaussian Mixture Models}
In the GMM framework, the basic distributions $\pp_\mtheta \in \gaussset$ are Gaussian distributions with density functions denoted $\dens_\mtheta$:
\begin{equation}\label{eq:gaussdensity}
\dens_\mtheta(\bfx)=\mathcal{N}(\bfx;\mmu,\mSigma)=\frac{1}{(2\pi)^{d/2}|\mSigma|^{1/2}}\exp\left(-\frac{1}{2}(\bfx-\mmu)^T\mSigma^{-1}(\bfx-\mmu)\right),
\end{equation}
where $\mtheta=(\mmu,\mSigma)$ represents the parameters of the Gaussian with mean $\mmu \in \mathbb{R}^d$ and covariance $\mSigma \in \mathbb{R}^{d \times d}$. A Gaussian is said to be isotropic, or spherical, if its covariance matrix is proportional to the identity: $\mSigma=\sigma^2\bfI$. 

Densities of GMMs are denoted $\dens_{\hyppar,\malpha}=\sumk \alpha_k \dens_{\mtheta_k}$.
A $K$-GMM is then naturally parametrized by weight vector $\malpha \in \mathbb{R}^K$ and parameters $\mtheta_k=(\mmu_k,\mSigma_k),~k=1\cdots K$. 

Compressive density estimation with mixtures of isotropic Gaussians with fixed, known variance was considered in \cite{Bourrier2013}. In this work, we  consider mixtures of Gaussians with diagonal covariances, which is known to be sufficient for many applications \cite{Reynolds2000,Zhou2008} and is the default framework in well-known toolboxes such as VLFeat \cite{Vedaldi2010}. We denote $(\sigma_{k,1}^2,...,\sigma_{k,d}^2)=\mathrm{diag}\left(\mSigma_k\right)$. Depending on the context, we equivalently denote $\mtheta_k=[\mmu_k;\msigma_k] \in \mathbb{R}^{2d}$ where $\msigma_k=[\sigma^2_{k,\ell}]_{\ell=1\cdots d}$ is the diagonal of the covariance of the $k$-th Gaussian.

The characteristic function of a Gaussian $\pp_\mtheta$ has a closed-form expression:
\begin{equation}\label{eq:chrc}
\chrc_\mtheta(\freq)=\exp\left( -\imaginaryi\freq^T\mmu\right)\exp\left(-\frac{1}{2}\freq^T\mSigma\freq\right),
\end{equation}
from which we can easily derive the expression of the gradients necessary to perform the optimization schemes  $\texttt{maximize}_\mtheta$, $\texttt{minimize}_{\hyppar,\malpha}$ in CL-OMP(R), with the sketching operator introduced in Section \ref{sec:sampling}.

\subsection{A faster algorithm specific to GMM estimation}\label{sec:BS}



To handle mixtures of many Gaussians (large $K$), the fast hierarchical EM used in \cite{Sadjadi2013} alternates between binary splits of each Gaussian $k$ along its first principal component (in the case of Gaussians with diagonal covariance, this is the dimension $\ell$ with the highest variance $\sigma^2_{k,\ell}$) and a few EM iterations. 

Our compressive adaptation 
is summarized in Algorithm~\ref{algo:BS-CGMM}. The binary split is performed by calling the function \texttt{Split}, and the EM steps are replaced by \textbf{Step 5} of Algorithm \ref{algo:CL-OMP} to adjust all Gaussians. In the case where the targeted sparsity level $K$ is not a power of $2$, we split the GMM until the support reaches a size $2^{\left\lceil \log_2 K \right\rceil} > K$, then reduce it with a Hard Thresholding (Step 3 of Algorithm~\ref{algo:CL-OMP}), similar to CL-OMPR.

Since the number of iterations in Algorithm~\ref{algo:BS-CGMM} 
is $T=\left\lceil \log_2 K \right\rceil$, the computational cost of this algorithm scales in $\order(mdK\log K)$, which is much faster for large $K$ than the quadratic cost of CL-OMPR.

In practice, Algorithm~\ref{algo:BS-CGMM} 
is seen to sometimes yield worse results than CL-OMPR (see Section \ref{sec:results_N}), but be well-adapted to other tasks such as, \eg, GMM estimation for large-scale speaker verification (see Section \ref{sec:speaker_verification}).


\begin{function}
\KwData{Support $\hyppar=\{\mtheta_1,...,\mtheta_K\}$ where $\mtheta_{k} = [\mmu_{k};\msigma_{k}]$}
\KwResult{New support $\hyppar^{new}$ of size $|\hyppar^{new}|=2K$}
$\hyppar^{new} \leftarrow \emptyset$ \;
\For{$k \leftarrow 1$ \KwTo $K$}{$\ell \leftarrow \arg\max_{j\in[1,d]} \sigma_{k,j}^2$ \;
$\hyppar^{new} \leftarrow \hyppar^{new} \cup \left\lbrace \left( \mmu_k - \sigma_{k,\ell}\bfe_\ell,\mSigma_k\right), \left( \mmu_k + \sigma_{k,\ell}\bfe_\ell,\mSigma_k\right)\right\rbrace$ \;
}
\caption{Split($\hyppar$): split each Gaussian in the support along its dimension of highest variance\label{tab:split}}
\end{function}

\begin{algorithm2e}
\SetKwBlock{uBegin}{Step}{end}
\KwData{Sketch $\hat{\bfz}$, sketching operator $\skop$, sparsity $K$}
\KwResult{Support $\hyppar$, weights $\malpha$}
$\hyppar \leftarrow \emptyset$ \;
\Begin(Initialize with \emph{one} atom highly correlated with the sketch){
$\hyppar \leftarrow \left\lbrace\texttt{maximize}_\mtheta \left( \mathrm{Re}\ipeucl{\frac{\skop \pp_\mtheta}{\neucl{\skop \pp_\mtheta}}}{\hat\bfz}, \texttt{init}=\texttt{rand}\right)\right\rbrace$ \;
}
\For{$t \leftarrow 1$ \KwTo $\left\lceil\log_2 K \right\rceil$}{
\Begin(Split each Gaussian in the support along its dimension of highest variance){$\hyppar \leftarrow \texttt{Split}(\hyppar)$\;
}
Perform {\bf Step 3}, {\bf Step 4} and {\bf Step 5} of Algorithm \ref{algo:CL-OMP}\;
}
Normalize $\malpha$ such that $\sumk \alpha_k=1$
\caption{An algorithm with complexity $O(K \log K)$ for compressive GMM estimation}
 \label{algo:BS-CGMM}
\end{algorithm2e}


\subsection{Designing the frequency sampling pattern}\label{sec:freqdist}
A key ingredient in designing the sketching operator $\skop$ is the choice of 
a probability distribution $\freqdist$ to draw the frequency sampling pattern $\{\freq_{1},\ldots,\freq_{m}\}$ as defined in Section~\ref{sec:sampling}.  
We show in Section~\ref{sec:kernel} that this corresponds to the design of a translation-invariant kernel in the data domain. \review{Interestingly, working in the Fourier domain seems to make the heuristic design strategy more direct. Literature on designing kernels in this context usually focus on maximizing the distance between the sketch of two distributions \cite{Sriperumbudur2009,Oliva2015}, which cannot be readily applied in our context since we sketch only a single distribution. However, as we will see, the proposed approach follows the general idea of maximizing the capacity of the sketch to distinguish this distribution from others, which amounts to maximizing the variations of the sketch with respect to the parameters of the GMM at the selected frequencies.}

\subsubsection{Oracle frequency sampling pattern for a single known Gaussian}
We start by designing a heuristic for choosing frequencies adapted to the estimation of a single Gaussian $\pp_\mtheta$, assuming the parameters $\mtheta=(\mmu,\mSigma)$ are available -- which is obviously not the case in practice. We will deal in due time with mixtures, and with unknown parameters.

\paragraph{Gaussian frequency distribution.}

Recall the expression \eqref{eq:chrc} of the characteristic function $\chrc_\mtheta(\freq)=e^{ -\imaginaryi\freq^T\mmu-\frac{1}{2}\freq^T\mSigma\freq}$ of the Gaussian $\pp_\mtheta$. It is an oscillating function with Gaussian amplitude of inverted variance with respect to the original Gaussian. Given that $|\chrc_\mtheta| \propto \mathcal{N}\left(0,\mSigma^{-1}\right)$,
choosing a Gaussian frequency distribution $\freqdist^{(G)}_\mSigma=\mathcal{N}\left(0,\mSigma^{-1}\right)$ is a possible intuitive choice \cite{Bourrier2013,Bourrier2015} to sample the characteristic function $\chrc_\mtheta$. It concentrates frequencies in the regions where the sampled characteristic function has high amplitude.

However, points drawn from a high-dimensional Gaussian concentrate on an ellipsoid which moves away from the origin as the dimension $d$ increases. Such a Gaussian sampling therefore ``undersamples'' low or even middle frequencies. This phenomenon has long been one of the reasons for using dimensionality reduction for GMM estimation \cite{Dasgupta1999}.
Hence, in high dimension the amplitude 
of the characteristic function becomes negligible (with high probability) at all selected frequencies.


\paragraph{Folded Gaussian radial frequency distribution.}

In light of the problem observed with the Gaussian frequency distribution, we propose to draw frequencies from a distribution  allowing for an accurate control of the quantity $\freq^T\mSigma\freq$, and thus of the amplitude $e^{-\frac{1}{2}\freq^T\mSigma\freq}$ of the characteristic function. This is achieved by drawing
\begin{equation}
\label{eq:decomp}
\freq=R\Sigma^{-\frac{1}{2}}\mrho,
\end{equation}
where $\mrho \in \mathbb{R}^d$ is uniformly distributed on the $\ell_2$ unit sphere $\mathcal{S}_{d-1}$, and $R \in \mathbb{R}_+$ is a radius chosen independently from $\mrho$ with a distribution $p_R$ we will now specify.

With the decomposition \eqref{eq:decomp}, the characteristic function $\chrc_{\mtheta}$ is now expressed as
\begin{equation*}
\chrc_{\mtheta}\left(R\Sigma^{-\frac{1}{2}}\mrho\right)=\exp\left(-\imaginaryi R\mrho^T\Sigma^{-\frac{1}{2}}\mmu\right)\exp\left(-\tfrac{1}{2}R^2\right) =\chrc_{\theta}(R),
\end{equation*}
where $\chrc_{\theta}$ is the characteristic function of a \emph{one-dimensional} Gaussian with mean $\mu=\mrho^T\Sigma^{-\frac{1}{2}}\mmu$ and variance $\sigma^2=1$. We thus consider the estimation of a one-dimensional Gaussian $\pp_{\theta_0}=\mathcal{N}(\mu_0,\sigma_0^2)$, with $\sigma_0=1$, as our baseline to design a radius distribution $p_R$.

In this setting, we no longer suffer from unwanted concentration phenomena and can resort to the intuitive Gaussian radius distribution to sample $\chrc_{\theta_0}$. It corresponds to a radius density function $p_R= \mathcal{N}^+(0,\frac{1}{\sigma_0^2})=\mathcal{N}^+(0,1)$ (\ie~Gaussian with absolute value, referred to as \emph{folded} Gaussian). Using this radius distribution with the decomposition \eqref{eq:decomp} yields a frequency distribution $\freqdist^{(FGr)}_\mSigma$ referred to as \emph{Folded Gaussian radius} frequency distribution. Note that, similar to the Gaussian frequency distribution, the Folded Gaussian radius distribution only depends on the (oracle) covariance $\mSigma$ of the sketched distribution $\pp_\mtheta$.

\paragraph{Adapted radius distribution}
Though we will see it yields decent results in Section~\ref{sec:results}, the Folded Gaussian radius frequency distribution somehow produces too many frequencies with a low radius $R$. These carry a limited quantity of information about the original distribution, since all characteristic functions equal $1$ at the origin\footnote{In a way, numerous measures of the characteristic function near the origin essentially measure its derivatives at various orders, which are associated to classical polynomial moments.}.
We now present a heuristics that may avoid this ``waste'' of frequencies.

Intuitively, the chosen frequencies should properly discriminate Gaussians with different parameters, at least in the neighborhood of the true parameter $\theta_0=(\mu_{0},\sigma_{0})=(\mu_0,1)$. 
This corresponds to promoting frequencies $\omega$ leading to a large difference $|\chrc_\theta(\omega)-\chrc_{\theta_0}(\omega)|$
for parameters $\theta$ close to $\theta_0$. A way to achieve this is to promote frequencies where the norm of the gradient $\neucl{\nabla_\theta \chrc_{\theta}(\omega)}$ is large at $\theta = \theta_0$. 

%


Recall that for a one-dimensional Gaussian $\chrc_\theta(\omega)=e^{-\imaginaryi\mu\omega}e^{-\frac{1}{2}\sigma^2\omega^2}$. The norm of the gradient is expressed as:
\begin{align}
\neucl{\nabla_{\theta} \chrc_{\theta}(\omega)}^2&=\left\lvert\nabla_\mu \chrc_{\theta}(\omega)\right\rvert^2 + \left\lvert\nabla_{\sigma^2} \chrc_{\theta}(\omega)\right\rvert^2 
=\left\lvert-\imaginaryi\omega\chrc_{\theta}(\omega)\right\rvert^2 + \left\lvert-\frac{1}{2}\omega^2\chrc_{\theta}(\omega)\right\rvert^2 
=\left(R^2+\frac{R^4}{4}\right)e^{-\sigma^2 R^2} \notag
\end{align}
and therefore
$\neucl{\nabla_{\theta} \chrc_{\theta_0}(\omega)}=\left(R^2+\tfrac{R^4}{4}\right)^{\frac{1}{2}}e^{-\frac{1}{2}R^2}$
since $\sigma^2_0=1$. This expression still has a Gaussian decrease (up to polynomial factors), and indeed avoids very low frequencies. 
It can be normalized to a density function:
\begin{equation}
\label{eq:adapted_density}
p_R=C\left(R^2+\tfrac{R^4}{4}\right)^{\frac{1}{2}}e^{-\frac{1}{2}R^2},
\end{equation}
with $C$ some normalization constant. Using this radius distribution with the decomposition \eqref{eq:decomp} yields a distribution $\freqdist^{(Ar)}_\mSigma$ referred to as \emph{Adapted radius} frequency distribution. Once again, this distribution only depends on the covariance $\mSigma$.


\subsubsection{Oracle frequency sampling pattern for a known mixture of Gaussians}\label{sec:gmmfreqdist}

Any frequency distribution $\freqdist^{(.)}_\mSigma$ selected for sampling the characteristic function of a single known Gaussian $\pp_\mtheta$ can be immediately and naturally extended to a frequency distribution $\freqdist^{(.)}_{\hyppar,\malpha}$ to sample the characteristic function of a known GMM $\pp_{\hyppar,\malpha}$, by mixing the frequency distributions corresponding to each Gaussian:
\begin{equation}
\freqdist^{(.)}_{\hyppar,\malpha}=\sumk \alpha_k \freqdist^{(.)}_{\mSigma_k}.
\end{equation}
Each component $\freqdist^{(.)}_{\mSigma_k}$ has the same weight than its corresponding Gaussian $\pp_{\mtheta_k}$. Indeed, a Gaussian with a high weight must be precisely estimated, as its influence on the overall reconstruction error (\eg~in terms of Kullback-Leibler divergence) is more important than the components with low weights. Thus more frequencies adapted to this Gaussian are selected.

The draw of frequencies with an oracle distribution $\freqdist^{(.)}_{\hyppar,\malpha}$ is summarized in Function \texttt{DrawFreq}.

%
%



\subsubsection{Choosing the frequency sampling pattern in practice}\label{sec:practice_freqdist}

In practice the parameters $(\hyppar,\malpha)$ of the GMM to be estimated are obviously unknown beforehand, so the oracle frequency distributions $\freqdist^{(.)}_{\hyppar,\malpha}$ introduced in the previous section cannot be computed. We propose a simple method to obtain an approximate distribution that yields good results in practice. The reader must also keep in mind that it is very easy to integrate some prior knowledge in this design, especially since the proposed frequency distributions only take into account the variances of the GMM components, not their means.

The idea is to estimate the average variance $\bar\sigma^2=\frac{1}{Kd}\sum_{k=1}^K \sum_{\ell=1}^d \sigma^2_{k,\ell}$ of the components in the GMM -- note that this parameter may be significantly different from the global variance of the data, for instance in the case of well-separated components with small variances. This estimation is performed using a light sketch $\bfz_0$ with $m_0$ frequencies, computed on a small subset of $n_0$ items from the database, then a frequency distribution corresponding to a single isotropic Gaussian $\freqdist^{(.)}_{\bar\sigma^2\bfI}$ is selected.

\begin{figure}[h]
\centering
\includegraphics[height=\hght]{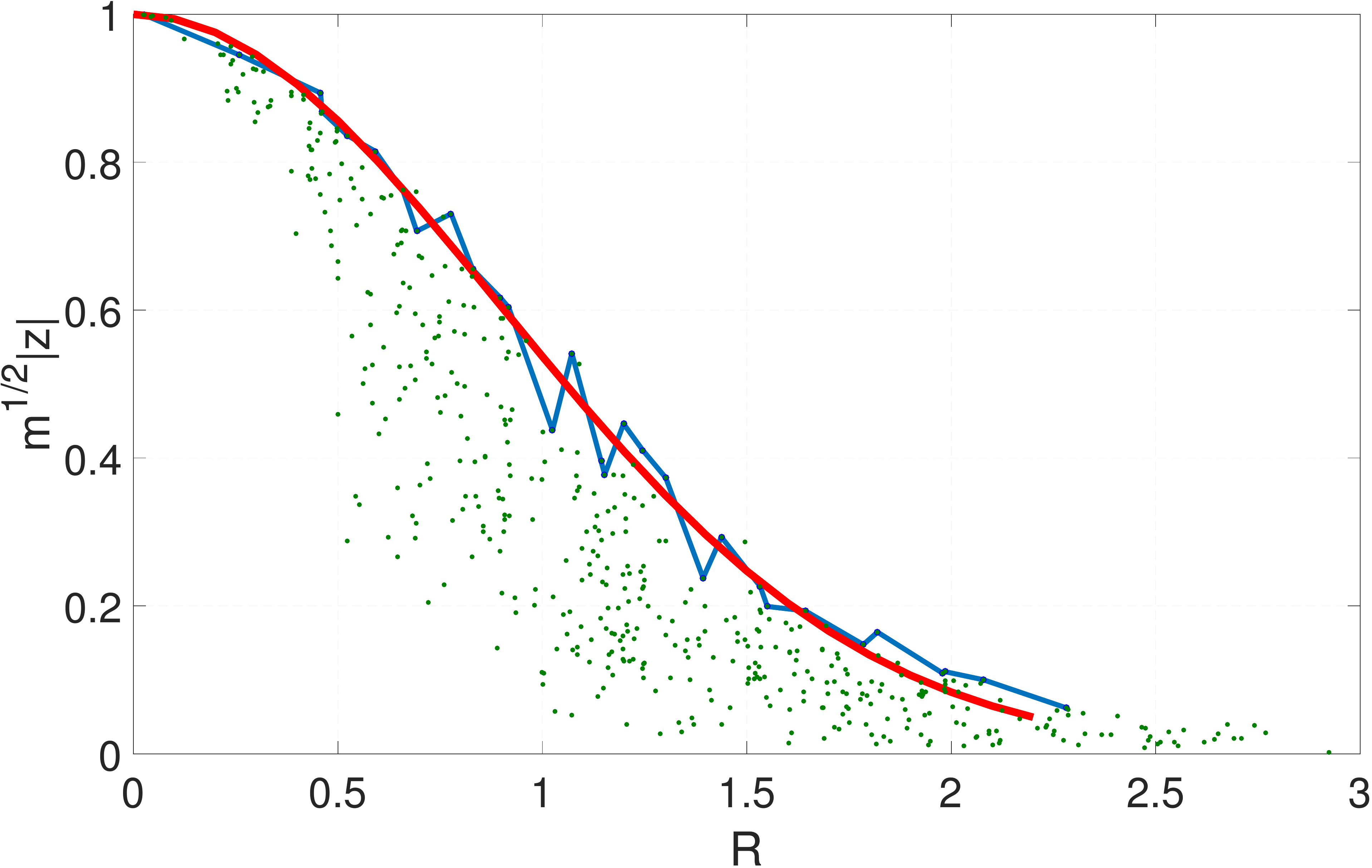}
\caption{Estimation of $\bar\sigma^2$ (Function \texttt{EstimMeanSigma}), for $d=10$, $K=5$, $m_0=500$ and $n_0=5000$. Green: modulus of the sketch with respect to the norm of frequencies  (ordered by increasing radius). Blue: visualization of the peaks in each block of $20$ consecutive values. Red: fitted curve $e^{-\frac{1}{2}R^2\bar\sigma^2}$ for the estimated $\bar\sigma^2$.}
\label{fig:fitsig}
\end{figure}

Indeed, if the variances $\sigma_{k,\ell}^2$'s are not too different from each other, the amplitude of the empirical characteristic function $|\hat\chrc(\freq)|$ \emph{approximately} follows $e^{-\frac{1}{2}\neucl{\freq}^2\bar\sigma^2}$ with high oscillations, allowing for a very simple amplitude estimation process: assuming the $m_0$ frequencies used to compute the sketch $\bfz_0$ are ordered by increasing Euclidean radius, the sketch $\bfz_0$ is divided into consecutive blocks, maximal peaks of its modulus are identified within each block forming a curve that approximately follows $e^{-\frac{1}{2}R^2\bar\sigma^2}$, then a simple regression is used to estimate $\bar\sigma^2$. This process is illustrated in Figure~\ref{fig:fitsig}.

To cope with the fact that the "range" of frequencies that must be considered to compute $\bfz_0$ is also not known beforehand, we initialize $\bar\sigma^2=1$ and reiterate this procedure several times, each time drawing $m_0$ frequencies adapted to the current estimation of $\bar\sigma^2$, \ie~with some choice of frequency distribution $\freqdist^{(.)}_{\bar\sigma^2\bfI}$, and update $\bar\sigma^2$ at each iteration. In practice, the procedure quickly converges in three iterations.

The entire process is summarized in detail in Function \texttt{EstimMeanSigma} and Algorithm \ref{algo:drawfreqpractice}.

\begin{function}
\SetKwBlock{uBegin}{Step}{end}
\KwData{Set of variances and weights of a GMM $\left\lbrace \mSigma_k\right\rbrace_{k=1,...,K}$, $\malpha$, number of frequencies $m$, type of frequency distribution $f\in\{(G),(FGr),(Ar)\}$}
\KwResult{Set of frequencies $\freqs=\{\freq_1,...,\freq_m\}$}
\For{$j\leftarrow 1$ \KwTo $m$}{
Draw a label according to the weights of the GMM $k_j \sim \sum_{k=1}^K \alpha_k \delta_k$\;
\If{$f=(G)$}{
$\freq_j \sim \mathcal{N}\left(0,\mSigma_{k_j}^{-1}\right)$ \tcp*{Gaussian}
}\Else{
Draw a direction $\mrho \sim \mathcal{U}\left(\mathcal{S}_{d-1}\right)$\;
\If{$f=(FGr)$}{
$R\sim \mathcal{N}^+(0,1)$ \tcp*{Folded Gaussian radius}
}\ElseIf{$f=(Ar)$}{
$R\sim p_R$ with $p_R$ defined by \eqref{eq:adapted_density} \tcp*{Adapted radius}
}
$\freq_j \leftarrow R \mSigma_{k_j}^{-\frac12}\mrho$\;
}
}
\caption{DrawFreq($\left\lbrace \mSigma_k\right\rbrace_{k=1,...,K},\malpha,m,f$): drawing frequencies for a GMM with known variances and weights, choosing one of the three distributions described in Section \ref{sec:freqdist}}
 \label{func:drawfreq}
\end{function}

\begin{function}
\SetKwBlock{uBegin}{Step}{end}
\KwData{Dataset $\dataset=\{\bfx_1,...,\bfx_n\}$, small number of items $n_0\leq n$, small number of frequencies $m_0$, number of blocks $c \in \mathbb{N}^*_+$, number of iterations $T$}
\KwResult{Estimated mean variance $\bar\sigma^2$}
\Begin(Initialize){
$\bar\sigma^2 \leftarrow 1$\;
}
\For{$t\leftarrow 1$ \KwTo $T$}{
\Begin(Draw some frequencies adapted to the current $\bar\sigma^2$){
$\{\freq_1,...,\freq_{m_0}\} \leftarrow \texttt{DrawFreq}(\bar\sigma^2\bfI,1,m_0,(Ar))$\;
Sort the frequencies $\{\freq_1,...,\freq_{m_0}\}$ by increasing radius $\|\freq_j\|_2$\;
}
\Begin(Compute small empirical sketch, without $\frac{1}{\sqrt{m_0}}$ normalization $\text{(}$Figure~\ref{fig:fitsig}, left$\text{)}$){$\hat\bfz_0\leftarrow \left[\frac{1}{n_0}\sum_{i=1}^{n_0} e^{-\imaginaryi\freq_{j}^T\bfx_i}\right]_{j=1...m_0}$ \;
}
\Begin(Divide sketch into blocks, find maximum peak in each block $\text{(}$Figure~\ref{fig:fitsig}, center$\text{)}$){$s \leftarrow \lfloor m_0/c \rfloor$\;
\For{$q \leftarrow 1$ \KwTo $c$}{
$j_q=\arg\max_{j \in [(q-1)s+1;qs]} |\hat{z}_{0,j}|$\;
}
}
\Begin(Update $\bar\sigma^2$ $\text{(}$Figure~\ref{fig:fitsig}, right$\text{)}$){$\hat\bfe \leftarrow \left[\hat{z}_{0,j_q}\right]_{q=1...c}$\;
$\bar\sigma^2=\arg\min_{\sigma^2>0} \left\lVert \hat\bfe - \left[e^{-\frac{1}{2}R_{j_q}^2\sigma^2}\right]_{q=1...c}\right\rVert_2$\;
}
}
\caption{EstimMeanSigma($\dataset,n_0,m_0,c,T$): Estimation of the mean variance $\bar\sigma^2$}
 \label{algo:learn_sigma}
\end{function}

\begin{algorithm2e*}
\SetKwBlock{uBegin}{Step}{end}
\KwData{Dataset $\dataset=\{\bfx_1,...,\bfx_n\}$, number of frequencies $m$, type of frequency distribution $f\in\{(G),(FGr),(Ar)\}$, parameters for the estimation of the mean variance $(n_0,m_0,c,T)$}
\KwResult{Set of frequencies $\freqs=\{\freq_1,...,\freq_m\}$}
\Begin(Estimate the mean variance){
$\bar\sigma^2 \leftarrow \EstimMeanSigma(\dataset,n_0,m_0,c,T)$\;
}
\Begin(Draw frequencies with the distribution $\freqdist^{(.)}_{\bar\sigma^2\bfI}$){
$\freqs \leftarrow \DrawFreq(\bar\sigma^2\bfI,1,m,f)$\;
}
\caption{Draw the frequencies in practice.}
 \label{algo:drawfreqpractice}
\end{algorithm2e*}


\subsection{Summary}

At this point, all procedures necessary for compressive GMM estimation have been defined. Given a database $\dataset=\{\bfx_1,...,\bfx_n\}$, a number of measurements $m$ and a number of components $K$, the entire process is as follow:
\begin{itemize}
\item In the absence of prior knowledge, draw $m$ frequencies using Algorithm \ref{algo:drawfreqpractice} on (a fraction of) the dataset. The proposed Adapted radius frequency distribution is recommended, other parameters have default values (see Section \ref{sec:basic_setup}), the effect of modifying them has been found to be negligible.
\item Compute the empirical sketch $\hat\bfz=\frac{1}{\sqrt{m}}\left[\frac{1}{n}\sum_{i=1}^n e^{-\imaginaryi \freq_j^T\bfx_i}\right]_{j=1...m}$. One may use GPU and/or distributed/online computing.
\item Estimate a $K$-GMM using CL-OMP(R) 
or the more scalable but less precise Algorithm \ref{algo:BS-CGMM}.
\end{itemize}

\paragraph{Connections with Distilled sensing.} The reader may note that designing a measurement operator \emph{adapted} to some particular data does not fit the classical paradigm of compressive sensing.

The two-stage approaches used to choose the frequency distribution presented above can be related to a line of work referred to as adaptive (or \emph{distilled}) sensing \cite{Haupt2010}, in which a portion of the computational budget is used to crudely design the measurement operator while the rest is used to actually measure the signal. Most often these methods are extended to multi-stage approaches, where the measurement operator is refined at each iteration, and have been used in machine learning \cite{Cohn1996} or signal processing \cite{Bashan2008}. Allocating the resources and choosing between \emph{exploration} (designing the measurement operator) and \emph{precision} (actually measuring the signal) is a classic trade-off in areas such as reinforcement learning or game theory.



\section{Kernel design and sketching}
\label{sec:kernel}

It turns out that sketching operators as \eqref{eq:skop} are intimately related to RKHS embedding of probability distributions \cite{Smola2007,Sriperumbudur2010} and RFFs \cite{Rahimi2007}.
The proposed, carefully-designed choice of frequency distribution appears as an innovative method to design a reproducing kernel, which is faster and provides better results than traditional choices in the kernel literature \cite{Sutherland2015}, as experimentally shown in Section \ref{sec:results_freqdist}. Additionally, approximate RKHS embedding of distributions turns out to be an appropriate framework to derive the information preservation guarantees of Section \ref{sec:theory}.

\subsection{Reproducing Kernel Hilbert Spaces (RKHS)}

We refer the reader to Appendix \ref{sec:def} for definitions related to positive definite (p.d.) kernels and measures.

Let $\kernel: \Xspace \times \Xspace \rightarrow \mathbb{C}$ be a p.d. kernel. By Moore-Aronszajn Theorem \cite{Aronzajn1950}, to this kernel is associated a unique Hilbert space $\rkhs \subset \mathbb{C}^{\Xspace}$ that satisfies the following properties: for any $\bfx \in \Xspace$ the function $\kernel(\bfx,.)$ belongs to $\rkhs$, and the kernel satisfies the reproducing property $\forall f \in \rkhs, \forall \bfx \in \Xspace, ~ \ipH{\kernel(\bfx,.)}{f}=f(\bfx)$. The space $\rkhs$ is referred to as the Reproducing Kernel Hilbert Space (RKHS) associated with the kernel $\kernel$. We denote $\left\langle.,.\right\rangle_\rkhs$ the scalar product of the RKHS $\rkhs$. We refer the reader to \cite{Hofmann2006} and references therein for a review of RKHS and kernel methods. 

We focus here on the space $\Xspace=\mathbb{R}^d$ and on \emph{translation-invariant} kernels of the form
\begin{equation}\label{eq:TIk}
\kernel(\bfx,\bfy)=\TIk(\bfx-\bfy),
\end{equation}
where $\TIk:\mathbb{R}^d\rightarrow\mathbb{C}$ is a positive definite function. Translation-invariant positive definite kernels are characterized by the Bochner theorem:
\begin{theorem}[Bochner \cite{Rudin1962}, Thm. 1.4.3]\label{thm:Bochner}
A continuous function $\TIk:\mathbb{R}^d \rightarrow \mathbb{C}$ is positive definite if and only if it is the Fourier transform of a finite (\ie, $\freqdist(\mathbb{R}^{d}) = \int_{\mathbb{R}^{d}} d\freqdist(\freq) < \infty$) nonnegative measure $\freqdist$ on $\mathbb{R}^d$, that is:
\begin{equation}\label{eq:freqdist}
\TIk(\bfx)=\int_{\mathbb{R}^n} e^{-\imaginaryi\freq^T\bfx} d\freqdist(\freq).
\end{equation}
\end{theorem}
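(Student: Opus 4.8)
The plan is to prove the two implications separately; the forward direction (Fourier transform of a finite nonnegative measure $\Rightarrow$ positive definite) is elementary, while the converse carries the real content. For the easy direction I would substitute the representation $\TIk(\bfx)=\int e^{-\imaginaryi\freq^T\bfx}\,d\freqdist(\freq)$ into the positive-definiteness quadratic form: given points $\bfx_1,\dots,\bfx_N$ and coefficients $c_1,\dots,c_N\in\mathbb{C}$, swapping the finite sum with the integral (legitimate since $\freqdist$ is finite and the exponential is bounded) turns $\sum_{j,k} c_j\bar c_k\,\TIk(\bfx_j-\bfx_k)$ into $\int\bigl|\sum_j c_j e^{-\imaginaryi\freq^T\bfx_j}\bigr|^2 d\freqdist(\freq)\geq 0$, so $\TIk$ is positive definite.

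For the converse, assume $\TIk$ is continuous and positive definite. I would first read off from the $2\times2$ principal minors of the defining quadratic form that $\TIk(0)\geq 0$, that $\TIk(-\bfx)=\overline{\TIk(\bfx)}$, and that $|\TIk(\bfx)|\leq \TIk(0)$, so $\TIk$ is bounded. The central device is a Gaussian regularization: for $\sigma>0$ set
\[
g_\sigma(\freq)=\frac{1}{(2\pi)^d}\int_{\mathbb{R}^d}\TIk(\bfx)\,e^{\imaginaryi\freq^T\bfx}\,e^{-\frac{\sigma^2}{2}\neucl{\bfx}^2}\,d\bfx,
\]
which is well defined because $\TIk$ is bounded and the Gaussian is integrable. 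Here $g_\sigma$ plays the role of a smoothed inverse Fourier transform of $\TIk$.

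I would then establish two facts about $g_\sigma$. First, $g_\sigma\geq 0$: this follows by upgrading pointwise positive definiteness to its integral form, approximating $\iint \TIk(\bfx-\bfy)f(\bfx)\overline{f(\bfy)}\,d\bfx\,d\bfy$ by Riemann sums (valid by continuity and boundedness of $\TIk$) to see it is $\geq 0$ for every integrable $f$, and then choosing $f(\bfx)=e^{-\imaginaryi\freq^T\bfx}e^{-\frac{\sigma^2}{2}\neucl{\bfx}^2}$, whose Gaussian integration identifies the double integral, after completing the square, with a positive multiple of $g_\sigma(\freq)$. Second, a Fourier-inversion computation gives that the (paper-convention) Fourier transform of the density $g_\sigma$ equals $\TIk(\bfx)e^{-\frac{\sigma^2}{2}\neucl{\bfx}^2}$; evaluating at $\bfx=0$ yields $\int g_\sigma(\freq)\,d\freq=\TIk(0)$, so the nonnegative measures $d\freqdist_\sigma:=g_\sigma\,d\freq$ all have the same finite total mass $\TIk(0)$, uniformly in $\sigma$.

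Finally I would let $\sigma\to 0$. Since the $\freqdist_\sigma$ have uniformly bounded mass, a weak-$*$ compactness argument (Banach--Alaoglu, or Helly selection) extracts a subsequence converging to a finite nonnegative measure $\freqdist$; passing to the limit in $\int e^{-\imaginaryi\freq^T\bfx}\,d\freqdist_\sigma(\freq)=\TIk(\bfx)e^{-\frac{\sigma^2}{2}\neucl{\bfx}^2}$, with the bounded continuous test function $\freq\mapsto e^{-\imaginaryi\freq^T\bfx}$, recovers $\TIk(\bfx)=\int e^{-\imaginaryi\freq^T\bfx}\,d\freqdist(\freq)$. I expect the main obstacle to be \emph{tightness}: weak-$*$ compactness alone permits mass to escape to infinity, which would corrupt the limiting integral, so one must show the tails $\freqdist_\sigma(\{\neucl{\freq}>R\})$ are uniformly small. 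This I would obtain from the continuity of $\TIk$ at the origin, via the standard estimate bounding the tail mass by an average of $\TIk(0)-\mathrm{Re}\,\TIk(\bfx)$ over a small neighborhood of $\bfx=0$, which vanishes as $R\to\infty$ precisely because $\TIk(\bfx)\to\TIk(0)$.
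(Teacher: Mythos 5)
Your proposal is correct in outline, but be aware that the paper contains no proof of this statement: Bochner's theorem is imported by citation, and the proof in Rudin (Thm.~1.4.3 there) is the abstract harmonic-analysis argument, valid on any locally compact abelian group --- a continuous positive-definite function is identified with a positive linear functional on $L^1(G)$ equipped with its natural involution, and the representing measure on the dual group is then produced via the Gelfand transform and the Riesz representation theorem. Your route is instead the classical ``concrete'' proof on $\mathbb{R}^d$ by Gaussian regularization: it buys elementarity and self-containedness (only Fubini, Gaussian integrals, and a Helly/Prokhorov compactness step), at the price of being tied to $\mathbb{R}^d$ --- which is all this paper uses. Your identification of tightness as the real crux of the limiting step is exactly right: weak-$*$ convergence against $C_0$ alone would not license the non-decaying test functions $\freq \mapsto e^{-\imaginaryi\freq^T\bfx}$, and the standard tail bound by the average of $\TIk(0)-\mathrm{Re}\,\TIk$ near the origin is uniform in $\sigma$ because the extra damping contributes only $\TIk(0)\bigl(1-e^{-\frac{\sigma^2}{2}\neucl{\bfx}^2}\bigr)=O(\sigma^2\neucl{\bfx}^2)$.

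Two details deserve care in the write-up, though neither is a genuine gap. First, Gaussian-width bookkeeping: with $f(\bfx)=e^{-\imaginaryi\freq^T\bfx}e^{-\frac{\sigma^2}{2}\neucl{\bfx}^2}$, completing the square in the double integral yields the damped transform with Gaussian $e^{-\frac{\sigma^2}{4}\neucl{\bfu}^2}$ (times a positive constant), not $e^{-\frac{\sigma^2}{2}\neucl{\bfu}^2}$; you should either rescale $\sigma$ or take the Gaussian in $f$ to be $e^{-\sigma^2\neucl{\bfx}^2}$. Second, there is a mild ordering constraint around the inversion step: to write $\int e^{-\imaginaryi\freq^T\bfx}g_\sigma(\freq)\,d\freq$ as an absolutely convergent integral you need $g_\sigma\in L^1$, which does not follow from boundedness of $\TIk$ alone. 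The standard patch --- consistent with your plan, since you establish $g_\sigma\geq 0$ first --- is to compute $\int g_\sigma(\freq)e^{-\eps\neucl{\freq}^2}\,d\freq$ by Fubini, let $\eps\to 0$ using continuity of $\TIk$ at the relevant point, and invoke monotone convergence (using $g_\sigma\geq 0$) to conclude $\int g_\sigma=\TIk(0)<\infty$ before applying Fourier inversion. With these routine details filled in, the argument is a complete and valid proof of the theorem as stated.
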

This expression implies the normalization $|\kernel(\bfx,\bfy)|\leq |\kernel(\bfx,\bfx)|=\freqdist(\mathbb{R}^{d})$. Hence, without loss of generality, up to a scaling factor, we suppose $\freqdist(\mathbb{R}^{d})=1$. This means in particular that $\freqdist$ is a probability distribution on $\mathbb{R}^d$.

\subsection{Hilbert Space Embedding of probability distributions}
Embeddings of probability distributions \cite{Smola2007,Sriperumbudur2010} are obtained by defining the Mean Map $\mmap:\pmeas \rightarrow \rkhs$:
\begin{equation}
\mmap(\pp)=\mathbb{E}_{\bfx \sim \pp}(\kernel(\bfx,.)).
\end{equation}
Using the Mean Map we can define the following p.d. kernel $\mathcal{K}(\pp,\qq)=\ipH{\mmap(\pp)}{\mmap(\qq)}$ over the set of probability distributions $\pmeas$. Note that this expression is also equivalent to the following definition \cite{Sriperumbudur2010,Muandet2012}: $\mathcal{K}(\pp,\qq)=\mathbb{E}_{\bfx\sim \pp,\bfy\sim \qq}(\kernel(\bfx,\bfy))$.

We denote $\distfun{\pp}{\qq}{\kernel}=\|\mmap(\pp)-\mmap(\qq)\|_\rkhs$ the pseudometric induced by the Mean Map on the set of probability distributions, often referred to as Maximum Mean Discrepancy (MMD) \cite{Sutherland2015}. Fukumizu \etal~introduced in \cite{Fukumizu2008} the concept of \emph{characteristic kernel}, that is, a kernel $\kernel$ for which the map $\mmap$ is injective and the pseudometric $\distfuns{\kernel}$ is a true metric. Translation-invariant characteristic kernels are characterized by the following Theorem \cite{Sriperumbudur2010}.
\begin{theorem}[Sriperumbudur \etal~\cite{Sriperumbudur2010}]\label{thm:characTIk}
Assume that $\kernel(\bfx,\bfy)=\TIk(\bfx-\bfy)$ where $\TIk$ is a positive definite function on $\mathbb{R}^d$. Then $\kernel$ is characteristic if and only if $\supp(\freqdist)=\mathbb{R}^d$, where $\freqdist$ is defined as \eqref{eq:freqdist}.
\end{theorem}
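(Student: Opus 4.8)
The plan is to reduce the whole statement to a single Fourier-domain identity for the squared Maximum Mean Discrepancy, and then read off both implications from it. First I would expand
\[
\distfunsq{\pp}{\qq}{\kernel} = \mathcal{K}(\pp,\pp) - 2\,\mathrm{Re}\,\mathcal{K}(\pp,\qq) + \mathcal{K}(\qq,\qq),
\]
using $\mathcal{K}(\pp,\qq)=\mathbb{E}_{\bfx\sim\pp,\bfy\sim\qq}\,\kernel(\bfx,\bfy)$, and insert the Bochner representation (Theorem~\ref{thm:Bochner}) $\TIk(\bfx-\bfy)=\int e^{-\imaginaryi\freq^T(\bfx-\bfy)}d\freqdist(\freq)$. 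Fubini applies since $\freqdist$ is finite and the integrands are bounded by $1$, and after factoring $e^{-\imaginaryi\freq^T(\bfx-\bfy)}=e^{-\imaginaryi\freq^T\bfx}\,e^{\imaginaryi\freq^T\bfy}$ one recognizes $\chrc_\pp(\freq)$ and $\overline{\chrc_\qq(\freq)}$. Collecting terms yields the target identity
\[
\distfunsq{\pp}{\qq}{\kernel} = \int_{\mathbb{R}^d} \left| \chrc_\pp(\freq) - \chrc_\qq(\freq) \right|^2 d\freqdist(\freq),
\]
which makes the dependence on $\supp(\freqdist)$ transparent and is the engine of both directions.

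For the ``if'' direction, suppose $\supp(\freqdist)=\mathbb{R}^d$ and $\distfun{\pp}{\qq}{\kernel}=0$. The identity forces $|\chrc_\pp-\chrc_\qq|^2=0$ $\freqdist$-almost everywhere. Since $g:=\chrc_\pp-\chrc_\qq$ is continuous, the set $\{g\neq 0\}$ is open; were it nonempty it would be a nonempty open subset of $\mathbb{R}^d=\supp(\freqdist)$ and hence of positive $\freqdist$-measure, contradicting the almost-everywhere vanishing. Therefore $g\equiv 0$, i.e.\ $\chrc_\pp\equiv\chrc_\qq$, and by uniqueness of characteristic functions (injectivity of the Fourier transform on finite measures) $\pp=\qq$. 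Thus $\distfuns{\kernel}$ separates points, so it is a genuine metric and $\mmap$ is injective, i.e.\ $\kernel$ is characteristic.

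For the ``only if'' direction I would argue by contraposition and build an explicit counterexample. If $\supp(\freqdist)\neq\mathbb{R}^d$, the complement $V:=\mathbb{R}^d\setminus\supp(\freqdist)$ is a nonempty open set with $\freqdist(V)=0$. I would fix a smooth strictly positive reference density $p_0$ (e.g.\ a Gaussian) and perturb it: construct a real, bounded, integrable, mean-zero function $f$ whose Fourier transform $\hat f$ is supported in $V$ and satisfies $\hat f(0)=0$, then set $\qq=p_0\,d\bfx$ and $\pp=(p_0+\eps f)\,d\bfx$ for $\eps>0$ small enough that $p_0+\eps f\geq 0$. Both are probability measures (mass $1$ because $\int f=0$), they are distinct because $f\not\equiv 0$, yet $\chrc_\pp-\chrc_\qq=\eps\,\hat f$ vanishes on $\supp(\freqdist)$, so the identity gives $\distfun{\pp}{\qq}{\kernel}=0$; hence $\distfuns{\kernel}$ is not a metric and $\kernel$ is not characteristic.

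The main obstacle is precisely this last construction, where a real $f$ forces $\hat f$ to be Hermitian-symmetric, so its support must be symmetric: one must place the bump inside a \emph{symmetric} open set disjoint from $\supp(\freqdist)$. This is available because the spectral measure $\freqdist$ here is symmetric (the kernel is real / the sampling pattern $\freq=R\,\mSigma^{-\frac12}\mrho$ with $\mrho$ uniform on the sphere is symmetric), so $V$ is symmetric and I can take $\hat f(\freq)=b(\freq)+\overline{b(-\freq)}$ for a smooth compactly supported bump $b$ on $V$ with $b(0)=0$, whose inverse transform $f$ is Schwartz, hence bounded, integrable and mean-zero. It is worth flagging that symmetry is essential: for a general complex $\TIk$ the bare support condition can fail (e.g.\ a half-space spectral support already separates all measures), so the statement is to be read in the symmetric regime that governs the sketching operators considered here.
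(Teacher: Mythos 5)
The paper contains no proof of this statement: it is imported from \cite{Sriperumbudur2010}, and the only in-house remark (just after \eqref{eq:chrcdiff}) is that the cited proof rests on the reformulation $\distfunsq{\pp}{\qq}{\kernel}=\normKsq{\pp}{\qq}$ --- exactly the Fourier identity you derive first. Your derivation of that identity (Bochner plus Fubini) and your ``if'' direction --- continuity of $\chrc_\pp-\chrc_\qq$, the fact that a nonempty open set has positive measure under a measure of full support, and uniqueness of characteristic functions --- are correct and coincide with the standard argument.

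The gap is in the ``only if'' direction, at the words ``$\eps>0$ small enough that $p_0+\eps f\geq 0$'' with $p_0$ Gaussian. Your $f$ is the inverse Fourier transform of a nonzero $C_c^\infty$ function supported in $V$, hence Schwartz; but it cannot decay exponentially: if $f=O(e^{-c\lvert\bfx\rvert})$ for some $c>0$, then $\hat f$ would extend holomorphically to the tube $\{\lvert\mathrm{Im}\,\freq\rvert<c\}$, and a compactly supported function agreeing with such an extension on $\mathbb{R}^d$ must vanish identically. So $f$ decays faster than every polynomial but exceeds every multiple of $e^{-c\lvert\bfx\rvert}$ (a fortiori of a Gaussian) along some sequence, and $\lvert f\rvert/p_0$ is unbounded; nothing guarantees the negative part of $f$ is dominated by $p_0$, so no $\eps>0$ is known to make $p_0+\eps f$ nonnegative --- and this nonnegativity is the entire content of producing two distinct probability measures. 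The repair is the one used in \cite{Sriperumbudur2010}: take a reference density with polynomial tails, e.g.\ the product Cauchy $q(\bfx)=\prod_{j=1}^d \pi^{-1}(1+x_j^2)^{-1}$. Since $f$ is Schwartz, $\lvert f(\bfx)\rvert\leq C_N(1+\lvert\bfx\rvert)^{-N}$ for every $N$, whence $\lvert f\rvert\leq c\,q$ for some $c$ and $q+\eps f\geq 0$ for small $\eps$; the rest of your construction (symmetric $V$, Hermitian $\hat f$ via $b(\freq)+\overline{b(-\freq)}$, and $\hat f(0)=0$ to preserve total mass) then goes through unchanged. Your closing caveat about symmetry of $\freqdist$ is apt: the theorem in \cite{Sriperumbudur2010} is stated for real translation-invariant kernels, for which $\freqdist$ is automatically symmetric.
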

Many classical translation-invariant kernels (Gaussian, Laplacian....) indeed exhibit a Fourier transform with a support that is equal to $\mathbb{R}^d$, and are therefore characteristic. This is also the case for all kernels corresponding to the proposed frequency distributions $\freqdist^{(.)}_\mSigma$ that we defined in Section \ref{sec:freqdist}.

\subsection{From Kernel design to the design of a frequency sampling pattern}\label{sec:kernel_design}

In the case of a translation-invariant kernel, the metric $\distfuns{\kernel}$ can be expressed as $\distfunsq{\pp}{\qq}{\kernel} = \normKsq{\pp}{\qq}$ where by abuse of notation, for a given frequency distribution $\freqdist$, we introduce
\begin{align}
\normKsq{\pp}{\qq} &:=\int_{\mathbb{R}^d}\left\lvert\chrc_\pp(\freq)-\chrc_\qq(\freq)\right\rvert^2 d\freqdist(\freq), \label{eq:chrcdiff}
\end{align}
where we recall that $\chrc_\pp(\freq)$ is the characteristic function of $\pp$. 
The proof of Theorem \ref{thm:characTIk} is actually based on this reformulation \cite{Sriperumbudur2010}.

Hence, given $m$ frequencies $(\freq_1,...,\freq_m) \overset{i.i.d.}{\sim} \freqdist$ and the corresponding sketching operator $\skop$ \eqref{eq:skop}, we can expect that for large enough $m$, with high probability,
\begin{align}
\normKsq{\pp}{\qq} = \mathbb{E}_{\freq\sim\freqdist}|\chrc_\pp(\freq)-\chrc_\qq(\freq)|^2 \approx \frac{1}{m} \sum_{j=1}^m |\chrc_\pp(\freq_{j})-\chrc_\qq(\freq_{j})|^2 = \neucl{\skop \pp - \skop \qq}^2 \label{eq:emp_norms}.
\end{align}

Building on this relation between the metric $\normKsq{\cdot}{\cdot}$ and the sketching operators considered in this paper, we next derive some connections with kernel design. We further exploit these connections to draw a theoretical analysis of the information preservation guarantees of the sketching operator $\skop$ in Section~\ref{sec:theory}.

\paragraph{Traditional kernel design.}
Given some learning task, the selection of an appropriate kernel --known as kernel design-- is a difficult, open problem, usually done by cross-validation. Even when using RFFs to leverage the computational gain of explicit embeddings (compared to potentially costly implicit kernels), the considered frequency distribution $\freqdist$ is usually \emph{derived from} a translation-invariant kernel $\kernel$ chosen in a parametric family endowed with closed-form expressions for both $\kernel$ and $\freqdist$ \cite{Rahimi2007,Sutherland2015}. A typical example is the Gaussian kernel (chosen for the simplicity of its Fourier transform), which bandwidth is often selected by cross-validation \cite{Sriperumbudur2009,Sutherland2015}.

\review{
\paragraph{Spectral kernel design: designing the frequency sampling pattern.}
Learning an appropriate kernel by directly deriving a spectral distribution of frequencies for Random Fourier Features has been an increasingly popular idea in the last few years. In the field of usual reproducing kernels on finite-dimensional objects, researchers have explored the possibility of modifying the matrix of frequencies to obtain a better approximation quality \cite{Xinnan2016} or to accelerate the computation of the kernel \cite{Le2013}. Both ideas have been exploited for learning an appropriate frequency distribution, often modeled as a mixture of Gaussians \cite{Wilson2013,Yang2015,Oliva2015a} or by optimizing weights over a combination of many distributions \cite{Sinha2016}.

In the context of using the Mean Map with Random Features, learning a kernel has been often explored for the two-sample test problem, mainly based on the idea of maximizing the discriminative power of the MMD \cite{Sriperumbudur2009}. Similar to our approach, such methods often divide the database in two parts to learn the kernel then perform the hypothesis test \cite{Gretton2012,Jitkrittum2016}, or are done in a streaming context \cite{Paige2016}. Variants of the Random Fourier Features have also been used \cite{Chwialkowski2015}.

Compared to these methods, the approach proposed in Section~\ref{sec:freqdist} is relatively simple, fast to perform, and based on an approximate theoretical expression of the MMD for clustered data instead of a thorough statistical analysis of the problem. In the spirit of our sketching framework, it only requires a tiny portion of the database and is performed in a \emph{completely unsupervised} manner, which is quite different from most existing literature. Furthermore, in this paper we only consider traditional Random Fourier Features for translation-invariant kernels. 

Using more exotic kernels and/or adapting more advanced learning methods to our framework is left for future work. Still, in Section \ref{sec:results_freqdist}, we empirically show that the estimation of $\bar\sigma^2$ (Function~\EstimMeanSigma) is much faster than estimation by cross-validation, and that the Adapted radius distribution $\freqdist^{(Ar)}_{\bar\sigma^2\bfI}$ performs better than a traditional Gaussian kernel with optimized bandwidth.
}

\section{Experiments with synthetic data}\label{sec:results}

To validate the proposed framework, the algorithms CL-OMP(R) (Algorithm~\ref{algo:CL-OMP}) are first extensively tested against synthetic problems for which the true parameters of the GMM are known. Experiments on real data will be conducted in Section~\ref{sec:speaker_verification}, with the additional analysis of 
Algorithm~\ref{algo:BS-CGMM}. \review{The full Matlab code is available at \cite{Keriven2016toolbox}.}
\subsection{Generating the data}\label{sec:data_drawing}

When dealing with synthetic data for various settings, it is particularly difficult to generate problems that are ``equally difficult'' when varying the dimension $d$ and the level of sparsity $K$. We opted for a simple heuristic to draw $K$ Gaussians neither too close nor to far from one another, given their variances.

Since an $d$-dimensional ball with radius $r$ has volume $D_d r^d$ (with $D_d$ a constant depending on $d$), we consider that any Gaussian which is \emph{approximately isotropic} with variance $\sigma^2$ (\ie, $\mathcal{N}(0,\mathbf{\Sigma})$ with $\mathbf{\Sigma} \approx \sigma^{2}\bfI$) ``occupies'' a volume
$V_{\sigma^2} 
=\sigma^d V_1,
$
where $V_1$ is a reference volume for $\sigma=1$.

Variances $\sigma_{k,\ell}^2$ are randomly drawn uniformly from $0.25$ to $1.75$, so that $\mathbb{E}(\bar\sigma^2)=1$.  The means $\mmu$ are chosen so that they lie in a ball sufficiently large to accommodate for a volume $K \times V_{\bar\sigma^2}$. We therefore choose $\mmu \sim \mathcal{N}(0,\sigma_{\mmu}^2 \bfI)$ with $\sigma_\mmu$ that verifies $V_{\sigma_{\mmu}^2}=KV_{\bar\sigma^2}$, which yields
$\sigma_{\mmu}=K^{\frac{1}{d}}\bar\sigma,  
$
\ie, $\sigma_{\mmu}=K^{\frac{1}{d}}$ by considering the expected value of $\bar\sigma$. In practice this choice seems to offer problems that are neither too elementary nor too difficult.

\subsection{Evaluation measure}\label{sec:evaluation_measure}

To evaluate reconstruction performance when the true distribution $p$ of the data is known, one can resort to the classic Kullback-Leibler divergence $D(\pp||\qq)$ \cite{Cover2006}. 
A symmetric version of the KL-divergence is more comfortable to work with: $d(\pp,\bar \pp)=D(\pp||\bar \pp)+D(\bar \pp || \pp)$, where $\pp$ is the true distribution and $\bar \pp$ is the estimated one (we still refer to this measure as ``KL-divergence''). In our framework, $\pp$ and $\bar \pp$ are GMMs with density functions denoted $\dens$ and $\bar \dens$, hence as in \cite{Bourrier2013} to estimate the KL-divergence in practice we draw $(\bfy_1,...,\bfy_{n'}) \overset{i.i.d.}{\sim} \pp$ with $n'=5\times 10^5$ and compute
\begin{equation}
d(\pp,\bar \pp)\approx \frac{1}{n'} \sum_{i=1}^{n'} \left[\ln\left(\frac{\dens(\bfy_i)}{\bar \dens(\bfy_i)}\right) + \frac{\bar \dens(\bfy_i)}{\dens(\bfy_i)}\ln\left(\frac{\bar \dens(\bfy_i)}{\dens(\bfy_i)}\right)\right].
\end{equation}


\subsection{Basic Setup}\label{sec:basic_setup}
The basic setup is the same for all following experiments, given data $(\bfx_1,...,\bfx_n) \in \mathbb{R}^d$ and parameters $m,K$. We suppose the data to be approximately centered (see Section \ref{sec:data_drawing}).

First, unless specified otherwise (\eg, in 
Section~\ref{sec:results_freqdist}
), we draw frequencies according to an Adapted radius distribution $\freqdist^{(Ar)}_{\bar\sigma^2\bfI}$, using Algorithm \ref{algo:drawfreqpractice} with parameters $m_0=500$, $n_0=\min(n,5000)$, $c=30$, $T=5$. The empirical sketch $\hat\bfz$ is then computed.

The compressive algorithms are performed with their respective number of iterations. For Step 1 of CL-OMP(R), the gradient descent is initialized with a centered isotropic Gaussian with random variance\footnote{After trying many variants of initialization strategy, we came to the conclusion that the algorithm is very robust to initialization. This is one of the possible choices.} $\sigma^2 \sim \mathcal{U}\left([0.5\bar\sigma^2;1.5\bar\sigma^2]\right)$. Furthermore, during all optimization steps in CL-OMP(R) or Algorithm \ref{algo:BS-CGMM}, we constrain all variances to be larger than a small $10^{-15}$ for numerical stability. \review{All continuous optimization schemes in CL-OMP(R) are performed with Stephen Becker's adaptation of the L-BFGS-B algorithm \cite{Byrd1995} in C, with Matlab wrappers \cite{Becker2013}.} 

We compare our results with an adaptation of the previous IHT algorithm \cite{Bourrier2013,Bourrier2015} for isotropic Gaussians with fixed variances, in which all optimization steps have been straightforwardly adapted for our non-isotropic framework with relaxed variances. The IHT is performed with $10$ iterations, which is the default value in the original paper.

We use the VLFeat toolbox \cite{Vedaldi2010} to perform the EM algorithm with diagonal variances. The algorithm is repeated 10 times with random initializations and the result yielding the best log-likelihood is kept. Each run is limited to 100 iterations.

All following reconstruction results are computed by taking the geometric mean over 50 experiments (\ie~the regular mean in logarithmic scale).

\subsection{Results: role of the choice of frequency distribution}\label{sec:results_freqdist}

\begin{table}
\centering
\begin{tabular}{|c|c||c|c|c|}
\cline{3-5}
\multicolumn{2}{c|}{}  & (G) & (FGr) & (Ar)  \\ \hline
\multirow{2}{*}{$d=2,~K=3$} & \multicolumn{1}{|c|}{$\freqdist_{\hyppar_0,\malpha_0}^{(.)}$} & $-9.30$ & $-8.90$ & $\mathbf{-9.38}$ \\ \cline{2-5}
 & \multicolumn{1}{|c|}{$\freqdist_{\bar\sigma^2\bfI}^{(.)}$} & $-8.93$ & $-8.67$ & $\mathbf{-9.20}$  \\ \hline \hline
\multirow{2}{*}{$d=20,~K=5$} & \multicolumn{1}{|c|}{$\freqdist_{\hyppar_0,\malpha_0}^{(.)}$} & $3.15$ & $-6.17$ & $\mathbf{-6.48}$ \\ \cline{2-5}
& \multicolumn{1}{|c|}{$\freqdist_{\bar\sigma^2\bfI}^{(.)}$} & $3.41$ & $-5.80$ & $\mathbf{-6.32}$  \\ \hline
\end{tabular}
\caption{Log-KL-divergence on synthetic data using the CL-OMPR algorithm, for $m=10(2d+1)K$ frequencies and $N=300\ 000$ items. We compare the three proposed frequency distributions: Gaussian \cite{Bourrier2013} (G), Folded Gaussian radius (FGr) or Adapted radius (Ar), using either the oracle distribution defined in Section \ref{sec:gmmfreqdist} or the approximate distribution used in practice, learned with \texttt{\ref{algo:learn_sigma}}.}
\label{tab:freq}
\end{table}

In this section we compare different choices of frequency distributions. We draw $N=300000$ items in two settings, respectively low and high dimensional: $d=2,~K=3$ and $d=20,~K=5$. In each setting we construct the sketch with $m=10K(2d+1)$ frequencies (see Section \ref{sec:results_phase}). Reconstruction is performed with CL-OMPR. 

In Table \ref{tab:freq}, we compare the three frequency distributions introduced in Section \ref{sec:freq}, both with the 
oracle frequency distribution $\freqdist^{(.)}_{\hyppar_{0}, \malpha_{0}}$ (\ie~using Function \DrawFreq with the true parameters of the GMM) -- we remind the reader that this setting unrealistically assumes that the variances and weights of the GMM are known beforehand --, and with the approximate one $\freqdist^{(.)}_{\bar\sigma^2 \bfI}$ (Algorithm \ref{algo:drawfreqpractice}). The results show that the Gaussian frequency distribution indeed yields poor reconstruction results in high dimension ($d=20$), while the Adapted radius frequency distribution outperforms the two others. The use of the approximate $\freqdist^{(.)}_{\bar\sigma^2 \bfI}$ instead of the 
oracle $\freqdist^{(.)}_{\hyppar_{0}, \malpha_{0}}$ is shown to have little effect.

\begin{table}
\centering
\begin{tabular}{|c|c||c|r|}
\cline{3-4}
\multicolumn{2}{c|}{}  & Reconstruction result & Est. time \\ \hline
\multirow{2}{*}{$d=2,~K=3$} & \multicolumn{1}{|c|}{Adapted radius (proposed)} & $\mathbf{-9.38}$ & ($\bar\sigma^2$)~~ $1.02s$ \\ \cline{2-4}
 & \multicolumn{1}{|c|}{Gaussian kernel \cite{Rahimi2007,Sutherland2015}} & $-9.14$ & ($\sigma_{best}^2$) $23.10s$  \\ \hline \hline
\multirow{2}{*}{$d=20,~K=5$} & \multicolumn{1}{|c|}{Adapted radius (proposed)} & $\mathbf{-6.32}$ & ($\bar\sigma^2$)~~ $0.98s$ \\ \cline{2-4}
& \multicolumn{1}{|c|}{Gaussian kernel \cite{Rahimi2007,Sutherland2015}} & $-5.18$ & ($\sigma_{best}^2$) $74.10s$  \\ \hline
\end{tabular}
\caption{Log-KL-divergence results on synthetic data using the CL-OMPR algorithm, for $m=10(2d+1)K$ frequencies and $n=300\ 000$ items, comparing the proposed Adapted radius distribution $\freqdist^{(Ar)}_{\bar\sigma^2\bfI}$ and a frequency distribution $\freqdist^{(Gk)}_{\sigma^2_{best}}$ corresponding to a Gaussian kernel (Gk) \cite{Rahimi2007,Sutherland2015}, with a bandwidth $\sigma_{best}^2$ selected among $15$ values exponentially spread from $10^{-1}$ to $10^2$ yielding the best result in each case. Estimation times for parameters $\bar\sigma^2$ and $\sigma^2_{best}$ are also given.}
\label{tab:freqker}
\end{table}

In Table \ref{tab:freqker} we compare the proposed Adapted radius frequency distribution with a frequency distribution $\freqdist=\mathcal{N}(0,\bfI\sigma^{-2}_{best})$ corresponding to a Gaussian kernel $\kernel(\bfx,\bfy) = \exp\left(-\tfrac{\|\bfx-\bfy\|_2^2}{2\sigma^2_{best}}\right)$ \cite{Rahimi2007,Sutherland2015}, where the bandwidth $\sigma^2_{best}$ is selected among $15$ values exponentially spread from $10^{-1}$ to $10^2$ to yield the best reconstruction results in each setting. It is seen that the Adapted radius distribution $\freqdist^{(Ar)}_{\bar\sigma^2}$ outperforms the Gaussian kernel in each case, and the estimation of the parameter $\bar\sigma^2$ is significantly faster than the tedious learning of the bandwidth $\sigma^2_{best}$.

\review{We conduct an experiment to determine how robust is the isotropic distribution of frequencies $\freqdist^{(Ar)}_{\bar{\sigma}^2\bfI}$ when treating strongly non-isotropic data. We first generate a GMM in dimension $d=10$ with $K=10$ components, with the process described in Section \ref{sec:basic_setup} but with identity covariances. Then the first five dimensions of the parameters are divided by a factor $A>0$, meaning that: for $\ell=1,...,5$ and $k=1,...,K$, we do $\mu_{k,\ell} \leftarrow \mu_{k,\ell}/A$ and $\sigma^2_{k,\ell} \leftarrow \sigma_{k,\ell}^2/A^2$. It simulates data which do not have the same range in each dimension (Fig. \ref{fig:non_isotropy}, top). In Fig. \ref{fig:non_isotropy} (bottom), for increasing values of $A$ we compare CL-OMPR with the learned frequency distribution $\freqdist^{(Ar)}_{\bar{\sigma}^2\bfI}$, CL-OMPR using the ideal frequency distribution $\freqdist^{(Ar)}_{\hyppar_0,\malpha_0}$, and EM. It is indeed seen that, in terms of KL-divergence (left), the results produced by CL-OMPR with the learned frequency distribution deteriorates as the non-isotropy of the GMM increases. On the contrary, CL-OMPR with the oracle frequency distribution and EM do not seem to be affected. Hence in the first case the problem lies with the learned frequency distribution and not the CL-OMPR algorithm itself. To further confirm this fact, we examine the MMD $\gamma_\freqdist$ with the corresponding frequency distribution $\freqdist$ in each case. It is seen that CL-OMPR with the learned frequency distribution performs as well as the oracle one, meaning that the MMD defined with an isotropic choice of frequencies $\gamma_{\freqdist^{(Ar)}_{\bar{\sigma}^2\bfI}}$ is indeed not adapted for strongly non-isotropic problems, and that despite the ability of CL-OMPR to approximately minimize the cost function \eqref{eq:costfun} (which approximates the MMD, see \eqref{eq:emp_norms}), it does not produce good results.

In that particular case, the problem could be potentially resolved by a whitening of the data, e.g. by computing the empirical covariance of the fraction of the database used for the frequency distribution design phase, and multiplying each datapoint by its inverse during the sketching phase. However there are of course cases where the proposed methods would be further challenged, for instance if components are flat in a dimension but far apart: the global covariance of the data along that dimension would be large, even if the variance of each Gaussian components is small (these cases are however rarely encountered in practice). As mentioned before (see Sec. \ref{sec:kernel_design}), another solution would be to use more advanced methods for designing the MMD $\gamma_\freqdist$ than the proposed simple one. Overall, we leave treatment of strongly non-isotropic data for future work.}

Nevertheless, from now on, all experiments are performed with an approximate Adapted radius distribution $\freqdist^{(Ar)}_{\bar\sigma^2 \bfI}$.

\begin{figure}[h]
\centering
\includegraphics[height=4cm]{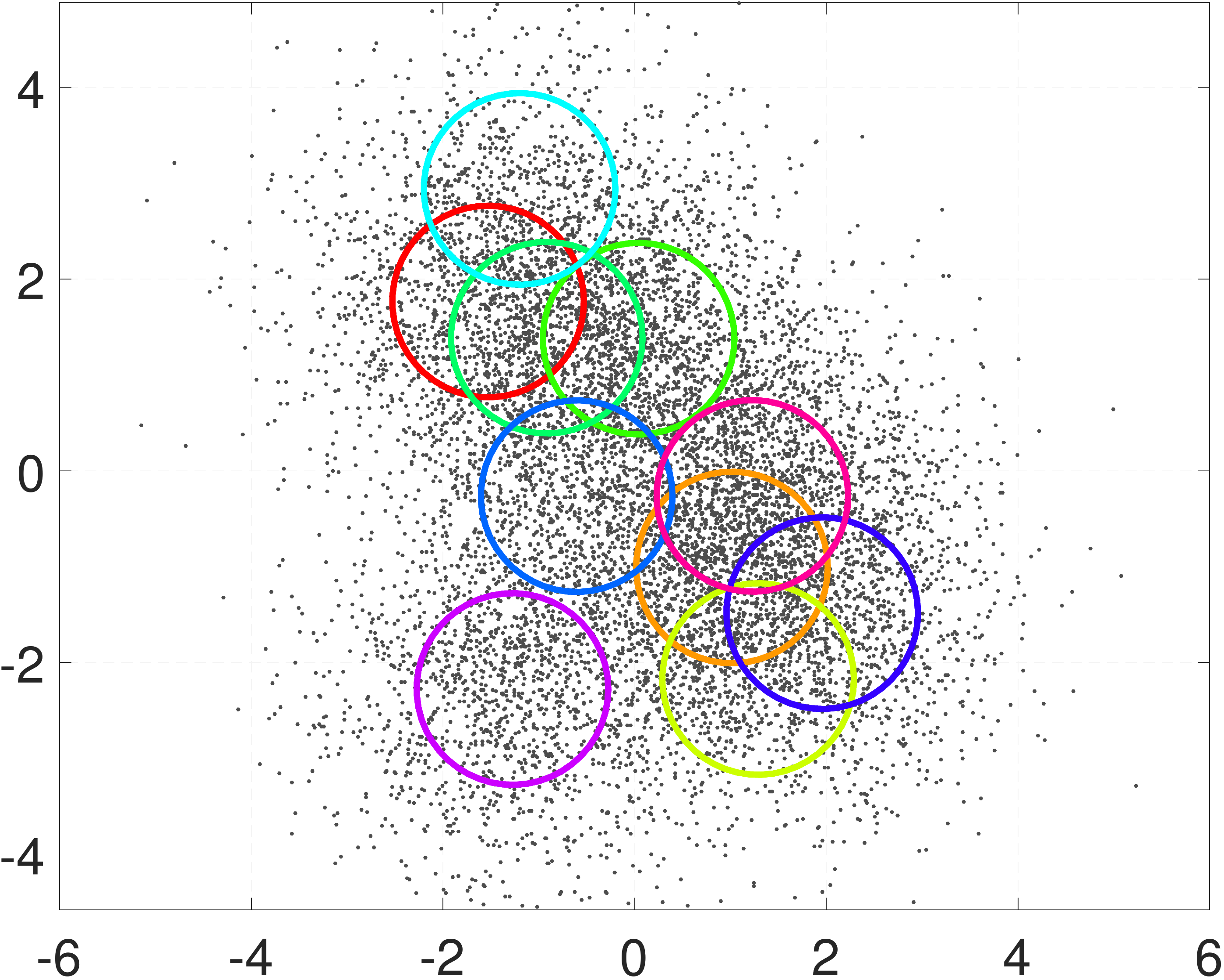}
\includegraphics[height=4cm]{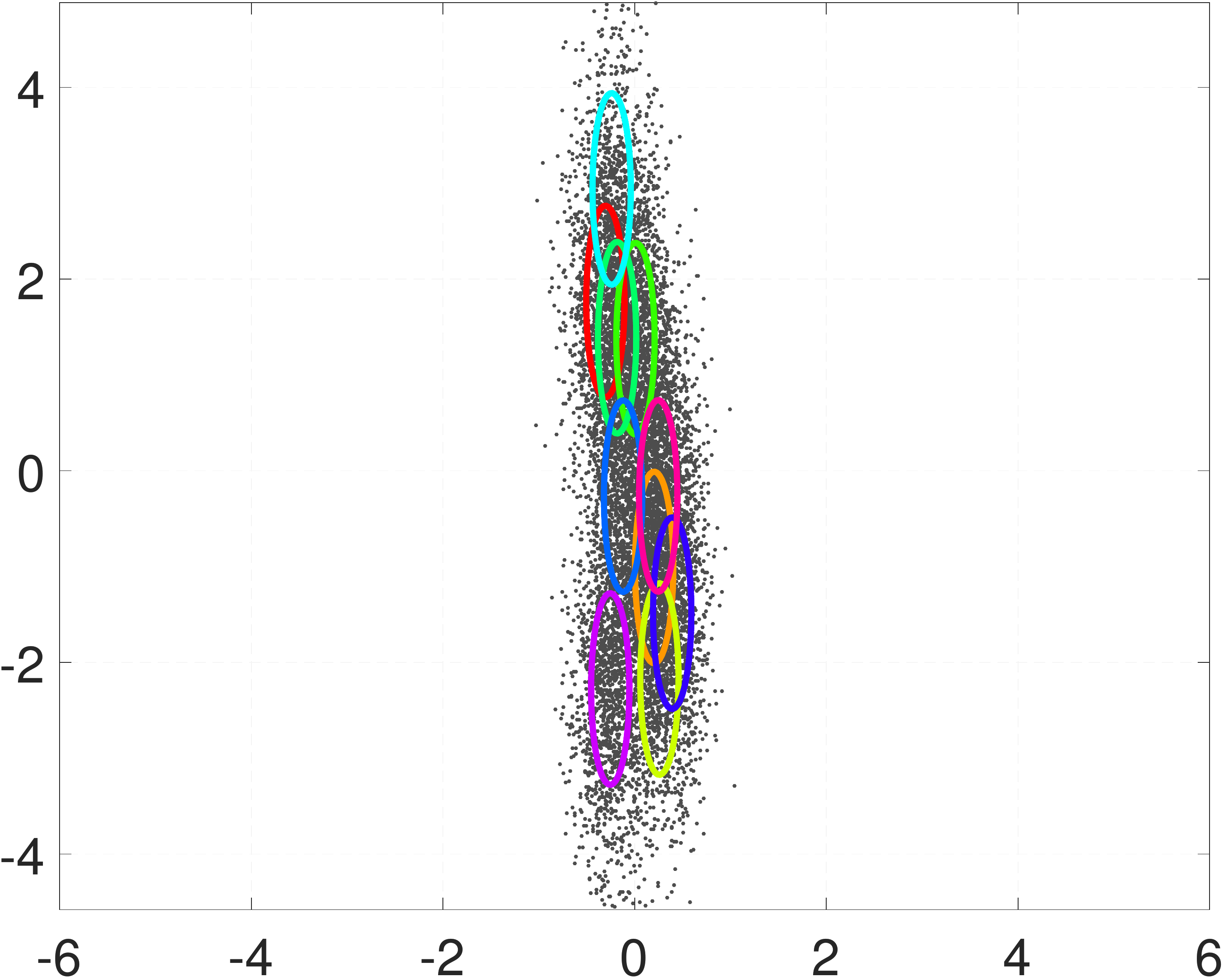}\\
\includegraphics[height=\hght]{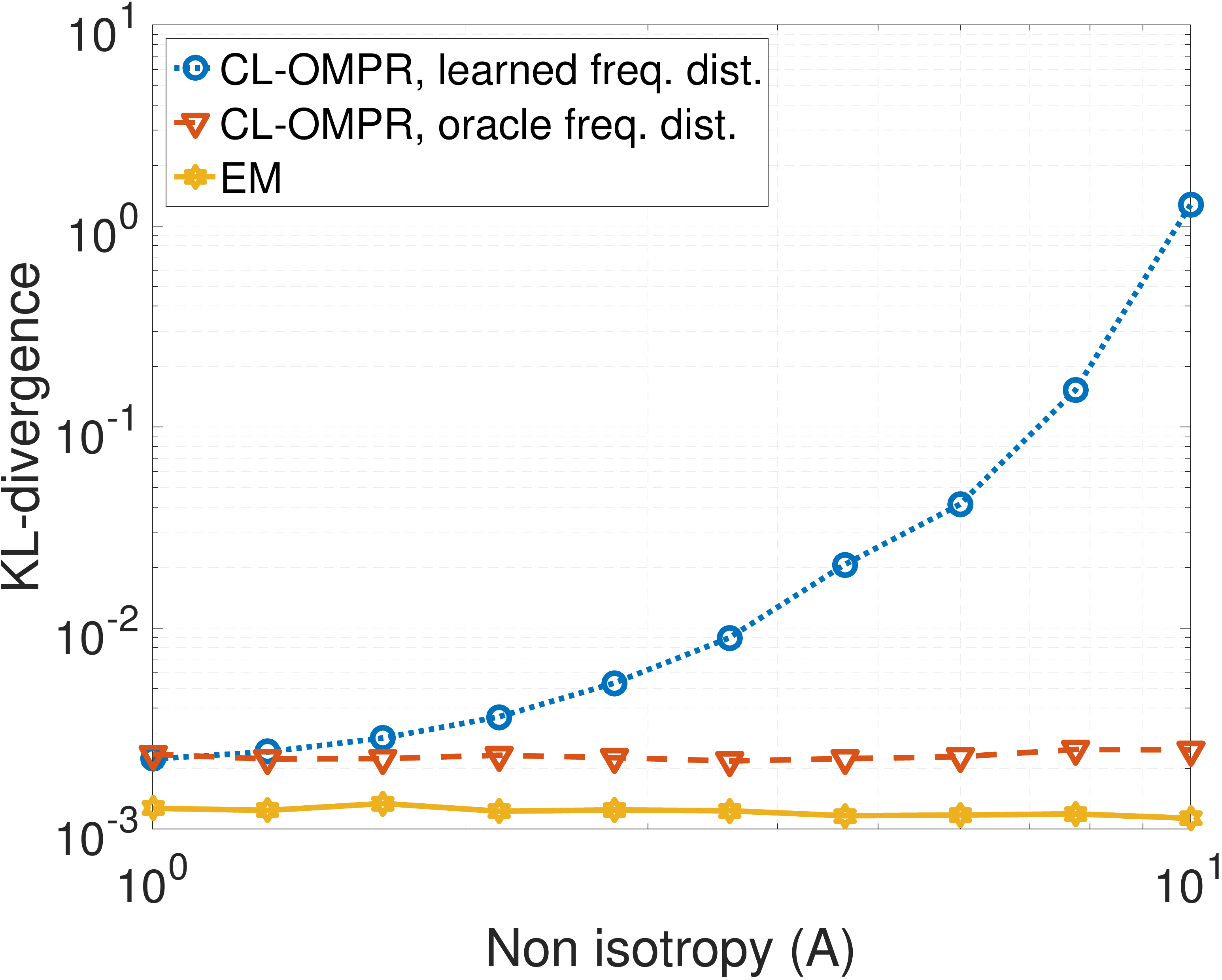}
\includegraphics[height=\hght]{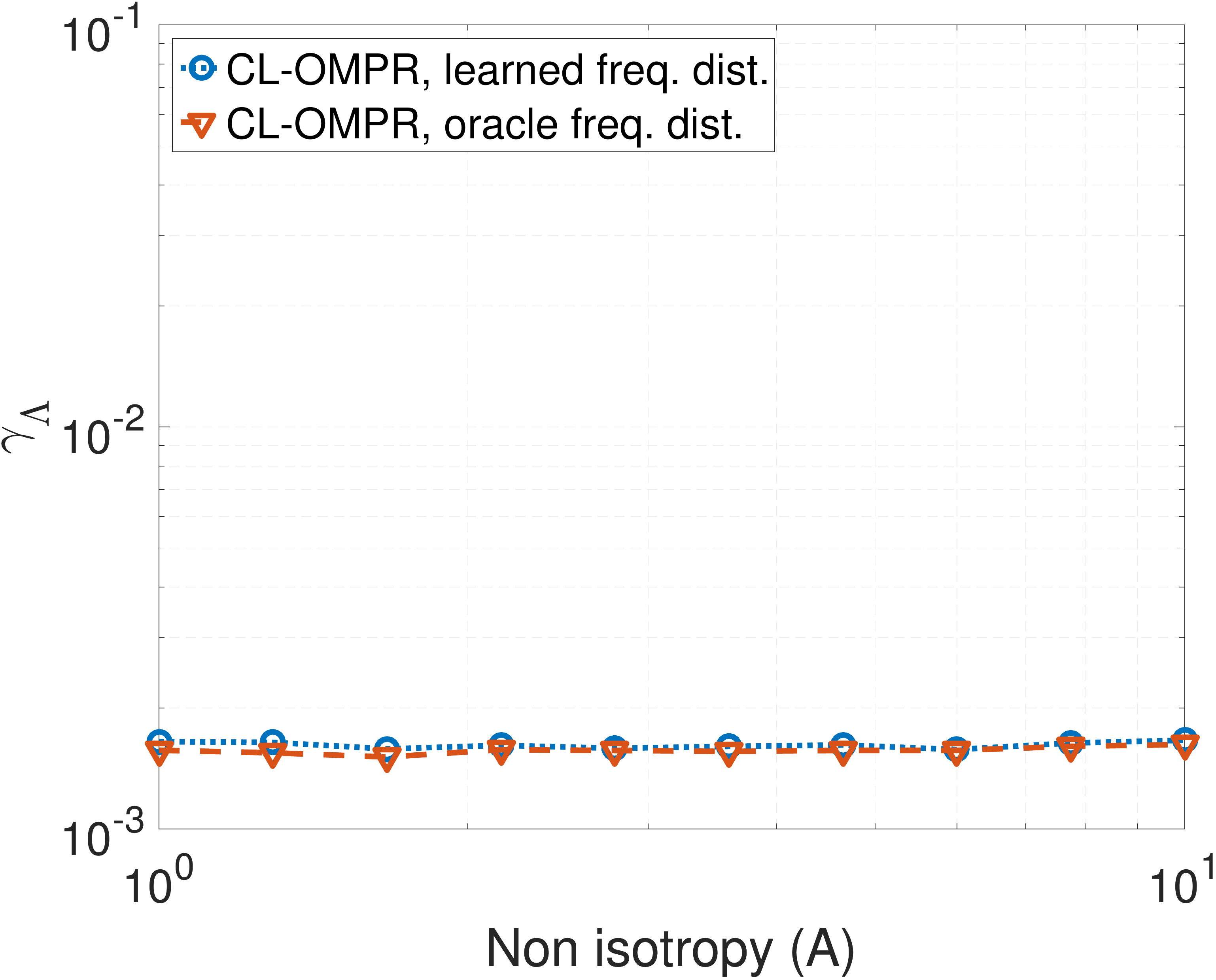}
\caption{\review{Top: isotropic GMM in dimension $n=10$ with $K=10$ components displayed along the first two dimensions (left), the same GMM with one dimension divided by $A=5$ (right). Bottom: reconstruction results for the KL-divergence (left) or MMD (right) with $n=2.10^5$ items, using $m=2000$ frequencies for CL-OMPR, with respect to the coefficient $A$.}}
\label{fig:non_isotropy}
\end{figure}

\subsection{Results: number of measurements}\label{sec:results_phase}

\review{We now evaluate the quality of the reconstruction with respect to the number $m$ of frequencies, for varying dimension $d$ and number of components $K$ in the GMM (Figure~\ref{fig:phasetrans}). 

It is seen that the KL-divergence 
exhibits a sharp phase-transition with respect to $m$, whose occurrence seems to be proportional to the ``dimension'' of the problem, \ie~the number of free parameters $(2d+1)K$. 
This phenomenon is akin to usual compressive sensing. 
In light of this observation, the number of frequencies in all following experiments is chosen as $m=5(2d+1)K$, beyond the empirically obtained phase transition.}


\begin{figure}[h]
\centering
\includegraphics[height=\hght]{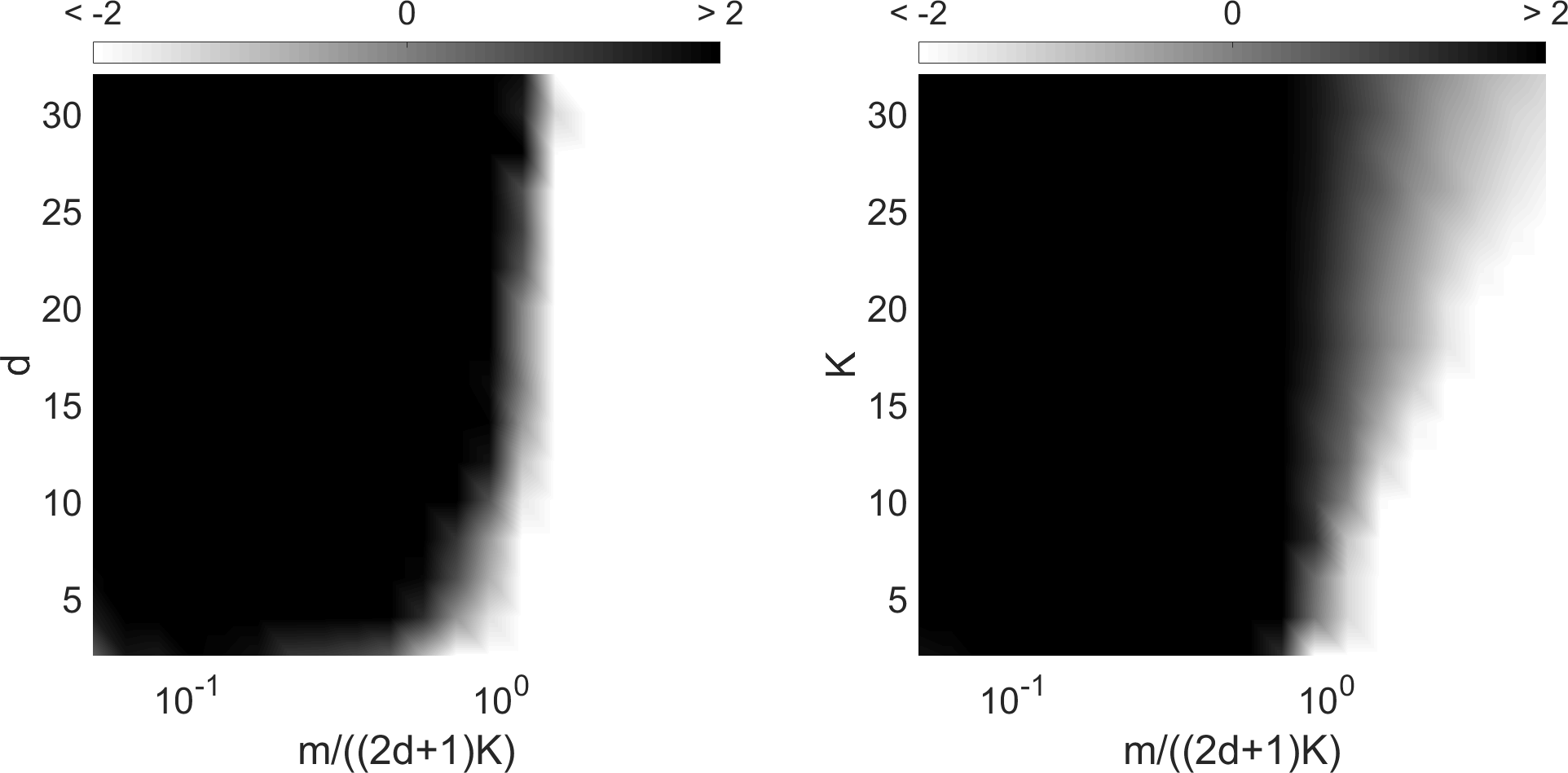}
\caption{\review{Log KL-divergence reconstruction results for CL-OMPR with respect to the normalized number of frequencies $m/((2d+1)K)$ and the dimension $d$ (left) or number of components $K$ (right), using $n=300\ 000$ items. On the left $K=10$, and on the right $d=10$.}}
\label{fig:phasetrans}
\end{figure}

\subsection{Results: comparison of the algorithms}\label{sec:results_N}

We compare the compressive algorithms and EM in terms of reconstruction, computation time and memory usage, with respect to the number of samples $N$.

\paragraph{Precision of the reconstruction} In Figure~\ref{fig:result}, KL-divergence 
reconstruction results are shown for EM and all compressed algorithms: IHT \cite{Bourrier2013}, CL-OMP (Algorithm~\ref{algo:CL-OMP} with $T=K$), CL-OMPR (Algorithm~\ref{algo:CL-OMP} with $T=2K$) and Algorithm~\ref{algo:BS-CGMM}. We consider two settings, one with low $K=5$ (left) and one with high $K=20$ (right), in dimension $n=10$. The number of measurements is set at $m=5(2d+1)K$ in each case.

With few Gaussians ($K=5$), all compressive algorithms yield similar results, close to those achieved by EM. The precision of the reconstruction is seen to improve steadily with the size $n$ of the database. With more Gaussians ($K=20$), CL-OMPR clearly outperforms the other compressive algorithms, and even outperforms EM for very large $n$.

The IHT algorithm \cite{Bourrier2013} adapted to non-isotropic variances often fails to recover a satisfying GMM. Indeed, IHT fills the support of the GMM at the very first iteration, and is seen to converge toward a local minimum of \eqref{eq:costfun}, in which all Gaussians in the GMM are equal to the same large Gaussian that encompasses all data. Note that this situation could not happen in \cite{Bourrier2013}, where all Gaussians have fixed, known variance.

%
%

\begin{figure}
\centering
\includegraphics[height=\hght]{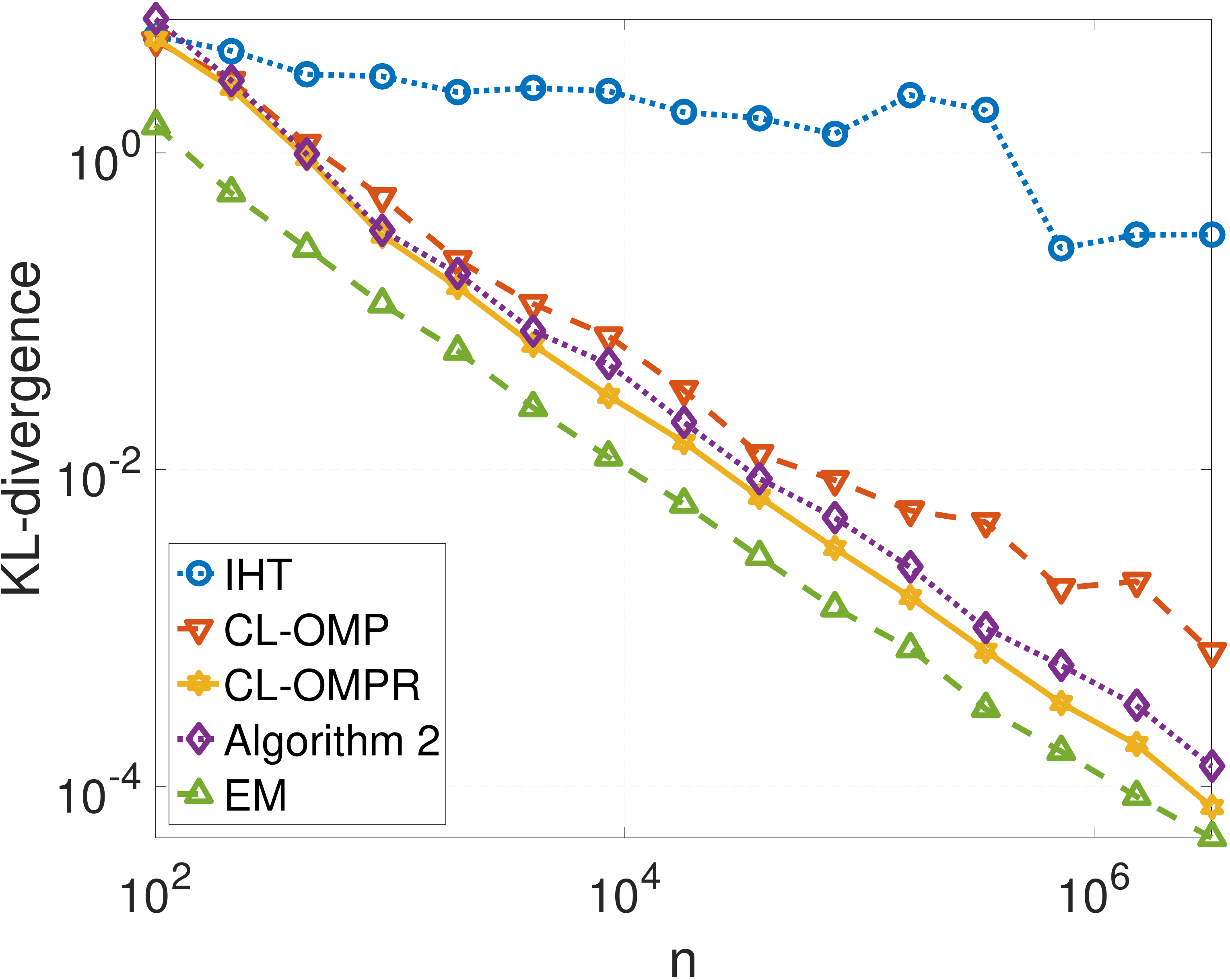}
\includegraphics[height=\hght]{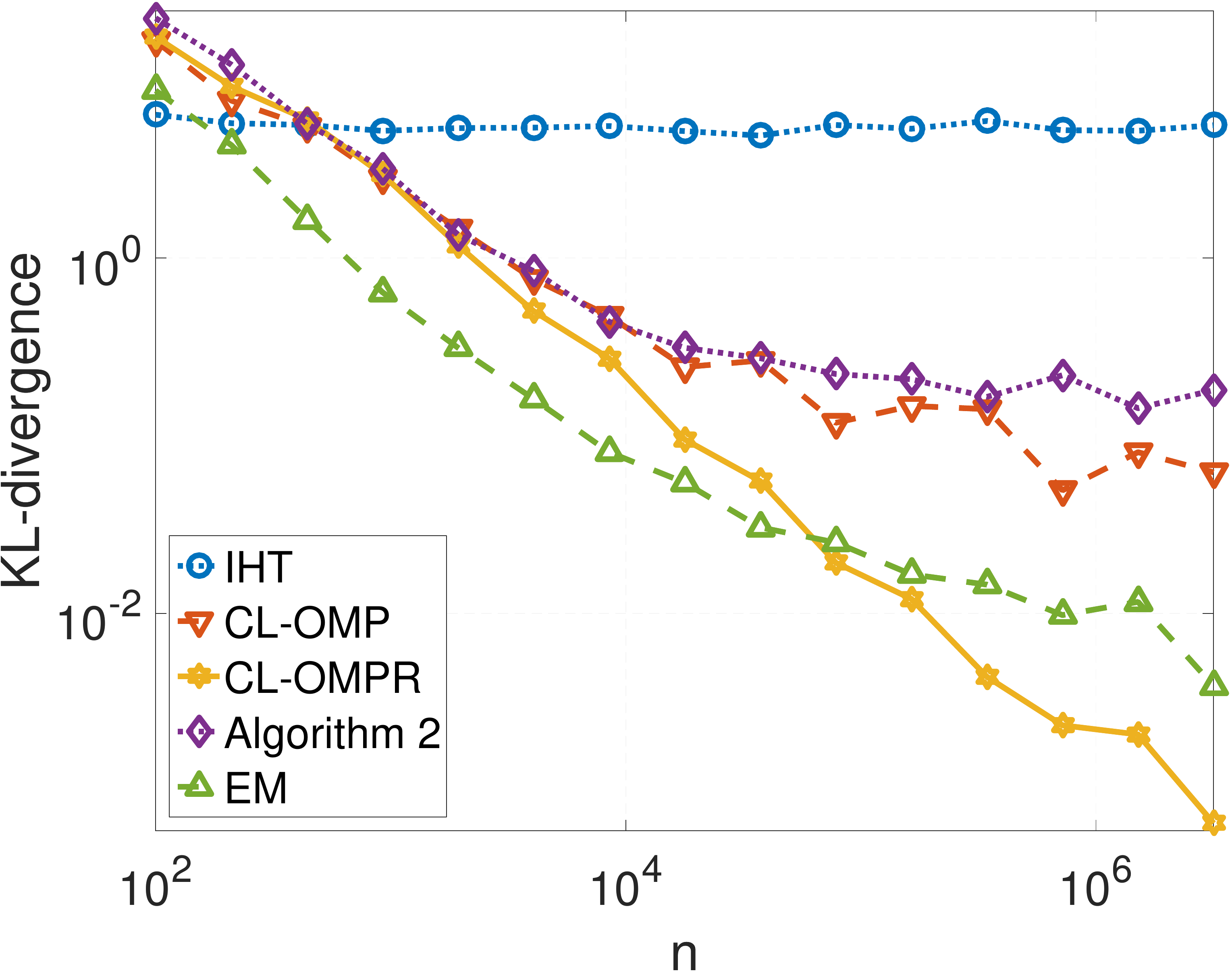}
\caption{Reconstruction results on synthetic data in dimension $d=10$, with $K=5$ component (left) or $K=20$ (right), and number of frequencies $m=5(2d+1)K$, with respect to the number of items in the database $n$. 
}
\label{fig:result}
\end{figure}

\begin{figure}[h!]
\centering

\includegraphics[height=\hght]{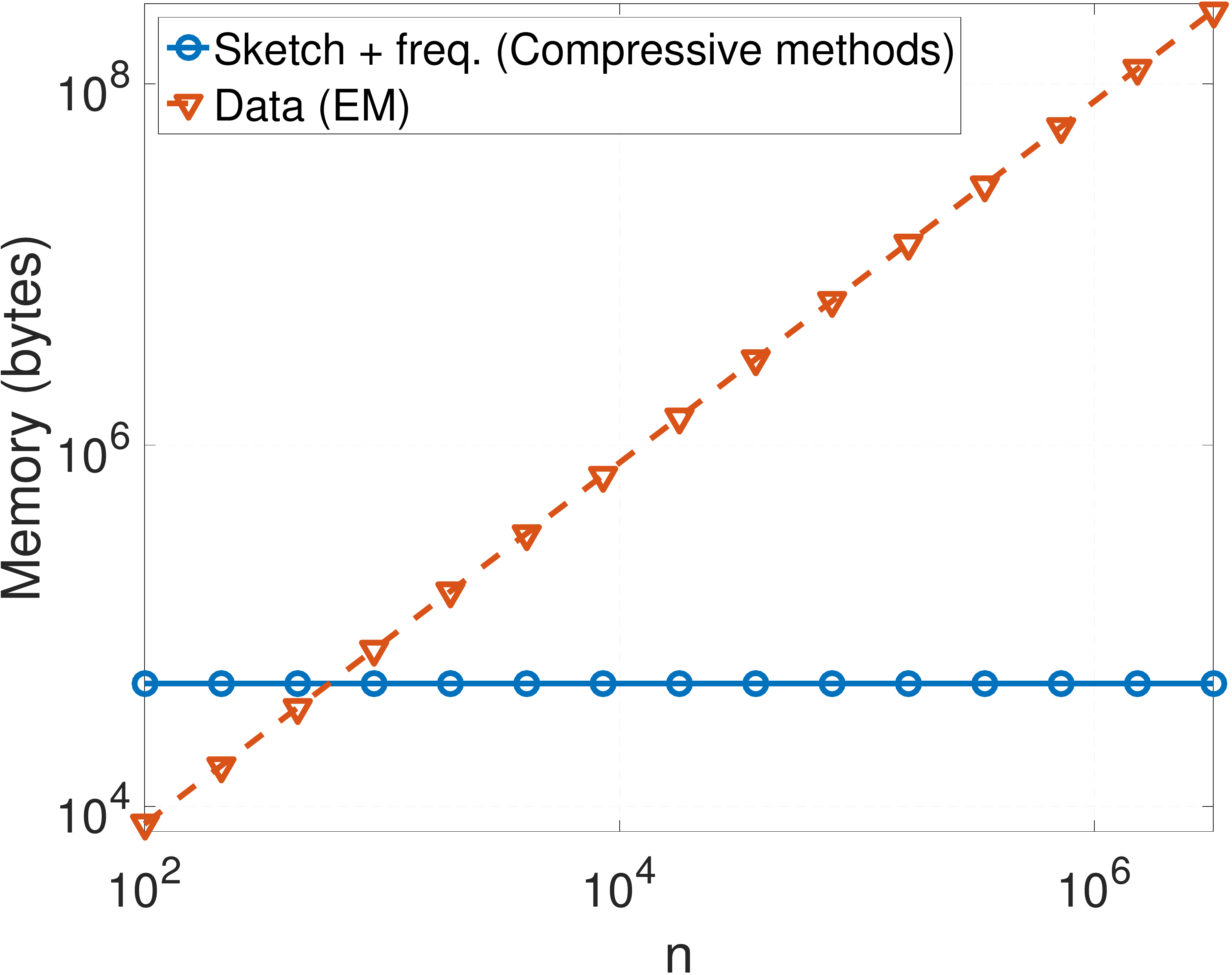}
\includegraphics[height=\hght]{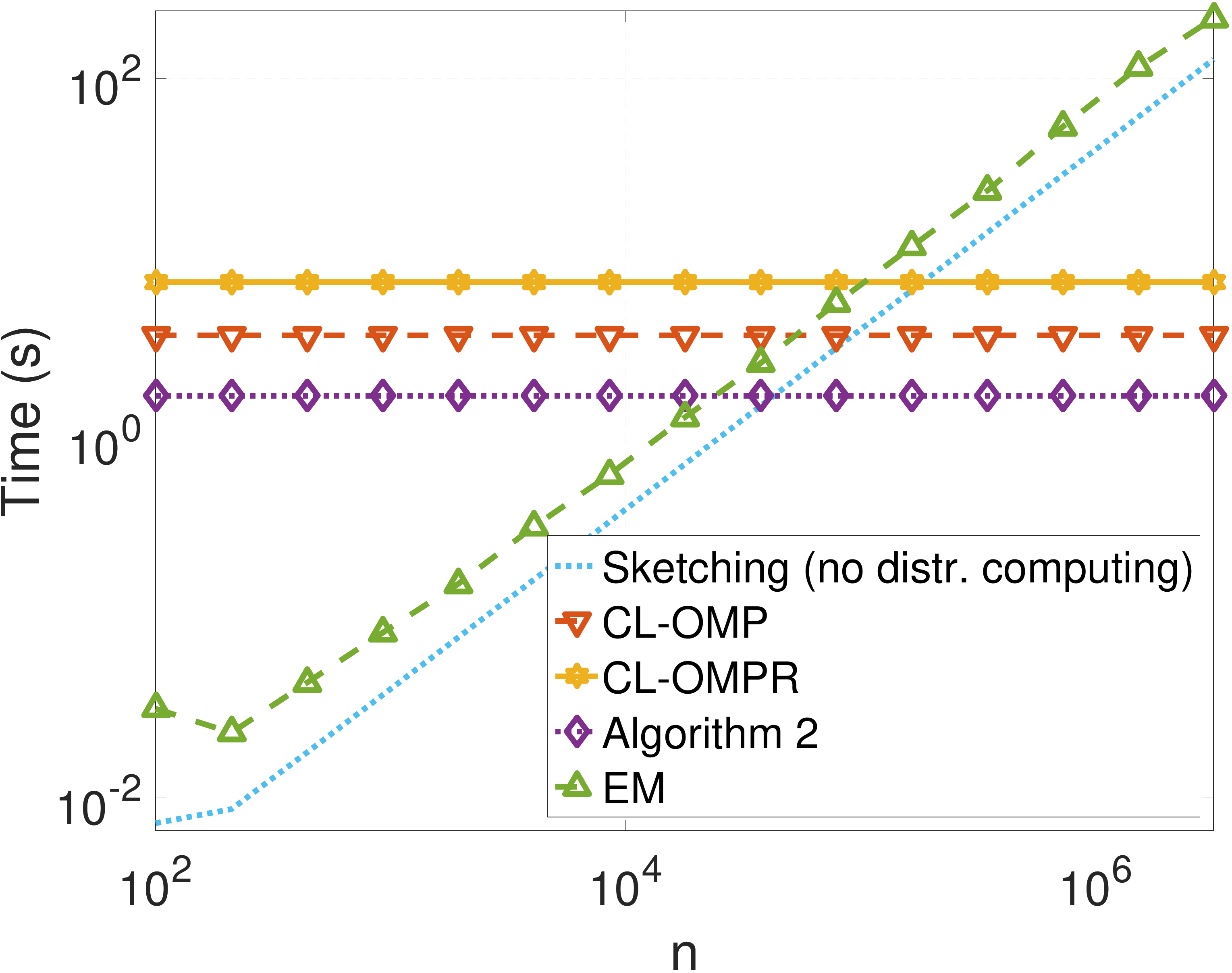}
\caption{Memory (left) and time (right) usage of all algorithms on synthetic data with dimension $n=10$, number of components 
$K=5$, 
and number of frequencies $m=5(2d+1)K$, with respect to the number of items in the database $n$. \review{On the left, ``Sketching'' refers to the time of computing the sketch with a naive, direct computation, which must be added to the computation time of the recovery algorithm (that does not vary with $n$) to obtain the total computation time of the proposed method. However the reader must keep in mind that the sketching step can be massively parallelized, is adapted to streams of data, and so on.}}
\label{fig:timeresult}
\end{figure}

\paragraph{Computation time and memory} In Figure~\ref{fig:timeresult}, computation time and memory usage of the compressive methods and EM are presented with respect to the database size $n$, using an \emph{Intel Core i7-4600U 2.1 GHz} CPU with $8~GB$ of RAM. In terms of time complexity (resp. memory usage), the EM algorithm scales in $\order(dnKT_{EM})$ for a fixed number of iterations $T_{EM}=100$ (resp. $\order(nd)$).
The CL-OMP or CL-OMPR algorithms scale in $\order(mdK^2)$ (resp. $\order(md)$), while Algorithm~\ref{algo:BS-CGMM} scales in $\order(mdK\log K)$ (resp. the same $\order(md)$).

At large $n$ the EM algorithm indeed becomes substantially slower than all compressive methods (Fig \ref{fig:timeresult}, left). We also keep in mind that we compare a MATLAB implementation of the compressive methods with a state-of-the-art C++ implementation of EM
\cite{Vedaldi2010}. Similarly, at large $n$ the compressive algorithms outperform EM by several orders of magnitude in terms of memory usage (Fig \ref{fig:timeresult}, right). 

We discussed the computational cost of the sketching operation in Section \ref{sec:complexity}, and the possible use of parallelization and distributed computing. In our settings, even when it is done linearly this operation is still faster than EM (Figure~\ref{fig:timeresult}, left).



\section{Large-scale proof of concept: speaker verification}\label{sec:speaker_verification}


Gaussians Mixture Models are popular for their capacity to smoothly approximate \emph{any} distribution \cite{Reynolds1995} by a large number of Gaussians. This is often the case with real data, and the problem of \emph{fitting} a large GMM to data drawn from some distribution is somewhat different from that of \emph{clustering} data and identifying reasonably well separated components, as presented in the previous section. 
In order to try out compressive methods on this challenging task, we test them on a speaker verification problem, with a classical approach requiring GMM referred to as Universal Background Model (GMM-UBM) \cite{Reynolds2000}.

\subsection{Overview of Speaker Verification}\label{sec:sv_method}
Given a fragment of speech and a candidate speaker, the goal is to assess if the fragment was indeed spoken by that person. 

We quickly describe GMM-UBM in this section. For more details we refer the reader to the original paper \cite{Reynolds2000}.
Similar to many speech processing tasks, this approach uses Mel Frequency Cepstrum Coefficients (MFCC) and their derivatives ($\Delta$-MFCC) as features $\bfx_{i}$. Those features have been often modeled with GMMs or more advanced Markov models. However, in our framework we do \emph{not} use $\Delta$-MFCC; indeed those coefficients typically have a negligible range in dynamic compared to the MFCC, which results in a difficult and unstable choice of frequencies. \review{As mentioned before, this problem may be potentially solved by a pre-whitening of the data, which we leave for future work. In this configuration, the speaker verification results will indeed not be state-of-the-art, but our goal is mainly to test our compressive approach on a different type of problem than that of clustering synthetic data, for which we have already observed excellent results.}

In the GMM-UBM model, each speaker $S$ is represented by one GMM $(\hyppar_S,\malpha_S)$. The key point is the introduction of a model $(\hyppar_{UBM},\malpha_{UBM})$ that represents a ``generic'' speaker, referred to as Universal Background Model (UBM). Given speech data $\dataset$ and a candidate speaker $S$, the statistic used for hypothesis testing is a likelihood ratio between the speaker and the generic model:
\begin{equation}
T(\dataset)=\frac{\dens_{\hyppar_{S},\malpha_{S}}(\dataset)}{\dens_{\hyppar_{UBM},\malpha_{UBM}}(\dataset)}.
\end{equation}
If $T(\dataset)$ exceeds a threshold $\tau$, the data $\dataset$ are considered as being uttered by the speaker $S$.

The GMMs corresponding to each speaker must somehow be ``comparable'' to each other and to the UBM. Therefore, the UBM is learned \emph{prior} to individual speaker models, using a large database of speech data uterred by many speakers. Then, given training data $\dataset_S$ specific to one speaker, one M-step from the EM algorithm \emph{initialized with the UBM} is used to adapt the UBM and derive the model $(\hyppar_S,\malpha_S)$. We refer the reader to \cite{Reynolds2000} for more details on this procedure.

{\bf In our framework, the EM or compressive estimation algorithms are used to learn the UBM.}

We note that this type of signal processing task may fully benefit from the advantages of the sketch structure described in Section \ref{sec:complexity}. For instance, in practice one can imagine collecting bit by bit the data to train the UBM in a real-life environment, in which case the sketch and the UBM may be progressively updated without having to keep the spoken fragments, possibly of sensitive nature.

\subsection{Setup}\label{sec:sv_setup}

The experiments were performed on the classical NIST05 speaker verification database. Both training and testing fragments are 5-minutes conversations between two speakers. The database contains approximately 650 speakers, and 30\,000 trials.

The MFCCs are computed using the Voicebox toolbox \cite{Brooks2005}. After filtering the audio data by a speech activity detector, the MFCCs are computed on 23ms frames with a $50\%$ overlap. The first coefficient is removed and we obtain 12-dimensional features ($d=12$).

Results are presented by choosing the threshold $\tau$ that yields the same rates of false alarm and missed detection, referred to as Equal Error Rate (EER). Each result is obtained as the mean of five experiments.

In all experiments, except when indicated otherwise, the compressive methods are performed using a sketch obtained by compressing the entire database of $n=2.10^8$ MFCC vectors after voice activity detection. The compression is performed taking advantage of distributed computing, by dividing the database into $200$ parts that are then compressed simultaneously on a computer cluster. Hence, even for a high number of frequencies $m=10^5$ the compression of the $n=2.10^8$ items takes less than an hour.


\subsection{Results}\label{sec:sv_results}

\begin{table}[h!]
\centering
\begin{tabular}{c|c|c||c|c|}
\cline{2-5}
&\multicolumn{2}{|c||}{EER (\%)}& \multicolumn{2}{|c|}{Time (s)} \\ \cline{2-5}
& CL-OMPR & Algorithm~\ref{algo:BS-CGMM} & CL-OMPR & Algorithm~\ref{algo:BS-CGMM} \\ \hline
 \multicolumn{1}{|c|}{$m=10^3$} & $40.3$ & $32.5$ & $7.10^2$ & $5.10$ \\ \hline
 \multicolumn{1}{|c|}{$m=10^4$} & $29.4$ & $29.0$ & $7.10^3$ & $5.10^2$ \\ \hline
 \multicolumn{1}{|c|}{$m=10^5$} & $28.8$ & $28.6$ & $7.10^4$ & $5.10^3$ \\ \hline
\end{tabular}
\caption{Comparison between CL-OMPR and Algorithm~\ref{algo:BS-CGMM} for speaker verification, with $K=64$. \label{tab:ubm_algos}}
\end{table}

\paragraph{Splitting algorithm} In the previous section, Algorithm~\ref{algo:BS-CGMM} was observed to be less accurate than CL-OMPR (Figure~\ref{fig:result}). However, as mentioned before the estimation problem considered here is somehow not to identify well-separated components, but rather to fit a GMM with a large number of components to a smooth probability density. In the first case, on synthetic data, Algorithm~\ref{algo:BS-CGMM} is indeed expected to sometimes yield poor results: unlike a Matching Pursuit-based approach such as CL-OMPR, at each iteration it locally divides the current Gaussians rather than ``exploring'' elsewhere. In the second case however, Algorithm~\ref{algo:BS-CGMM} may yield a correct approximation of the smooth density, by successively approaching it with GMMs at increasingly finer scales.

In Table \ref{tab:ubm_algos}, we compare the results obtained with CL-OMPR and Algorithm~\ref{algo:BS-CGMM} on the speaker verification task using $K=64$ Gaussians in the UBM. Results are indeed similar when the number of frequencies $m$ is large, and even surprisingly better with Algorithm~\ref{algo:BS-CGMM} for a low number of frequencies $m=1000$. Naturally, Algorithm~\ref{algo:BS-CGMM} is much faster than CL-OMPR, with more than a $10$ times speedup.

\paragraph{Sketching a large database} In Table \ref{tab:ubm_size}, we compare EER results when using either $n_1=3.10^5$ items uniformly selected in the database to cover all speakers, or all $n_2=2.10^8$ items in the database. The compressive Algorithm~\ref{algo:BS-CGMM} is performed at both scales, while EM is only performed with $n_1$ items, since the whole database is too large to be handled by the VLFeat toolbox on a machine with $8~GB$ of RAM. For the compressive approach, the use of the entire database indeed improves the results when compared to using only $n_1$ items to compute the sketch. At low $K=8$ or $K=64$ and high number of frequencies $m$, the compressive approach using $n_2$ items outperforms EM using only $n_1$ items.

\begin{table}
\centering
\begin{tabular}{cc|c|c||c|c||c|c|}
\cline{3-8}
&&\multicolumn{2}{c||}{$K=8$}& \multicolumn{2}{c||}{$K=64$}& \multicolumn{2}{c|}{$K=512$} \\ \cline{3-8}
&& $n_1$ & $n_2$ & $n_1$ & $n_2$ & $n_1$ & $n_2$ \\ \hline
\multicolumn{2}{|c|}{EM} & $31.4$ & n/a & $29.5$ & n/a & $27.5$ & n/a \\ \hline\hline
\multicolumn{1}{|c|}{\multirow{3}{*}{Alg.~\ref{algo:BS-CGMM}}} & $m=10^3$ & $32.5$ & $\mathbf{31.2}$ & $31.1$ & $32.5$ & $31.2$ & $29.4$ \\ \cline{2-8}
\multicolumn{1}{|c|}{} & $m=10^4$ & $32.1$ & $\mathbf{30.7}$ & $30.2$ & $\mathbf{29.0}$ & $30.3$ & $29.1$ \\ \cline{2-8}
\multicolumn{1}{|c|}{} & $m=10^5$ & $32.5$ & $\mathbf{30.7}$ & $29.8$ & $\mathbf{28.6}$ & $29.4$ & $29.2$ \\ \hline
\end{tabular}
\caption{Comparison EM and Algorithm~\ref{algo:BS-CGMM} for speaker verification, in terms of EER. For Algorithm \ref{algo:BS-CGMM}, results that outperform these of EM are outlined.\label{tab:ubm_size}}
\end{table}

\paragraph{Limitations due to coherence.} While increasing the number of components $K$ seems to consistently improves the results of EM, it is not the case with the compressive method for a fixed sketch size $m$.
A possible intuitive explanation could be that, by increasing the number of components we also increase the coherence between them -- \ie~the Gaussians in the GMM are increasingly overlapping each other -- which makes it more and more difficult to handle for any sparsity-based approach. In practice, it results in many components in the GMM having weights $\alpha\approx 0$. In other words, the algorithm outputs a $K'$-GMM with $K' < K$: there seems to be a ``limit'' number of components above which additional Gaussians are useless. It may be possible to deal with a higher level of sparsity by drastically increasing the number of frequencies $m$, at the cost of higher compression and estimation times.

\begin{figure}
\centering
\includegraphics[height=\hght]{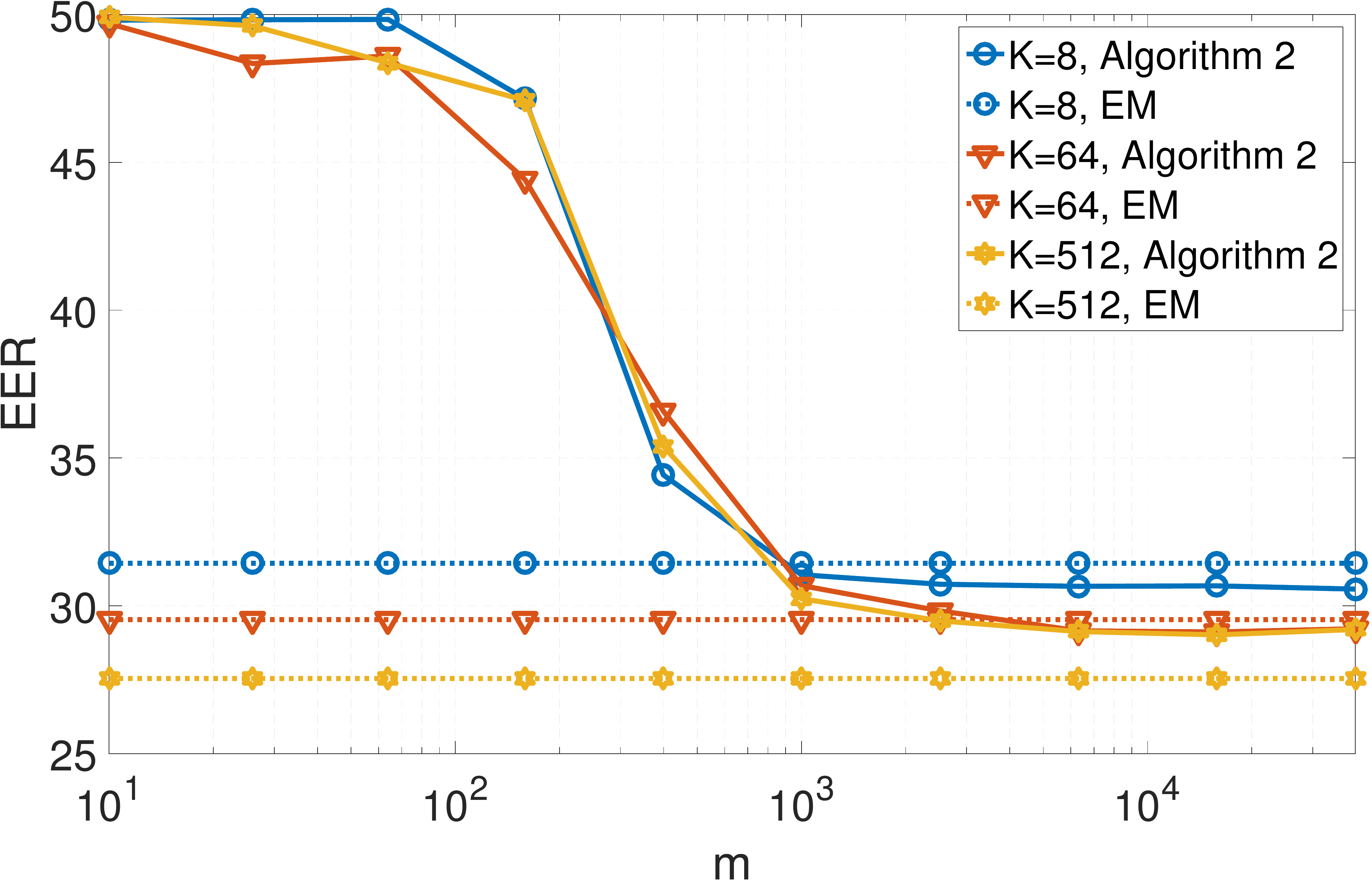}
\caption{Effect of the number of frequencies $m$ on speaker verification results. Algorithm \ref{algo:BS-CGMM} is performed on the whole database with $n_2=2.10^8$ items, while EM can only be performed on $n_1=3.10^5$ items.} \label{fig:ubm_phase}
\end{figure}

\paragraph{Number of components $K$ and compression.} In Figure~\ref{fig:ubm_phase} we study the effect of $m$ for various number of components $K=8$, $64$ and $512$. In each case we observe a sharp phase transition going from an $EER$ of $50 \%$, which corresponds to random guessing, to the results observed in Table \ref{tab:ubm_size}. Somehow surprisingly, this phase transition does not seem to depend on $K$, unlike the one observed on synthetic data (Figure~\ref{fig:phasetrans}). As mentioned before it could be interesting to drastically increase $m$ to see if the gap between results obtained EM and those obtained with Algorithm~\ref{algo:BS-CGMM} can be bridged in the $K=512$ case, however the phase transition pattern does not support this idea but rather a limitation of the method itself, maybe in the algorithmic approach. 

Overall, results on synthetic and real data show that the \emph{fitting} problem is, as expected, more challenging than the \emph{clustering} problem for the proposed sparsity-based approach. Indeed, while the clustering problem (synthetic data) is that of identifying well-separated components of a sparse distribution, the fitting problem is similar to a \emph{sparse approximation} task, which is known to be challenging when the ``signal'' (\ie~the true distribution of the data) is \emph{not} sparse. Nevertheless, let us point out that in Figure~\ref{fig:ubm_phase}, results approaching those of EM are obtained for $m=3000$ frequencies only, which corresponds to a whopping $33000$-fold compression of the database. 


\section{Information preservation guarantees ?}\label{sec:theory}
In this section we derive a number of information preservation guarantees of the proposed sketching operator. Let us come back to the ``generic'' compressive learning inverse problem introduced in Section \ref{sec:intro}:
\begin{equation}\label{eq:problem}
\argmin{\pp\in \Sigma}~\neucl{\hat{\bz}-\skop \pp},
\end{equation}
where $\model$ is some ``low-dimensional'' model. Compressive estimation algorithms such as CL-OMP(R) (Algorithm~\ref{algo:CL-OMP}) or the more scalable Algorithm~\ref{algo:BS-CGMM} (specifically designed for GMM) seek an approximate solution to this problem in the case $\model=\gaussmix{K}$, with $\gaussmix{K}$ the set of $K$-sparse distributions in $\gaussset$.

Precise recovery guarantees for CL-OMP(R) or Algorithm~\ref{algo:BS-CGMM} are beyond the scope of this paper due to the random nature of several steps in these algorithms and the many non-convex optimization schemes that they contain. Instead, we rather demonstrated empirically in Sections~\ref{sec:results} and~\ref{sec:speaker_verification} that these algorithms perform well on a large range of GMM estimation problems, with synthetic and real data.

In parallel, a fundamental question consists in asking if the problem is well-posed, \ie~if a potential solution of \eqref{eq:problem} is somehow guaranteed to be a ``good'' estimate of the distribution of the data. Namely, we ask the following questions:

\begin{itemize}
\item For a distribution $\pp_\model \in \model$, does the sketch $\bfz=\skop \pp_{\model}$ contain ``enough'' information to retrieve the distribution $\pp_\model$ ?
\item Is this retrieval robust to using the empirical sketch $\hat\bfz$ instead of the true sketch $\bfz$?
\item Is this retrieval robust if $\bfz = \skop \pp$ where the encoded distribution $\pp$ is not \emph{exactly} in the model $\model$ but only {\em well-approximated} by a distribution in the model (\ie~the distance $d(\pp,\model)=\inf_{\pp_\model \in \model}d(\pp,\pp_\model)$ is small for some metric $d$) ?
\end{itemize}

The answers to these questions are linked to the existence of a so-called \emph{instance-optimal} decoder \cite{Cohen2009,Bourrier2014}, \ie~a (not necessarily tractable) reconstruction paradigm that is robust to noise and modeling error. In this paper, we show that with high probability
the decoder induced by solving \eqref{eq:problem} is instance optimal with respect to the Maximum Mean Discrepancy (MMD) metric \cite{Sutherland2015} (Eq. \eqref{eq:chrcdiff}), provided the model and frequency distribution satisfy some assumptions. We then prove this result for GMMs with bounded parameters, starting with the toy example of single Gaussians ($K=1$).

\review{The proof of our main Theorem (Theorem \ref{thm:appli}), given in Appendix \ref{sec:proof}, introduces variants of usual tools in compressive sensing.
The idea is to use the fact that a Lower Restricted Isometry Property (LRIP) induces the existence of a robust instance optimal decoder, as shown by Bourrier \etal~\cite{Bourrier2014}. To prove the LRIP for the measurement operator $\skop$, we use the fact that the empirical mean $\|\skop \pp - \skop \qq\|_2^2$ concentrates around its expectation $\normKsq{\pp}{\qq}$, and use $\epsilon$-coverings to extend this concentration result uniformly over the whole model -- this method is similar to the ``simple proof'' of the RIP developed by Baraniuk \etal~\cite{Baraniuk2008}.

As we will see, the results are still in a preliminary state and suboptimal in some cases. However, the use of compressive sensing tools in the context of kernel mean embedding and Random Features is an original and promising lead for future work, as is the introduction of guarantees such as robustness to modeling error for a Generalized Method of Moments problem. They can be seen as our main theoretical contribution.}


\subsection{Existence of instance optimal decoders for sketched distributions}\label{sec:thm}

In this section we formulate a general result that guarantees robust decoding of any model $\model$ (not necessarily restricted to mixture models) under some hypotheses.
The reader should refer to Appendix \ref{sec:def} for definitions related to $\epsilon$-coverings.

\begin{assumption}{$\mathbf{A_1}(\model)$}[Compactness of the model]\label{assum:compact}
The model $\model$ is compact with respect to the total variation norm $\|.\|_{TV}$. In particular, it implies that it has finite covering numbers $N_{\model,\|.\|_{TV}}(\epsilon) < \infty$.
\end{assumption}

In the context of mixture models, the following Lemma shows that compactness of any basic set of distributions $\gaussset$ extends to its set of $K$-sparse distributions $\gaussmix{K}$, and that their covering numbers are related. Its proof is given in Appendix \ref{sec:proofs_mixture}.

\begin{lemma}\label{lem:covnummix}
Suppose the set of basic distributions $\gaussset$ is compact with respect to some norm $\|.\|$, denote $C=\max_{\pp\in\gaussset}\|\pp\|$. Then for all $K$ the set of $K$-sparse distributions $\gaussmix{K}$ is also compact and satisfies, for all $\epsilon>0$ and $0<\tau<1$,
\begin{equation}
\label{eq:covnummix}
N_{\gaussmix{K},\|.\|}(\epsilon) \leq \left(\frac{8C \cdot N_{\gaussset,\|.\|}(\tau\epsilon)}{(1-\tau)\epsilon}\right)^K\,.
\end{equation}
\end{lemma}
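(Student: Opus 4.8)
The plan is to bound the covering number of $\gaussmix{K}$ by combining two ingredients: a covering of the parameter-like ``support'' (the choice of $K$ basic distributions from $\gaussset$) and a covering of the weight simplex $\Delta_K = \{\malpha \in \mathbb{R}^K : \alpha_k \geq 0, \sum_k \alpha_k = 1\}$. Any $K$-sparse distribution is $\pp_{\hyppar,\malpha} = \sumk \alpha_k \pp_{\mtheta_k}$, so once I fix a good approximation of each component $\pp_{\mtheta_k}$ and a good approximation of the weight vector $\malpha$, the triangle inequality should control $\|\pp_{\hyppar,\malpha} - \pp'_{\hyppar',\malpha'}\|$. The parameter $\tau \in (0,1)$ is the standard device for splitting the target accuracy $\epsilon$ between these two error sources: I would allocate $\tau\epsilon$ to the error in approximating each component and $(1-\tau)\epsilon$ to the error in approximating the weights.

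First I would fix an $\tau\epsilon$-covering of $\gaussset$ of cardinality $N_{\gaussset,\|.\|}(\tau\epsilon)$, and a suitable covering of the simplex $\Delta_K$. For the weight part, the key estimate is the decomposition
\begin{equation}
\left\|\sumk \alpha_k \pp_{\mtheta_k} - \sumk \alpha'_k \pp_{\mtheta'_k}\right\| \leq \sumk \alpha_k \|\pp_{\mtheta_k} - \pp_{\mtheta'_k}\| + \left\|\sumk (\alpha_k - \alpha'_k)\pp_{\mtheta'_k}\right\|,
\end{equation}
where the first term is bounded by $\tau\epsilon$ using $\sum_k \alpha_k = 1$ and the component covering, and the second term is bounded by $C\sum_k |\alpha_k - \alpha'_k| = C\|\malpha - \malpha'\|_1$ using $\|\pp_{\mtheta'_k}\| \leq C$. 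Thus it suffices to cover $\Delta_K$ to accuracy $(1-\tau)\epsilon/C$ in the $\ell_1$ metric. A standard volumetric argument gives an $\ell_1$-covering of $\Delta_K$ of cardinality at most $\left(\tfrac{3C}{(1-\tau)\epsilon}\right)^K$ or a comparable bound; choosing $K$ components from the covering of $\gaussset$ contributes a factor $N_{\gaussset,\|.\|}(\tau\epsilon)^K$. Multiplying these and absorbing the numerical constants into the factor $8$ yields the claimed bound \eqref{eq:covnummix}. Compactness of $\gaussmix{K}$ follows since it is the continuous image of the compact set $\gaussset^K \times \Delta_K$ under the map $(\mtheta_1,\dots,\mtheta_K,\malpha) \mapsto \sumk \alpha_k \pp_{\mtheta_k}$, which is continuous for the norm $\|.\|$.

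The main obstacle, and the step requiring the most care, is getting the correct covering number for the simplex $\Delta_K$ in the $\ell_1$ metric with the right constant so that everything collapses into the single constant $8$. The volumetric bound must be stated cleanly (the simplex sits in a $(K-1)$-dimensional affine subspace, so one should cover it relative to that subspace rather than the ambient $\mathbb{R}^K$), and one must verify the inequality holds for all $0 < \tau < 1$ and all $\epsilon > 0$ uniformly, checking that the crude constant $8$ dominates the precise combinatorial and volumetric constants. A secondary subtlety is that the representation $(\hyppar,\malpha)$ of a $K$-sparse distribution need not be unique (as flagged in the footnote earlier in the excerpt), but this causes no difficulty for an upper bound on covering numbers: I only need that \emph{every} element of $\gaussmix{K}$ admits \emph{some} representation that is close to a net point, so over-counting representations is harmless.
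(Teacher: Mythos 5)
Your proposal is correct and takes essentially the same route as the paper's proof: the same $\tau$-split of the accuracy between components and weights, the same product net (a $\tau\epsilon$-net of $\gaussset$ raised to the power $K$, times an $\ell_1$-net of the weight simplex at scale $(1-\tau)\epsilon/C$), the same triangle-inequality estimate $\|\pp_{\hyppar,\malpha}-\pp_{\bar\hyppar,\bar\malpha}\| \leq \sumk \alpha_k\|\pp_{\mtheta_k}-\pp_{\bar\mtheta_k}\| + C\|\malpha-\bar\malpha\|_1$, and the same compactness argument via the continuous surjection from $\gaussset^K \times \{\malpha \geq 0, \sumk\alpha_k = 1\}$. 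The one step you leave open---the simplex covering constant---is settled in the paper not volumetrically but by including the simplex in the unit $\ell_1$-ball of $\mathbb{R}^K$ and combining the internal-covering inclusion lemma (which halves the radius) with the $\left(4R/\epsilon\right)^K$ ball bound, which is precisely where the factor $8$ comes from.
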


Note that, in the case where $\|.\|=\|.\|_{TV}$ as in Assumption \ref{assum:compact}, we have $C=1$.

The second assumption 
involves the model $\model$, the frequency distribution $\freqdist$, a small nonnegative constant $\eta\geq 0$ and a constant $\cstdom>0$.

\begin{assumption}{$\mathbf{A_2}(\eta,\model,\freqdist,\cstdom)$}[Domination of the total variation norm]\label{assum:domin}
For all $\pp_1,\pp_2 \in \model$, we have
\begin{equation}
\label{eq:domination}
\Big[\normK{\pp_1}{\pp_2}\geq \eta\Big] \Rightarrow \Big[\|\pp_1-\pp_2\|_{TV}\leq \cstdom \normK{\pp_1}{\pp_2}\Big],
\end{equation}
where $\normKs$ is the MMD \eqref{eq:chrcdiff}.
\end{assumption}
Note that, since $\|\pp_1-\pp_2\|_{TV} \leq 2$ for all measures, if $\eta>0$ Assumption \ref{assum:domin} is always verified with $\cstdom = 2/\eta$.

\paragraph{``Ideal'' decoder.} Given any sketch $\bfz \in \mathbb{C}^m$ and measurement operator $\skop$, for all $\pp_1,\pp_2 \in \pmeas$ we have
\begin{equation}\label{eq:continuityfordecoder}
\Big| \|\bfz-\skop \pp_1\|_2-\|\bfz-\skop \pp_2\|_2 \Big| \leq \|\skop (\pp_1-\pp_2)\|_2 \leq \|\pp_1-\pp_2\|_{TV},
\end{equation}
by Lemma \ref{lem:bounds} in Appendix \ref{sec:def_meas}. Hence the function $\pp \in \model \mapsto \|\bfz-\skop \pp\|_2$ is continuous with respect to the total variation norm. If Assumption \ref{assum:compact} is satisfied and the model $\model$ is compact, this function reaches its minimum on it, which implies that the problem \eqref{eq:problem} has at least one solution.

In light of this observation, under Assumption \ref{assum:compact}, we analyze below the information-theoretic estimation guarantees of the idealized decoder $\decod$ that minimizes \eqref{eq:problem}, \ie~return a distribution verifying:
\begin{equation}\label{eq:decod}
\decod(\bfz,\skop) \in \argmin{\pp \in \model}\|\bfz-\skop \pp\|_2.
\end{equation}

We now turn to the main result of this section.

\begin{theorem}\label{thm:appli}
Consider a model $\model$, a frequency distribution $\freqdist$, a small positive constant $1 \geq \eta > 0$ and a constant $\cstdom\geq 1$ such that Assumptions \ref{assum:compact} and \ref{assum:domin} hold.

Let $\bfx_i \in \mathbb{R}^d$, $i=1...n$ be $n$ points drawn $i.i.d.$ from an arbitrary distribution $\pp^*\in\pmeas$, and $\freq_{j} \in \mathbb{R}^d$, $j=1...m$ be $m$ frequencies drawn $i.i.d.$ from $\freqdist$. Denote $\bar\pp=\decod(\hat\bfz,\skop)$ the distribution reconstructed from the empirical sketch $\hat\bfz$.

Let $\rho>0$. Suppose that
\begin{equation}\label{eq:thmmbound}
m\geq 12\cstdom^2\log\left(\tfrac{2}{\rho} \cdot N_{\model,\|.\|_{TV}}\left(\tfrac{\eta^2}{24}\right)\right).
\end{equation}

Then, with probability at least $1-\rho$ on the drawing of the items $\bfx_i$ \underline{and} sampling frequencies $\freq_{j}$, we have
\begin{equation}
\normK{\pp^*}{\bar\pp}\leq 5\ d_{TV}(\pp^*,\model)+\tfrac{4(1+\sqrt{2\log(2/\rho)})}{\sqrt{n}}+\eta,
\end{equation}
where $\normKs$ is the MMD \eqref{eq:chrcdiff} and $d_{TV}(\pp^*,\model)=\inf_{\pp \in \model} \|\pp^*-\pp\|_{TV}$ is the distance from $\pp^*$ to the model.
\end{theorem}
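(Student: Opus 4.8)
The plan is to read the claim as an instance-optimality statement and derive it from a Lower Restricted Isometry Property (LRIP) for $\skop$ with respect to the MMD $\normKs$, in the spirit of Bourrier \etal~\cite{Bourrier2014}. I would fix a total-variation near-minimizer $\pp_\model \in \argmin{\pp\in\model}\|\pp^*-\pp\|_{TV}$ (which exists by Assumption~\ref{assum:compact} together with the continuity in \eqref{eq:continuityfordecoder}), so that $\|\pp^*-\pp_\model\|_{TV}=d_{TV}(\pp^*,\model)$, and split the target by the triangle inequality
\begin{equation*}
\normK{\pp^*}{\bar\pp}\leq \normK{\pp^*}{\pp_\model}+\normK{\pp_\model}{\bar\pp}.
\end{equation*}
The first term is bounded by $d_{TV}(\pp^*,\model)$ using $\normKs\leq\|.\|_{TV}$ (Lemma~\ref{lem:bounds}, valid since $\freqdist$ is a probability measure). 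For the second term I would distinguish the harmless regime $\normK{\pp_\model}{\bar\pp}<\eta$ (which contributes the final additive $\eta$) from $\normK{\pp_\model}{\bar\pp}\geq\eta$, where the LRIP applies.

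In the latter regime the LRIP, which I will establish with constant $2$, gives $\normK{\pp_\model}{\bar\pp}\leq 2\,\neucl{\skop\pp_\model-\skop\bar\pp}$. Since $\bar\pp=\decod(\hat\bfz,\skop)$ minimizes $\neucl{\hat\bfz-\skop\pp}$ over $\model\ni\pp_\model$, I would bound $\neucl{\skop\pp_\model-\skop\bar\pp}\leq 2\neucl{\hat\bfz-\skop\pp_\model}\leq 2\big(\neucl{\skop(\hat\pp-\pp^*)}+d_{TV}(\pp^*,\model)\big)$, again via Lemma~\ref{lem:bounds}. Collecting terms yields $\normK{\pp^*}{\bar\pp}\leq 5\,d_{TV}(\pp^*,\model)+4\,\neucl{\skop(\hat\pp-\pp^*)}+\eta$, so it remains to control the empirical sketch error $\neucl{\skop(\hat\pp-\pp^*)}$ and to prove the LRIP, each on a high-probability event of mass $1-\rho/2$.

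For the empirical sketch error I would observe that each coordinate of $\skop(\hat\pp-\pp^*)$ is a centered average of unit-modulus terms $e^{-\imaginaryi\freq_j^T\bfx_i}$, whence $\expect\neucl{\skop(\hat\pp-\pp^*)}^2\leq \tfrac1n$ and $\expect\neucl{\skop(\hat\pp-\pp^*)}\leq\tfrac1{\sqm}$ by Jensen; a bounded-difference (McDiarmid) argument — changing one $\bfx_i$ moves the quantity by at most $2/n$ — then gives $\neucl{\skop(\hat\pp-\pp^*)}\leq \tfrac{1+\sqrt{2\log(2/\rho)}}{\sqrt n}$ with probability $\geq 1-\rho/2$ over the data, producing exactly the $1/\sqrt n$ term. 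The LRIP is the core: for the fixed anchor $\pp_\model$ I would show that, with probability $\geq 1-\rho/2$ over the frequencies, $\normK{\pp}{\pp_\model}\leq 2\neucl{\skop\pp-\skop\pp_\model}$ for every $\pp\in\model$ with $\normK{\pp}{\pp_\model}\geq\eta$. I would first prove a pointwise concentration of $\neucl{\skop\pp-\skop\pp_\model}^2=\tfrac1m\sum_j|\chrc_\pp(\freq_j)-\chrc_{\pp_\model}(\freq_j)|^2$ around $\normKsq{\pp}{\pp_\model}$; crucially, since $|\chrc_\pp-\chrc_{\pp_\model}|^2\leq\|\pp-\pp_\model\|_{TV}^2$ while Assumption~\ref{assum:domin} gives $\|\pp-\pp_\model\|_{TV}^2\leq\cstdom^2\normKsq{\pp}{\pp_\model}$, the \emph{normalized} summands have mean $\geq 1/\cstdom^2$, so a multiplicative (Bernstein/Chernoff-type) lower-tail bound yields a failure probability $\exp(-c\,m/\cstdom^2)$ — this is where the factor $\cstdom^2$ in \eqref{eq:thmmbound} comes from. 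I would then make this uniform over $\model$ by covering it in total variation at radius $\eta^2/24$ (finite by Assumption~\ref{assum:compact}, with cardinality $N_{\model,\|.\|_{TV}}(\eta^2/24)$; cf.\ Lemma~\ref{lem:covnummix} for the mixture instantiation), union-bounding over the net, and extending to all of $\model$ through the $\|.\|_{TV}$-Lipschitz continuity of both $\neucl{\skop\cdot}$ and $\normKs$; the radius $\eta^2/24$ is chosen precisely so that the net-extension slack stays below the $\eta^2$ scale, which is absorbable since the LRIP is only claimed for pairs with MMD $\geq\eta$. Intersecting the two events and combining the two regimes gives the stated bound.

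The main obstacle is the LRIP, and specifically securing a \emph{proportional} (not merely additive) lower bound of $\neucl{\skop\cdot}$ by the MMD, uniformly over the infinite, continuously-indexed model, while keeping only a logarithmic dependence on $\eta$ through the covering number. The three ingredients must be balanced carefully: the multiplicative concentration (so that the bound remains meaningful when the MMD is small but $\geq\eta$, and so that the sample cost scales as $\cstdom^2$ rather than $\cstdom^4$), Assumption~\ref{assum:domin} (to transfer between the $\|.\|_{TV}$ in which the model can be covered and the MMD in which the sketch measures), and the truncation at scale $\eta$ (which discards pairs of vanishing MMD where no proportional control can hold and which yields the residual $+\eta$). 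Making the constants line up to the clean $5$, $4$ and $12\cstdom^2$ is then careful but routine bookkeeping of the Bernstein constant, the net radius, and the $\rho/2$ budget split, along the lines of the ``simple proof'' of the RIP by Baraniuk \etal~\cite{Baraniuk2008}.
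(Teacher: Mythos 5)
Your proposal is correct and follows essentially the same route as the paper's own proof: the same decomposition anchored at the total-variation projection yielding $\normK{\pp^*}{\bar\pp}\leq 5\,d_{TV}(\pp^*,\model)+4\neucl{\skop\hat\pp-\skop\pp^*}+\eta$ via decoder optimality and Lemma~\ref{lem:bounds}, the same McDiarmid-type bound $(1+\sqrt{2\log(2/\rho)})/\sqrt{n}$ on the empirical sketch error (the paper invokes Lemma~\ref{lem:rahimi}, itself a McDiarmid consequence), and the same semi-uniform LRIP with constant $2$ obtained by Bernstein concentration (range and variance controlled by $\cstdom^2$ through Assumption~\ref{assum:domin}, giving the $12\cstdom^2$) plus a covering argument at radius $\eta^2/24$ in the spirit of Baraniuk \emph{et al.}, with the two $\rho/2$ events intersected by a Fubini/union-bound step. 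The only organizational difference is that the paper covers the \emph{restricted normalized secant set} $\secant^\eta_{\pproj}$ — whose internal covering centers satisfy $\normK{\pproj}{\qq}>\eta$ by construction, so Assumption~\ref{assum:domin} applies at each of them, and whose $\eta^{-2}$-Lipschitz normalization map (Lemma~\ref{lem:covnumsecant}) is exactly where the $\eta^2$ in the net radius comes from — whereas you cover $\model$ directly and extend by Lipschitz continuity, which is equivalent in substance but requires a small constant adjustment, since the net point nearest to a $\pp$ with $\normK{\pp}{\pproj}\geq\eta$ may itself have MMD slightly below $\eta$, where the stated form of Assumption~\ref{assum:domin} does not directly license the Bernstein range bound.
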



Hence the MMD between the distribution in the model recovered from the empirical sketch and the original distribution $\pp^*$ is controlled by the distance (measured by the total variation norm) between $\pp^*$ and the model, and a small additive error. This proves that the decoding is robust both to the use of the empirical sketch instead of the true sketch, and to the fact that $\pp^*$ may not be exactly in the model. \review{The choice of the kernel for the MMD is crucial to obtain meaningful guarantees. It somehow justifies the general strategy of choosing a kernel that maximizes the discriminative power of the MMD \cite{Sriperumbudur2009}, which as mentioned earlier is the idea behind the proposed Adapted Radius heuristic. Further work will examine relationship between the MMD and other classic metrics such as likelihood or KL-divergence \cite{Reddi2015}.}

\paragraph{Limitations and future work.} Theorem \ref{thm:appli} does not hold for an additive error $\eta=0$, which would be more akin to usual compressive sensing. In Appendix \ref{sec:thmbis}, we give a version of the Theorem under a different set of Hypotheses $\mathbf{H_i}$ that include the $\eta=0$ case. However, exhibiting a model $\model$ and frequency distribution $\freqdist$ that satisfy those hypotheses in the $\eta=0$ case is yet to be done.

The control of the MMD $\normK{\pp^*}{\bar\pp}$ with the distance $d_{TV}(\pp^*,\model)$ is not optimal -- ideally we would like to have the same metric on both sides of the inequality. 
In Appendix \ref{sec:proof}, we formulate all results for a general metric $d$ under some assumptions, then specialize in the case $d=\|.\|_{TV}$ allowing for possible future generalizations.

We now apply Theorem \ref{thm:appli} to GMMs, starting with the case $K=1$ as a toy example.

\subsection{Applications to single Gaussians (toy example)}
We consider the case $K=1$ where the model $\model$ is formed by single Gaussians as a toy example\footnote{Obviously, fitting a single Gaussian to a dataset can easily be done by direct empirical estimators.}, \ie~$\model=\gaussset$. We show that when the set $\thetaset$ of parameters of the Gaussians is compact, Assumption \ref{assum:compact} is verified with an explicit bound on the covering numbers of the model. Additionally, when the frequency distribution is Gaussian, Assumption \ref{assum:domin} is verified with a bound $\cstdom$ that does not depend on the additive error $\eta$. \review{Using the Gaussian kernel has the advantage of yielding closed-form expressions for the mean kernel, which at this point does not seem to be feasible with the proposed Adapted Radius distribution.}

Recall the parametrization $\mtheta=[\mmu,\msigma]$ with $\mmu\in \mathbb{R}^d$ and $\msigma\in\left(\mathbb{R}_+\setminus\{0\}\right)^d$ of the set $\gaussset=\left\lbrace \pp_\mtheta\right\rbrace_{\mtheta \in \thetaset}$ of Gaussians with diagonal covariance. We suppose here that the set of parameters $\thetaset \subset \mathbb{R}^{2d}$ is compact, \ie~closed and bounded. In particular, the variances of the considered Gaussians are bounded, and we denote $\sigma^2_{\min}:=\min_{[\mmu,\msigma] \in \thetaset}\min_{\ell} \sigma^2_{\ell} >0$ (resp. $\sigma^2_{\max}:=\max_{[\mmu,\msigma] \in \thetaset}\max_{\ell} \sigma^2_{\ell}<\infty$) their minimum (resp. maximum) value. We also define $M:=\max_{[\mmu,\msigma]} \|\mmu\|_2$, and finally we denote $\rad{\thetaset}:=\min\{r>0;\exists \bfx \in \mathbb{R}^{2d}, \thetaset \subset B(\bfx,r)\}$ the \emph{Chebyshev radius} \cite{ChebyshevRadius2016} of $\thetaset$, \ie~the minimal radius $r$ such that $\thetaset$ is contained in a ball of radius $r$ for the Euclidean norm.

Note that our framework is significantly different from many other works \cite{Smola2007,Gretton2006,Kanagawa2014} which provide guarantees when the \emph{support} of the distributions is compact, while here the Gaussian densities have \emph{infinite support} but their \emph{parameters} belong to a compact set, which is a far more realistic setting.


\begin{theorem}\label{thm:covnum}
Suppose the set of parameters $\thetaset \subset \mathbb{R}^{2d}$ is compact. Then the set of Gaussians $\gaussset=\left\lbrace \pp_\mtheta\right\rbrace_{\mtheta \in \thetaset}$ is compact.
Furthermore, for all $\epsilon>0$, we have
\begin{equation}
\label{eq:covnum_g}
N_{\gaussset,\|.\|_{TV}}(\epsilon)\leq \left(\frac{B}{\epsilon}\right)^{2d},
\end{equation}
where $B:=8\max\left(\sigma_{\min}^{-1},\sigma_{\min}^{-2}/\sqrt{2}\right)\rad{\thetaset}$.
\end{theorem}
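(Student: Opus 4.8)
The plan is to reduce the covering of $\gaussset$ in total variation to the covering of the parameter set $\thetaset$ in Euclidean norm, by showing that the parametrization map $\mtheta \mapsto \pp_\mtheta$ is Lipschitz from $(\thetaset,\neucl{\cdot})$ into $(\gaussset,\|\cdot\|_{TV})$ with constant $L := \max(\sigma_{\min}^{-1},\sigma_{\min}^{-2}/\sqrt{2})$ -- precisely the $\max$ appearing in $B$. Once this is established, compactness of $\gaussset$ is immediate, since it is the continuous image of the compact set $\thetaset$. For the covering bound, any $\delta$-cover $\{\mtheta^{(i)}\}$ of $\thetaset$ (Euclidean) yields, through the Lipschitz property, an $L\delta$-cover $\{\pp_{\mtheta^{(i)}}\}$ of $\gaussset$ in $\|\cdot\|_{TV}$; taking $\delta=\epsilon/L$ gives $N_{\gaussset,\|\cdot\|_{TV}}(\epsilon)\le N_{\thetaset,\|\cdot\|_2}(\epsilon/L)$. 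Since $\thetaset$ lies inside a Euclidean ball of radius $\rad{\thetaset}$, the standard volumetric bound $N_{\thetaset,\|\cdot\|_2}(\delta)\le(1+2\rad{\thetaset}/\delta)^{2d}\le(8\rad{\thetaset}/\delta)^{2d}$ (valid in the meaningful regime $\delta\le 6\rad{\thetaset}$) turns this into $(8L\rad{\thetaset}/\epsilon)^{2d}=(B/\epsilon)^{2d}$.

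The crux is the Lipschitz estimate, and the delicate point is getting a constant \emph{independent of the dimension} $d$. A naive bound -- splitting $\|\pp_{\mtheta_1}-\pp_{\mtheta_2}\|_{TV}$ across the $d$ coordinates using subadditivity of total variation for product measures and then Cauchy--Schwarz -- would lose a spurious factor $\sqrt{d}$. To avoid this, I would integrate along the segment $\mtheta(s)=(1-s)\mtheta_1+s\,\mtheta_2$: writing $\dens_{\mtheta_1}-\dens_{\mtheta_2}=\int_0^1\frac{d}{ds}\dens_{\mtheta(s)}\,ds$ and using $\|\cdot\|_{L^1}=\|\cdot\|_{TV}$, it suffices to bound $\|\frac{d}{ds}\dens_{\mtheta(s)}\|_{L^1}$ uniformly in $s$. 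The directional derivative equals $\dens_{\mtheta(s)}\cdot W_s$, where $W_s$ is the score in the direction $\dot\mtheta=\mtheta_2-\mtheta_1$, so $\|\frac{d}{ds}\dens_{\mtheta(s)}\|_{L^1}=\expect_{\bfx\sim\pp_{\mtheta(s)}}|W_s|\le\sqrt{\expect|W_s|^2}=\sqrt{\mathrm{Var}(W_s)}$, the last equality holding because $W_s$ has zero mean.

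I would then compute this variance explicitly. Substituting the standardized variables $Z_\ell=(x_\ell-\mu_\ell)/\sigma_\ell$ (i.i.d.\ standard normal under $\pp_\mtheta$), the mean-part and the variance-part of the score decouple -- the cross terms vanish since $\expect[Z(Z^2-1)]=0$ and distinct coordinates are independent -- which yields
\[
\mathrm{Var}(W_s)=\sum_{\ell=1}^d \frac{(\dot\mmu)_\ell^2}{\sigma_\ell(s)^2}+\sum_{\ell=1}^d\frac{(\dot\msigma)_\ell^2}{2\,\sigma_\ell(s)^4}.
\]
Crucially, along the segment each variance coordinate $\sigma_\ell(s)^2=(1-s)\sigma_{1,\ell}^2+s\,\sigma_{2,\ell}^2$ is a convex combination of two values $\ge\sigma_{\min}^2$, hence stays $\ge\sigma_{\min}^2$ \emph{even if $\thetaset$ is not convex}; this lets me bound $\mathrm{Var}(W_s)\le\sigma_{\min}^{-2}\neucl{\dot\mmu}^2+\tfrac12\sigma_{\min}^{-4}\neucl{\dot\msigma}^2\le L^2\neucl{\dot\mtheta}^2$. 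Taking square roots and integrating over $s\in[0,1]$ gives $\|\pp_{\mtheta_1}-\pp_{\mtheta_2}\|_{TV}\le L\,\neucl{\mtheta_1-\mtheta_2}$, closing the argument.

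The main obstacle is exactly this dimension-free variance computation: replacing coordinatewise triangle inequalities by the second-moment control $\expect|W_s|\le\sqrt{\mathrm{Var}(W_s)}$ is what keeps the Lipschitz constant free of any $\sqrt{d}$ factor, and one must also verify that the variance lower bound $\sigma_{\min}^2$ persists along the interpolation path. The remaining ingredients -- continuity implying compactness of the image, and the volumetric covering bound for a Euclidean ball -- are routine.
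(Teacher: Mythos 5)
Your proof is correct, and its outer skeleton is exactly the paper's: compactness of $\gaussset$ as the continuous image of the compact set $\thetaset$, the reduction $N_{\gaussset,\|.\|_{TV}}(\epsilon)\leq N_{\thetaset,\|.\|_2}(\epsilon/L)$ via the Lipschitz parametrization, and a volumetric covering bound on a Euclidean ball of radius $\rad{\thetaset}$, with the same $L=\max\left(\sigma_{\min}^{-1},\sigma_{\min}^{-2}/\sqrt{2}\right)$ and the same $B=8L\,\rad{\thetaset}$. Where you genuinely diverge is in the crux, the Lipschitz estimate $\|\pp_{\mtheta_1}-\pp_{\mtheta_2}\|_{TV}\leq L\neucl{\mtheta_1-\mtheta_2}$. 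The paper obtains it from the symmetrized Pinsker inequality $\|\pp-\qq\|_{TV}^2\leq D_{KL}(\pp\|\qq)+D_{KL}(\qq\|\pp)$ combined with the closed-form Gaussian KL divergence, bounding the resulting coordinatewise coefficients by $\sigma_{\min}^{-2}$ (means) and $\tfrac{1}{2}\sigma_{\min}^{-4}$ (variances) — already dimension-free, so the $\sqrt{d}$ pitfall you worried about never arises there either. Your route instead bounds the $L^1$ distance by the Fisher--Rao path length, $\|\dens_{\mtheta_1}-\dens_{\mtheta_2}\|_{L^1}\leq\int_0^1\expect|W_s|\,ds\leq\int_0^1\bigl(\dot\mtheta^T I(\mtheta(s))\dot\mtheta\bigr)^{1/2}ds$, and your variance formula is precisely the Fisher information of the diagonal-Gaussian family ($1/\sigma_\ell^2$ for means, $1/(2\sigma_\ell^4)$ for variances); that the two constants coincide is no accident, since the symmetrized KL is locally this same quadratic form. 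What each buys: the paper's argument is two lines given Pinsker and the catalogued KL formula, while yours is self-contained, needs neither ingredient, and generalizes verbatim to any parametric family whose Fisher information can be controlled along segments; your check that $\sigma_\ell(s)^2=(1-s)\sigma_{1,\ell}^2+s\sigma_{2,\ell}^2\geq\sigma_{\min}^2$ without convexity of $\thetaset$ is exactly the point that needs verifying, and you verify it. Two cosmetic remarks: the sign in $\dens_{\mtheta_1}-\dens_{\mtheta_2}=\int_0^1\tfrac{d}{ds}\dens_{\mtheta(s)}\,ds$ is flipped (harmless once norms are taken, though the interchange of $d/ds$ with the $L^1$ integral deserves a word of dominated convergence), and your packing-based bound $N_{\thetaset,\|.\|_2}(\delta)\leq(1+2\rad{\thetaset}/\delta)^{2d}$ is legitimate for internal covering numbers and absorbs in one step the factor $2$ the paper pays separately (its monotonicity lemma for internal coverings, Lemma \ref{lem:covnumsub}, before applying Lemma \ref{lem:covnumball}); both chains land on the same $(B/\epsilon)^{2d}$, sharing the same benign caveat that for very large $\epsilon$ the right-hand side drops below $1$.
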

Thus Assumption \ref{assum:compact} is verified for the model $\model = \gaussset$.
Assumption \ref{assum:domin} is also verified, and for a Gaussian frequency distribution we have the following:
\begin{theorem}\label{thm:toy}
Suppose the set of parameters $\thetaset \subset \mathbb{R}^{2d}$ is compact, and the frequency distribution is an isotropic Gaussian $\freqdist=\mathcal{N}\left(0,\frac{\sigkersmall}{d}\bfI\right)$ for some $\sigkersmall >0$.

Then, for all $\pp,\qq \in \gaussset$, we have
\begin{equation}
\|\pp-\qq\|_{TV}\leq D\normK{\pp}{\qq},
\end{equation}
where
\begin{equation}
D=\max(\sigma_{\min}^{-1},\sigma_{\min}^{-2}/\sqrt{2})\sqrt{\frac{2dD_1\cdot e^{3\sigkersmall\sigma_{\max}^2}}{\sigkersmall(1-e^{-D_1})}} \text{~ with } D_1=\sigma_{\max}^2\sigkersmall\left(1+\frac{2M^2}{d}\right).
\end{equation}
\end{theorem}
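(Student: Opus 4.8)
The plan is to prove the domination in two stages, using the Euclidean distance on the parameter set $\thetaset$ as an intermediary between the total-variation norm on the left and the MMD $\normKs$ on the right. Concretely, I will establish (i) an upper bound $\|\pp_{\mtheta_1}-\pp_{\mtheta_2}\|_{TV}\leq L\,\neucl{\mtheta_1-\mtheta_2}$ with $L=\max(\sigma_{\min}^{-1},\sigma_{\min}^{-2}/\sqrt2)$, and (ii) a lower bound $\normKsq{\pp_{\mtheta_1}}{\pp_{\mtheta_2}}\geq c\,\neucl{\mtheta_1-\mtheta_2}^2$ for an explicit constant $c>0$; chaining them gives $\|\pp-\qq\|_{TV}\leq L\,c^{-1/2}\normK{\pp}{\qq}$, and matching this against the stated $D$ forces $c=\frac{\sigkersmall(1-e^{-D_1})}{2dD_1 e^{3\sigkersmall\sigma_{\max}^2}}$, so that indeed $L\,c^{-1/2}=D$.

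Step (i) will reuse the machinery behind Theorem~\ref{thm:covnum}. Since the covariances are diagonal, $\dens_\mtheta$ is a product of one-dimensional Gaussian densities, so by a telescoping (hybrid) argument $\|\pp_{\mtheta_1}-\pp_{\mtheta_2}\|_{TV}$ is bounded by the sum of the per-coordinate total variations; each of these is in turn controlled by integrating $\|\partial_{\mu_\ell}\dens\|_{L^1}\asymp\sigma_\ell^{-1}$ and $\|\partial_{\sigma_\ell^2}\dens\|_{L^1}\asymp\sigma_\ell^{-2}$ along the segment joining $\mtheta_1$ to $\mtheta_2$, which produces the Lipschitz constant $L$. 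This is exactly the estimate that, combined with the Chebyshev radius $\rad{\thetaset}$, yields the covering-number bound $B=8L\,\rad{\thetaset}$ of Theorem~\ref{thm:covnum}.

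Step (ii) is the crux. Writing $\mtheta(t)=(1-t)\mtheta_1+t\mtheta_2$ and $\Delta=\mtheta_2-\mtheta_1=[\Delta_\mmu;\Delta_{\msigma}]$, the key observation is that the characteristic function factorizes along the path: since $\partial_{\mu_\ell}\chrc_\mtheta=-\imaginaryi\omega_\ell\chrc_\mtheta$ and $\partial_{\sigma_\ell^2}\chrc_\mtheta=-\tfrac12\omega_\ell^2\chrc_\mtheta$ with $\mtheta$-independent multipliers, one gets
\begin{equation}
\chrc_{\mtheta_2}(\freq)-\chrc_{\mtheta_1}(\freq)=q_\Delta(\freq)\int_0^1\chrc_{\mtheta(t)}(\freq)\,dt,\qquad q_\Delta(\freq):=-\imaginaryi\freq^T\Delta_\mmu-\tfrac12\textstyle\sum_\ell\omega_\ell^2\Delta_{\sigma^2,\ell}.
\end{equation}
Hence $\normKsq{\pp}{\qq}=\int_{\mathbb{R}^d}|q_\Delta(\freq)|^2\,\bigl|\int_0^1\chrc_{\mtheta(t)}(\freq)\,dt\bigr|^2\,d\freqdist(\freq)$ with $\freqdist=\mathcal N(0,\tfrac{\sigkersmall}{d}\bfI)$. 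The term $|q_\Delta|^2=(\freq^T\Delta_\mmu)^2+\tfrac14(\sum_\ell\omega_\ell^2\Delta_{\sigma^2,\ell})^2$ carries the parameter difference: taking $\freqdist$-expectations of these quadratic and quartic forms gives lower bounds of order $\tfrac{\sigkersmall}{d}\neucl{\Delta_\mmu}^2$ and $\tfrac{\sigkersmall^2}{d^2}\neucl{\Delta_{\msigma}}^2$, which is where the factors $\sigkersmall$ and the global $2d$ in $c$ originate. The remaining task is a uniform lower bound on the amplitude factor $\bigl|\int_0^1\chrc_{\mtheta(t)}(\freq)\,dt\bigr|$.

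Bounding this amplitude is the main obstacle, because $\chrc_{\mtheta(t)}(\freq)=e^{-\imaginaryi\freq^T\mmu(t)}e^{-\frac12\freq^T\mSigma(t)\freq}$ oscillates in $t$ through the phase $\freq^T\mmu(t)$, and naive estimates lose everything to cancellation. The phase is affine in $t$ with slope $\freq^T\Delta_\mmu$, while the amplitude satisfies $e^{-\frac12\freq^T\mSigma(t)\freq}\geq e^{-\frac12\sigma_{\max}^2\neucl{\freq}^2}$; the idea is to control the combined variation of phase and amplitude over $t\in[0,1]$ by the single scalar $D_1=\sigma_{\max}^2\sigkersmall(1+\tfrac{2M^2}{d})$, which bounds in $\freqdist$-expectation both $\tfrac12\freq^T\mSigma\freq$ (giving the $\sigkersmall\sigma_{\max}^2$ contribution) and the mean-induced phase spread (giving the $\tfrac{2M^2}{d}$ correction, using $\neucl{\Delta_\mmu}\leq 2M$). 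Applying Jensen's inequality to the convex map $x\mapsto e^{-x}$ then turns the pointwise exponential decay and the Gaussian-integral normalizations into the factors $e^{3\sigkersmall\sigma_{\max}^2}$ and $\tfrac{1-e^{-D_1}}{D_1}=\int_0^1 e^{-D_1 t}\,dt$ appearing in $c$. I expect this oscillatory-amplitude estimate to require the most care; the rest is Gaussian-moment bookkeeping together with compactness of $\thetaset$ (finiteness of $\sigma_{\min},\sigma_{\max},M$), and the fact that the Gaussian kernel is characteristic (Theorem~\ref{thm:characTIk}), which guarantees $\normKs$ is a genuine metric and hence $c>0$.
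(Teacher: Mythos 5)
Your two-stage skeleton (a Lipschitz bound $\|\pp_{\mtheta_1}-\pp_{\mtheta_2}\|_{TV}\leq L\neucl{\mtheta_1-\mtheta_2}$ chained with a lower bound $\normKsq{\pp_{\mtheta_1}}{\pp_{\mtheta_2}}\geq c\neucl{\mtheta_1-\mtheta_2}^2$) is indeed the paper's skeleton, but your step (ii) --- which you yourself identify as the crux --- is not proved, and the mechanism you propose for it fails. The path-integral identity is correct, but the amplitude factor $\bigl|\int_0^1\chrc_{\mtheta(t)}(\freq)\,dt\bigr| = |\chrc_{\mtheta_1}(\freq)|\cdot\bigl|\tfrac{e^{q_\Delta(\freq)}-1}{q_\Delta(\freq)}\bigr|$ vanishes \emph{exactly} on hypersurfaces of frequencies: take $\Delta_{\msigma}=0$ and $\freq^T\Delta_\mmu\in 2\pi\mathbb{Z}\setminus\{0\}$, and then $e^{q_\Delta(\freq)}=1$, i.e.\ the two characteristic functions coincide there. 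So no pointwise lower bound on this factor exists, and Jensen's inequality for $x\mapsto e^{-x}$ cannot rescue it: $\mathbb{E}[e^{-X}]\geq e^{-\mathbb{E}X}$ applies to real exponents and handles only the amplitude decay, not the oscillatory term $\cos(\freq^T\Delta_\mmu)$ hidden in $|e^{q_\Delta}-1|^2$. A workable Fourier-side argument would instead have to split $|e^{q_\Delta}-1|^2=(e^{-s})^2-2e^{-s}\cos\phi+1=(e^{-s}-1)^2+2e^{-s}(1-\cos\phi)$ (with $s$ the variance part and $\phi$ the phase) and lower-bound the two nonnegative pieces by Gaussian-moment computations on the region where $|\phi|$ is small --- a substantially different argument from your sketch. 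Moreover, your own bookkeeping already flags the obstruction to recovering the stated constant this way: the mean block naturally yields order $\tfrac{\sigkersmall}{d}\neucl{\Delta_\mmu}^2$ while the variance block yields only $\tfrac{\sigkersmall^2}{d^2}\neucl{\Delta_{\msigma}}^2$, so the single $c=\tfrac{\sigkersmall(1-e^{-D_1})}{2dD_1e^{3\sigkersmall\sigma_{\max}^2}}$ that you reverse-engineer from $D$ cannot be produced by this route as announced; declaring the target constants and deferring the ``oscillatory-amplitude estimate'' is precisely the missing proof.

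For contrast, the paper sidesteps oscillation entirely. Because $\freqdist$ is Gaussian, the weight $e^{-\neucl{\freq}^2/(2\sigmaker^2)}$ is absorbed into the characteristic functions, so by Plancherel $\normKsq{\pp_1}{\pp_2}=\bigl(\tfrac{2\pi}{\sigmaker^2}\bigr)^{d/2}\|\dens'_1-\dens'_2\|_{L^2}^2$, where $\pp'_i$ are Gaussians with the same means and \emph{dilated} covariances $\mSigma_i+\tfrac{\bfI}{2\sigmaker^2}$. The $L^2$ distance between two Gaussian densities has a closed form (product-of-Gaussians formula, Lemma \ref{lem:domination}), of the shape $(\text{prefactor})\cdot(1-e^{-E})$ with the exponent $E$ explicitly sandwiched between a multiple of $\neucl{\mtheta_1-\mtheta_2}^2$ and $D_1$; the factor $\tfrac{1-e^{-D_1}}{D_1}$ then comes from the elementary concavity bound $1-e^{-x}\geq\tfrac{1-e^{-D_1}}{D_1}x$ on $[0,D_1]$, and $e^{3\sigkersmall\sigma_{\max}^2}$ from simplifying $(2\sigmaker^2\sigma_{\max}^2+1)^{d/2+1}$ with $\sigmaker^2=\sigkersmall/d$ --- nothing oscillatory survives. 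A secondary issue with your step (i): the per-coordinate $L^1$-derivative telescoping controls $\|\pp_{\mtheta_1}-\pp_{\mtheta_2}\|_{TV}$ by first powers $\sum_\ell\bigl(\sigma_\ell^{-1}|\Delta\mu_\ell|+\sigma_\ell^{-2}|\Delta\sigma_\ell^2|\bigr)$, an $\ell_1$ bound whose conversion to the Euclidean norm costs a factor $\sqrt{2d}$; the paper's dimension-free $L=\max(\sigma_{\min}^{-1},\sigma_{\min}^{-2}/\sqrt{2})$ comes instead from Pinsker's inequality combined with the closed-form symmetric KL of diagonal Gaussians, which is additive and \emph{quadratic} across coordinates and therefore sums directly to $\neucl{\mtheta_1-\mtheta_2}^2$ (this is also how Theorem \ref{thm:covnum} is actually proved, contrary to your attribution).
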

The proof is given in Appendix \ref{sec:proofs_covnum_gaussians}.

As a consequence, Assumption \ref{assum:domin} is verified for $\model = \gaussset$, $\freqdist=\mathcal{N}(0,\sigmaker^2\bfI)$, any $\eta\geq 0$ and $\cstdom=\min(D,2/\eta)$.
Theorems \ref{thm:covnum} and \ref{thm:toy} lead to the following immediate corollary of Theorem \ref{thm:appli}.
\begin{corollary}\label{cor:toy}
In the case $\model=\gaussset$ of single Gaussians with a compact set of parameters, for a Gaussian frequency distribution $\freqdist=\mathcal{N}\left(1,\frac{\sigkersmall}{d}\bfI\right)$ and any constant $0<\eta\leq 1$, Theorem \ref{thm:appli} is verified with \eqref{eq:thmmbound} replaced by
\begin{equation}
m\geq 12\cstdom^2\left(4d\log\left(\frac{C}{\eta}\right)+\log\frac{2}{\rho}\right),
\end{equation}
where $\cstdom=\min(D,2/\eta)$, $C=\sqrt{24B}$, $B$ is defined as in Theorem \ref{thm:covnum} and $D$ is defined as in Theorem \ref{thm:toy}.
\end{corollary}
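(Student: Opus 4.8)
The plan is to combine Theorems~\ref{thm:covnum} and~\ref{thm:toy}, which respectively verify the two hypotheses of the general Theorem~\ref{thm:appli} for the single-Gaussian model, and then rewrite the sample-complexity requirement \eqref{eq:thmmbound} in closed form using the explicit covering-number estimate. Since the corollary is billed as ``immediate,'' the work is essentially algebraic bookkeeping once the two theorems are in hand.

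First I would record that for $\model=\gaussset$ with compact parameter set $\thetaset$, Theorem~\ref{thm:covnum} establishes Assumption~\ref{assum:compact} together with the explicit bound $N_{\gaussset,\|.\|_{TV}}(\epsilon)\leq(B/\epsilon)^{2d}$, while Theorem~\ref{thm:toy} establishes Assumption~\ref{assum:domin} with $\cstdom=\min(D,2/\eta)$. One should check the standing requirement $\cstdom\geq 1$ of Theorem~\ref{thm:appli}: since $0<\eta\leq 1$ we have $2/\eta\geq 2>1$, and should $D<1$ one simply replaces $\cstdom$ by $\max(1,\cstdom)$, which only strengthens the domination in Assumption~\ref{assum:domin}. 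Thus both hypotheses of Theorem~\ref{thm:appli} hold, and it remains only to make the condition \eqref{eq:thmmbound} explicit.

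Next I would substitute $\epsilon=\eta^2/24$ into the covering-number bound of Theorem~\ref{thm:covnum}, giving
\begin{equation}
N_{\gaussset,\|.\|_{TV}}\!\left(\tfrac{\eta^2}{24}\right)\leq\left(\frac{24B}{\eta^2}\right)^{2d},
\end{equation}
and then take logarithms:
\begin{equation}
\log\!\left(\tfrac{2}{\rho}\,N_{\gaussset,\|.\|_{TV}}\!\left(\tfrac{\eta^2}{24}\right)\right)\leq\log\tfrac{2}{\rho}+2d\log\!\left(\frac{24B}{\eta^2}\right)=\log\tfrac{2}{\rho}+4d\log\!\left(\frac{\sqrt{24B}}{\eta}\right),
\end{equation}
where the last equality uses $\log(24B/\eta^2)=2\log(\sqrt{24B}/\eta)$. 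Setting $C=\sqrt{24B}$, the right-hand side is exactly $4d\log(C/\eta)+\log(2/\rho)$.

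Finally, any $m$ satisfying $m\geq 12\cstdom^2\bigl(4d\log(C/\eta)+\log(2/\rho)\bigr)$ therefore also satisfies the original requirement \eqref{eq:thmmbound}, so Theorem~\ref{thm:appli} applies verbatim and yields the stated conclusion. There is no genuine obstacle: the only point requiring a moment's care is matching the exponent $2d$ against the squared error $\eta^2$ so that the bound reads $4d\log(C/\eta)$ with $C=\sqrt{24B}$ rather than $2d\log(24B/\eta^2)$, and confirming $\cstdom\geq 1$ as noted above.
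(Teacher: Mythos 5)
Your proposal is correct and follows essentially the same route as the paper, which presents the corollary as an immediate consequence of Theorems~\ref{thm:covnum} and~\ref{thm:toy}: substitute $N_{\gaussset,\|.\|_{TV}}(\eta^2/24)\leq(24B/\eta^2)^{2d}$ into \eqref{eq:thmmbound} and absorb the square via $C=\sqrt{24B}$. Your side remark on $\cstdom\geq 1$ is sound (and in fact automatic here: since the Gaussian $\freqdist$ has full support the kernel is characteristic, so combining $\normK{\pp}{\qq}\leq\|\pp-\qq\|_{TV}\leq D\,\normK{\pp}{\qq}$ on any pair of distinct Gaussians forces $D\geq 1$, as the paper notes via Lemma~\ref{lem:bounds} in the proof of Lemma~\ref{lem:applibernstein}).
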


Hence, for a fixed model $\model=\gaussset$, the additive error $\eta$ decreases exponentially with the number of measurements $m$. It is also interesting to note that conversely, for a small additive error $\eta\leq 2/D=\order(1/\sqrt{d})$, assuming that all parameters of the model appearing in the expression of $D$ are constant, we have $\cstdom=\order(\sqrt{d})$ and thus the number of measurements $m$ must grow as $\order(d^2)$ with the dimension. It is sub-optimal in the sense that the ideal estimators of the mean and diagonal covariance of a single Gaussian, \ie~the empirical mean and covariance, have size $\order(d)$. This number of measurements may scale with the size of the empirical estimators for Gaussian with \emph{full} covariance, although the results presented in this paper do not directly apply to this case.

\subsection{Application to GMMs}

Theorem \ref{thm:covnum} and Lemma \ref{lem:covnummix} allow for an immediate extension of Assumption \ref{assum:compact} from the set of basic distributions to the corresponding mixture models. In the case of GMMs, we have the following corollary, whose proof is given in Appendix \ref{sec:proofs_covnum_gaussians}.

\begin{corollary}\label{cor:covnumgmm}
Suppose the set of parameters $\thetaset \subset \mathbb{R}^{2d}$ is compact. Then, for all $K>1$ the set of GMMs $\gaussmix{K}$ is compact.
Furthermore, for all $\epsilon>0$, we have
\begin{equation}
\label{eq:covnumgmm}
N_{\gaussmix{K},\|.\|_{TV}}(\epsilon)\leq \left(\frac{2(B+1)}{\epsilon}\right)^{(2d+1)K},
\end{equation}
where $B$ is defined as in Theorem \ref{thm:covnum}.
\end{corollary}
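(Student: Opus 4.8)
The plan is to obtain the bound by directly composing the covering-number estimate for single Gaussians (Theorem \ref{thm:covnum}) with the generic mixture covering bound (Lemma \ref{lem:covnummix}), and then to optimize the free parameter $\tau$ introduced in the latter. No new analytic ingredient is needed: the whole statement is a bookkeeping composition of two results already established, the only genuine decision being the value of $\tau$.

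First I would invoke Theorem \ref{thm:covnum}: since $\thetaset$ is compact, $\gaussset$ is compact with respect to $\|.\|_{TV}$ and $N_{\gaussset,\|.\|_{TV}}(\epsilon)\leq (B/\epsilon)^{2d}$ for all $\epsilon>0$. Compactness of $\gaussset$ is exactly the hypothesis needed to apply Lemma \ref{lem:covnummix} with $\|.\|=\|.\|_{TV}$, which immediately yields compactness of $\gaussmix{K}$ (settling the first assertion for every $K$, in particular $K>1$) and, using $C=\max_{\pp\in\gaussset}\|\pp\|_{TV}=1$ (every probability measure has unit total variation, as noted right after the Lemma), the estimate
\[
N_{\gaussmix{K},\|.\|_{TV}}(\epsilon)\leq \left(\frac{8\,N_{\gaussset,\|.\|_{TV}}(\tau\epsilon)}{(1-\tau)\epsilon}\right)^K
\]
valid for every $0<\tau<1$. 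Substituting the single-Gaussian bound at scale $\tau\epsilon$ gives
\[
N_{\gaussmix{K},\|.\|_{TV}}(\epsilon)\leq \left(\frac{8 B^{2d}}{(1-\tau)\,\tau^{2d}\,\epsilon^{2d+1}}\right)^K,
\]
so it remains only to choose $\tau$ making $\frac{8B^{2d}}{(1-\tau)\tau^{2d}}$ at most $(2(B+1))^{2d+1}$.

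The key step is the choice $\tau=B/(B+1)\in(0,1)$, for which $1-\tau=1/(B+1)$ and $\tau^{2d}=B^{2d}/(B+1)^{2d}$. The $B^{2d}$ factors then cancel exactly and one is left with $\frac{8B^{2d}}{(1-\tau)\tau^{2d}}=8(B+1)^{2d+1}$. Since $d\geq 1$ we have $8=2^{3}\leq 2^{2d+1}$, whence $8(B+1)^{2d+1}\leq (2(B+1))^{2d+1}$, which raised to the power $K$ delivers exactly the claimed bound $\left(\tfrac{2(B+1)}{\epsilon}\right)^{(2d+1)K}$.

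I do not expect a serious obstacle here, but the one subtlety worth flagging is that the clean cancellation genuinely requires letting $\tau$ depend on $B$ rather than fixing a constant such as $\tau=1/2$: that naive choice leaves a residual factor of the form $B^{2d}/(B+1)^{2d+1}$ (e.g.\ $(B+1)^3/B^2=6.75<8$ at $B=2,\ d=1$) which is \emph{not} uniformly controlled, so the bound would fail for moderate $B$. With $\tau=B/(B+1)$ the leftover multiplicative constant is exactly $8$, harmlessly absorbed by $2^{2d+1}$ using $d\geq 1$.
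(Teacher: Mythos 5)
Your proof is correct and takes essentially the same route as the paper's: compose Theorem~\ref{thm:covnum} with Lemma~\ref{lem:covnummix} (using $C=1$ for the total variation norm), choose $\tau=B/(B+1)$ so the $B^{2d}$ factors cancel, and absorb the leftover factor $8$ via $8\leq 2^{2d+1}$ for $d\geq 1$ --- exactly the paper's argument, including the non-optimal but clean choice of $\tau$. Your observation that a fixed choice such as $\tau=1/2$ would not uniformly yield the stated constant is accurate but incidental.
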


Unfortunately, unlike the compactness property, Assumption \ref{assum:domin} cannot be immediately extended from the set of basic distributions $\gaussset$ to the mixture model $\gaussmix{K}$, and it is not clear whether doing so would require some additional hypotheses or not. Though we strongly believe that a result similar to the $K=1$ case holds (see the discussion at the end of this section), here we use the fact that Assumption \ref{assum:domin} is verified with $\cstdom=2/\eta$ regardless of the model and frequency distribution.

This leads to the following corollary.

\begin{corollary}\label{cor:gmms}
In the case $\model=\gaussmix{K}$ of GMMs with a compact set of parameters, for any frequency distribution and constant $0<\eta\leq 1$, Theorem \ref{thm:appli} is verified with \eqref{eq:thmmbound} replaced by
\begin{equation}
m\geq 48\eta^{-2}\left(2K(2d+1)\log\left(\frac{C}{\eta}\right)+\log\frac{2}{\rho}\right),
\end{equation}
where $C=\sqrt{48(B+1)}$, with $B$ defined as in Theorem \ref{thm:covnum}.
\end{corollary}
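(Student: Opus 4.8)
The plan is to apply Theorem~\ref{thm:appli} essentially verbatim, the only real work being to check its two hypotheses for the model $\model=\gaussmix{K}$ and then to simplify the resulting lower bound on $m$. The conceptual key is that we deliberately renounce any sharp control in the domination Assumption~\ref{assum:domin} and instead invoke the universal (but crude) constant $\cstdom=2/\eta$; this is precisely what frees the statement from any hypothesis on $\freqdist$.

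First I would verify Assumption~\ref{assum:compact}. By Corollary~\ref{cor:covnumgmm}, compactness of $\thetaset$ implies that $\gaussmix{K}$ is compact for $\|\cdot\|_{TV}$ and that its covering numbers satisfy \eqref{eq:covnumgmm}; this is exactly what Assumption~\ref{assum:compact} requires. Next I would verify Assumption~\ref{assum:domin}. Here I use the remark immediately following that assumption: since $\|\pp_1-\pp_2\|_{TV}\leq 2$ for any pair of probability measures, the implication \eqref{eq:domination} holds trivially with $\cstdom=2/\eta$ for every $\eta>0$, independently of the model and of the frequency distribution $\freqdist$. This is why the corollary is valid for an \emph{arbitrary} $\freqdist$.

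With both assumptions in force and $\cstdom=2/\eta$, Theorem~\ref{thm:appli} applies, and its prefactor becomes $12\cstdom^2=48\eta^{-2}$. It remains to bound the logarithmic factor in \eqref{eq:thmmbound}. Substituting $\epsilon=\eta^2/24$ into \eqref{eq:covnumgmm} gives
\begin{equation}
N_{\gaussmix{K},\|\cdot\|_{TV}}\!\left(\tfrac{\eta^2}{24}\right)\leq\left(\frac{48(B+1)}{\eta^2}\right)^{(2d+1)K}=\left(\frac{C}{\eta}\right)^{2(2d+1)K},
\end{equation}
where $C=\sqrt{48(B+1)}$. Taking logarithms yields $\log\!\big(\tfrac{2}{\rho}N_{\gaussmix{K},\|\cdot\|_{TV}}(\eta^2/24)\big)\leq 2K(2d+1)\log(C/\eta)+\log(2/\rho)$, so multiplying through by $48\eta^{-2}$ shows that any $m$ satisfying the stated bound also satisfies \eqref{eq:thmmbound}, which is all that is needed.

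The computation itself is routine; the genuine difficulty, already acknowledged in the preceding discussion, is that Assumption~\ref{assum:domin} does \emph{not} extend from single Gaussians (Theorem~\ref{thm:toy}) to mixtures with an $\eta$-independent constant. Settling for $\cstdom=2/\eta$ is the price paid, and it is exactly what makes the bound degrade as $\eta^{-2}$ rather than enjoying the sharper, $\eta$-independent behaviour obtained in Corollary~\ref{cor:toy}.
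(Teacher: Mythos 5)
Your proof is correct and follows exactly the paper's intended route: Corollary~\ref{cor:covnumgmm} supplies Assumption~\ref{assum:compact} with the covering-number bound \eqref{eq:covnumgmm}, the universal bound $\cstdom=2/\eta$ supplies Assumption~\ref{assum:domin} for arbitrary $\freqdist$ (as the paper states immediately before the corollary), and your substitution $\epsilon=\eta^2/24$ with $C=\sqrt{48(B+1)}$ correctly turns \eqref{eq:thmmbound} into the stated bound, since $12\cstdom^2=48\eta^{-2}$ and $48(B+1)/\eta^2=(C/\eta)^2$. Your closing remark on why the $\eta^{-2}$ degradation appears also matches the paper's own discussion of the conjectured improvement.
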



\paragraph{Conjecture.} Corollary \ref{cor:gmms} suggests that the reconstruction error $\eta$ for GMMs decreases as $\order\left(n^{-\frac{1}{2}}+m^{-\frac{1}{2}}\right)$ (up to some inverse exponential factor), which seems to nullify the advantages of the ``compressive'' approach. This is due to the use of the ``worst'' bound $\cstdom=2/\eta$ in Assumption \ref{assum:domin}. However, we strongly believe that Assumption \ref{assum:domin} may hold with a better bound that does not depend on $\eta$, similar to the $K=1$ case.

We support this claim by empirically evaluating reconstruction results of the CL-OMPR algorithm with respect to the MMD in Figure \ref{fig:phasetrans2}. We observe a phase transition pattern similar to the one already noted in Section \ref{sec:results_phase} for the KL-divergence, which is inconsistent with an additive error that scales in $\order\left(m^{-1/2}\right)$. On the contrary, the $\order\left(n^{-1/2}\right)$ decrease is indeed observed, which supports the theory but also the capacity of CL-OMPR to approximate the ideal decoder \eqref{eq:decod}.

The proof of Assumption \ref{assum:domin} for general GMMs seems complex and beyond the scope of this paper. A possible strategy would be to be able to directly extend Assumption \ref{assum:domin} on the basic set of distributions $\gaussset$ to the corresponding mixture model $\gaussmix{K}$, while another approach would be to use a different metric other than the total variation norm. The latter is particularly discussed in Appendix \ref{sec:proof}.

\begin{figure}
\centering
\includegraphics[height=\hght]{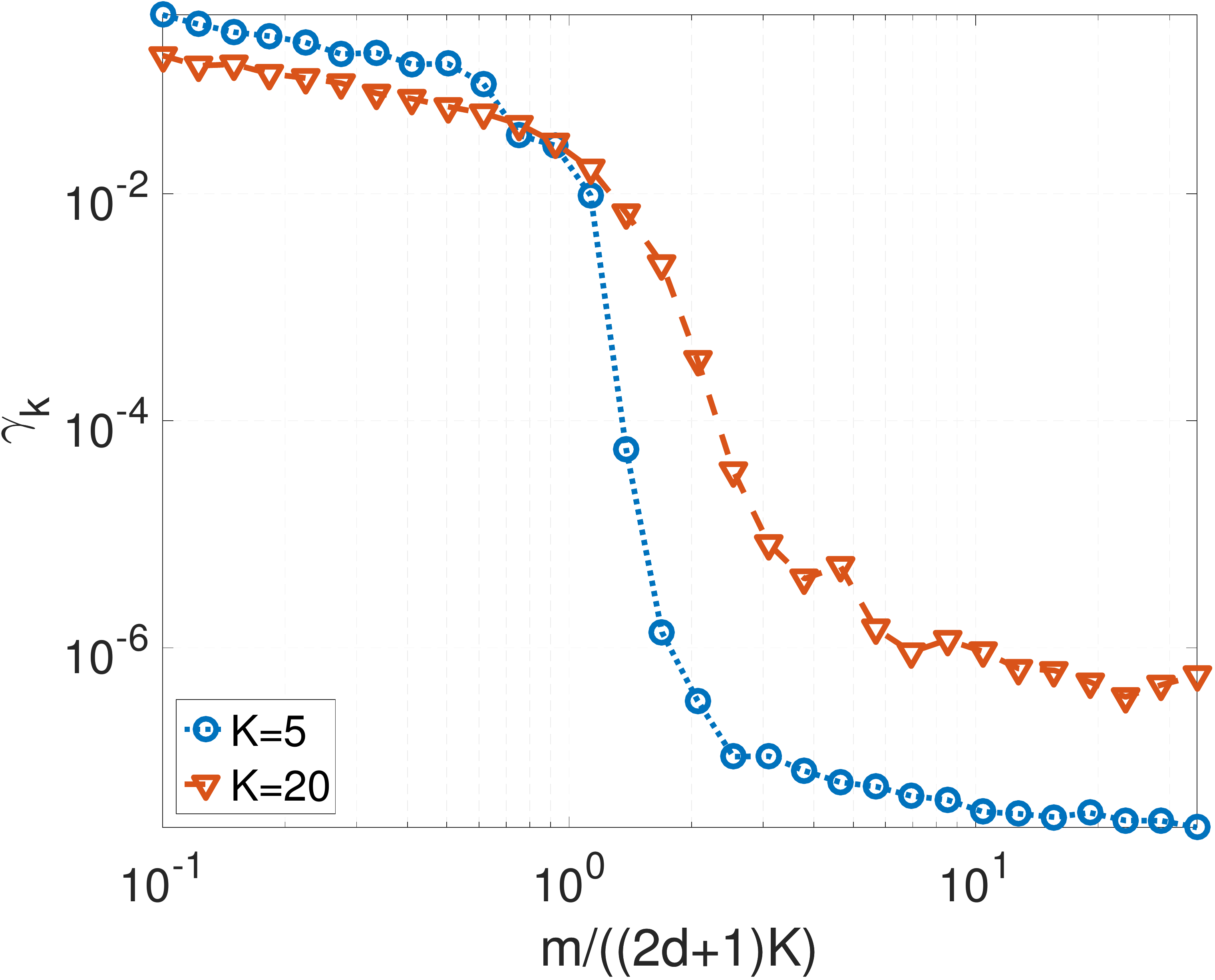}
\includegraphics[height=\hght]{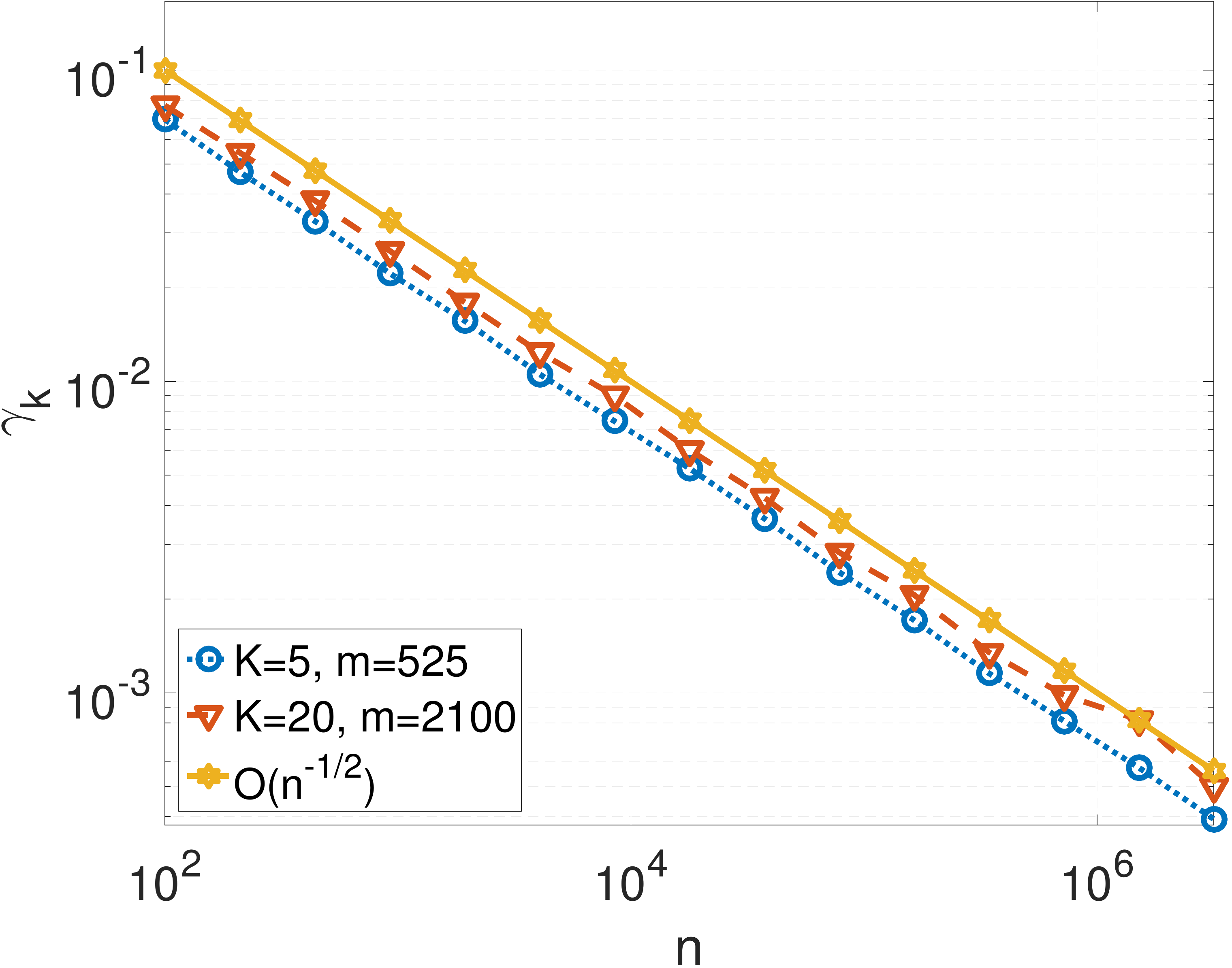} 
\caption{Reconstruction results of CL-OMPR for the $\normKs$ metric with respect to $m$ (left) and $n$ (right), in dimension $d=10$ and $K=5$ components, using the true theoretical sketch $\skop \pp_{\hyppar_0,\malpha_0}$ on the left and $m=5K(2d+1)$ frequencies on the right. In a similar fashion to the KL-divergence (Section \ref{sec:evaluation_measure}), the MMD $\normKs$ is approximated by drawing $5\cdot 10^5$ frequencies from $\freqdist$ and by empirically evaluating \eqref{eq:chrcdiff}, using the closed-form expression of the characteristic function of GMMs.}
\label{fig:phasetrans2}
\end{figure}

\section{Conclusion and outlooks}

We presented a method for probability mixture estimation on a large database exploiting a \emph{sketch} of the data instead of the data itself. The sketch is an appropriate structure that leads to considerable gain in terms of memory. It can be computed in a distributed or streaming manner, and it can fully exploit the advantages of GPU computing.

A typical greedy method for sparse reconstruction was defined, leading to reconstruction algorithms both efficient and stable, even when the dictionary of atoms is infinite and uncountable. In the case of GMM, an additional efficient algorithm based on hierarchical splitting of GMMs was described. 

A heuristic to select generalized moments based on a decomposition robust to high dimension and maximal variations of the characteristic function was designed. A procedure to estimate the parameter of this heuristic was described, resulting in a method that is faster than traditional kernel design by cross-validation, has the advantage of being unsupervised and thus is probably suited for other tasks, and yields better reconstruction results.

Excellent results were observed on synthetic data, where the greedy algorithms approach the reconstruction results of EM, using less memory and computation time when the number of database elements is large. The method was successfully applied to a large-scale speaker verification task. The hierarchical approach proved to be the most efficient method for this challenge, illustrating the diversity of the problem and of the proposed solutions. As in usual compressive sensing, limitations of the method when the number of sparse components in the distribution is large were observed.

Finally, information preservation guarantees were developed for the recovery of any compact set of distributions. The proof of Theorem \ref{thm:appli} (Appendix \ref{sec:proof}) introduced a weaker variant of the Restricted Isometry Property (RIP) for non-uniform recovery. We then applied this result to GMMs with bounded parameters, and observed a technical bottleneck between the toy $K=1$ case and general GMMs.

\paragraph{Outlooks.} As mentioned earlier, the method can readily be applied to other mixture models, such as mixtures of $\alpha$-stable distributions which do not have explicit likelihood but whose characteristic function is known \cite{Salas-Gonzalez2009}. Based on the principle of maximizing the variation of the characteristic function, suitable heuristics for the choice of the sampling pattern may be derived for other models.

\review{The method can also easily incorporate variants of Random Fourier Features that are faster to compute or more precise \cite{Le2013,Xinnan2016}. Existing methods to learn the kernel \cite{Gretton2012,Yang2015,Sinha2016,Paige2016} may be adapted to our framework, and in return the proposed unsupervised kernel learning procedure and Adapted radius heuristic may be useful for other tasks.}


As mentioned earlier, technical difficulties on the domination between certain metrics were observed between the toy $K=1$ case and general GMMs, pointing to a promising lead for future work. The proof of Theorem \ref{thm:appli} also uses innovative variants of several classical tools in compressive sensing, which may be useful in the study of other instances of generalized compressive sensing.

\section*{Acknowledgment}
This work was supported in part by the European Research Council, PLEASE project (ERC-StG- 2011-277906).

\bibliographystyle{plain}
\section*{References}
\small
\bibliography{library}

\normalsize

\appendix
\renewcommand{\theequation}{\Alph{section}.\arabic{equation}}
\renewcommand{\thelemma}{\Alph{section}.\arabic{lemma}}
\renewcommand{\thetheorem}{\Alph{section}.\arabic{theorem}}
\renewcommand{\theremark}{\Alph{section}.\arabic{remark}}
\renewcommand{\thecorollary}{\Alph{section}.\arabic{corollary}}
\renewcommand{\thedefinition}{\Alph{section}.\arabic{definition}}


\section{Definitions, preliminary results}\label{sec:def}

In this section, we group some definitions and useful results.

\subsection{Positive definite kernels}\label{sec:def_kernel}

We recall the definition of p.d. kernels.

\begin{definition}[Positive definite kernel]
Let $\Xspace$ be an arbitrary set. A symmetric function (or \textbf{kernel}) $\kernel:\Xspace \times \Xspace \rightarrow \mathbb{C}$ is called \textbf{positive definite (p.d.)} if, for all $n\in\mathbb{N}$, $c_1,...,c_n\in\mathbb{C}$ and all $\bfx_1,...,\bfx_n\in X$, we have
\begin{equation*}
\sum_{i,j=1}^n c_i\bar{c_j}\kernel(\bfx_i,\bfx_j)\geq 0.
\end{equation*}
\end{definition}
Note that strict positivity is not mandatory in the above equation. In terms of vocabulary, p.d. kernels bear connections with, \eg, positive \emph{semi}-definite matrices (however they are indeed called positive definite kernels in the literature).
\begin{definition}[Positive definite function]
A function $\TIk:\mathbb{R}^d \rightarrow \mathbb{C}$ is called \textbf{positive definite} if the kernel defined by $\kernel(\bfx,\bfy)=\TIk(\bfx-\bfy)$ is positive definite.
\end{definition}

\subsection{Measures}\label{sec:def_meas}

\begin{definition}[Nonnegative measure]
A measure $\meas\in E$ over a measurable space $(X,\mathcal{B})$ is said \textbf{nonnegative} if:
\begin{equation*}
\forall B \in \mathcal{B},~\meas(B)\geq 0.
\end{equation*}
\end{definition}
\begin{definition}[Support of a measure]
The \textbf{support} of a signed measure $\meas \in E$ over a measurable, topological space $X$ is defined to be the closed set,
\begin{equation*}
\supp(\meas):=X\backslash \bigcup\left\lbrace U\subset \Xspace:~U \text{ is open},~\meas(U)=0\right\rbrace.
\end{equation*}
\end{definition}

\begin{definition}[Total variation norm, Finite measure]
Let $\meas\in E$ be a signed measure over a measurable space $(X,\mathcal{B})$. Define the Jordan decomposition $(\meas^-, \meas^+)$ of $\meas$  where $\meas^+$ and $\meas^-$ are positive measures (see \cite{Fischer2012} and \cite{Rudin1987} Chap. 6 for more details). Denote $|\meas|=\meas^+ +\meas^-$. The \textbf{total variation norm} of $\meas$ is defined as:
\[
\|\meas\|_{TV}=|\meas|(\Xspace)=\int_\Xspace d|\meas|(\bfx).
\]
The measure $\meas$ is said \textbf{finite} if $\|\meas\|_{TV}< \infty$.

Note that if $\meas$ is totally continuous with respect to the Lebesgue measure, \ie~if there exists an integrable function $f$ such that $d\meas(\bfx)=f(\bfx)d\bfx$, then the total variation norm is the classic $L^1$-norm of this function: $\|\meas\|_{TV}=\|f\|_{L^1}$.
\end{definition}

We have the following bounds.
\begin{lemma}\label{lem:bounds}
For any sketching operator $\skop$ obtained by sampling the characteristic function, and any finite signed measure $\meas \in E$, we have
\begin{equation}
\label{eq:lemskopbound}
\|\skop \meas\|_2 \leq \|\meas\|_{TV}.
\end{equation}

For any frequency sampling distribution $\freqdist$ and any pair of probability distributions $\pp,\qq \in \pmeas$, we have
\begin{equation}
\label{eq:lemnormkbound}
\normK{\pp}{\qq} \leq \|\pp-\qq\|_{TV}.
\end{equation}
\end{lemma}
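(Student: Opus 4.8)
The plan is to derive both inequalities from a single elementary pointwise estimate on the characteristic function: for every finite signed measure $\meas \in E$ and every frequency $\freq \in \mathbb{R}^d$,
\begin{equation}\label{eq:cfptbound}
|\chrc_\meas(\freq)| \leq \|\meas\|_{TV}.
\end{equation}
Once \eqref{eq:cfptbound} is established, both \eqref{eq:lemskopbound} and \eqref{eq:lemnormkbound} follow by substituting it into the relevant squared quantities, so the whole lemma reduces to proving this one bound.

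To establish \eqref{eq:cfptbound} I would start from the definition \eqref{eq:general_chrc}, $\chrc_\meas(\freq) = \int_{\mathbb{R}^d} e^{-\imaginaryi\freq^T\bfx}\, d\meas(\bfx)$, and invoke the standard inequality $\left|\int f\, d\meas\right| \leq \int |f|\, d|\meas|$, valid for any complex-valued integrand against a finite signed measure, where $|\meas| = \meas^+ + \meas^-$ is the total variation measure from the Jordan decomposition recalled in the definitions. Since $|e^{-\imaginaryi\freq^T\bfx}| = 1$ for every $\bfx$ and $\freq$, this gives $|\chrc_\meas(\freq)| \leq \int_{\mathbb{R}^d} d|\meas|(\bfx) = |\meas|(\mathbb{R}^d) = \|\meas\|_{TV}$, which is exactly \eqref{eq:cfptbound}.

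For the first bound I would use the explicit form \eqref{eq:skop}, which yields $\|\skop\meas\|_2^2 = \tfrac{1}{m}\sum_{j=1}^m |\chrc_\meas(\freq_j)|^2$; bounding each summand by \eqref{eq:cfptbound} gives $\|\skop\meas\|_2^2 \leq \tfrac{1}{m}\cdot m \cdot \|\meas\|_{TV}^2 = \|\meas\|_{TV}^2$, and taking square roots produces \eqref{eq:lemskopbound}. For the second bound I would apply \eqref{eq:cfptbound} to the finite signed measure $\pp - \qq$: by linearity of the integral, $\chrc_\pp(\freq) - \chrc_\qq(\freq) = \chrc_{\pp-\qq}(\freq)$, hence $|\chrc_\pp(\freq) - \chrc_\qq(\freq)| \leq \|\pp-\qq\|_{TV}$ for all $\freq$. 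Substituting this into the definition \eqref{eq:chrcdiff} of the MMD and using that $\freqdist$ is a probability measure, $\freqdist(\mathbb{R}^d) = 1$, gives $\normKsq{\pp}{\qq} = \int_{\mathbb{R}^d} |\chrc_{\pp-\qq}(\freq)|^2\, d\freqdist(\freq) \leq \|\pp-\qq\|_{TV}^2$, whence \eqref{eq:lemnormkbound} after taking square roots.

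The argument is essentially routine and I do not expect a genuine obstacle; only two points warrant minor care. The first is the justification of $\left|\int f\, d\meas\right| \leq \int |f|\, d|\meas|$ for a signed measure, which is classical and follows directly from the Jordan decomposition. The second is that \eqref{eq:lemnormkbound} relies on the normalization $\freqdist(\mathbb{R}^d) = 1$ assumed throughout; for a general finite nonnegative $\freqdist$ one would instead pick up a factor $\sqrt{\freqdist(\mathbb{R}^d)}$, consistent with the convention adopted in Section~\ref{sec:kernel}.
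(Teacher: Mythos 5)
Your proposal is correct, and for the first inequality it coincides with the paper's proof: both bound each sampled value $|\chrc_\meas(\freq_j)| = \bigl|\int e^{-\imaginaryi\freq_j^T\bfx}\,d\meas(\bfx)\bigr| \leq \int d|\meas|(\bfx) = \|\meas\|_{TV}$ and sum with the $\tfrac{1}{m}$ normalization. For the second inequality your route differs slightly from the paper's. You stay entirely in the Fourier domain: you apply the same pointwise estimate to the signed measure $\pp-\qq$, getting $|\chrc_{\pp-\qq}(\freq)|\leq\|\pp-\qq\|_{TV}$ for every $\freq$, and integrate against the probability measure $\freqdist$ directly in the definition \eqref{eq:chrcdiff}. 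The paper instead invokes the kernel-space reformulation $\normKsq{\pp}{\qq}=\iint \kernel(\bfx,\bfy)\,d(\pp-\qq)(\bfx)\,d(\pp-\qq)(\bfy)$ (citing Sriperumbudur \etal) and concludes from $|\kernel|\leq 1$. The two arguments are dual to each other via Bochner's theorem and Fubini, so neither is stronger; but yours is more self-contained and unified, since a single pointwise bound on characteristic functions yields both claims without appealing to the RKHS representation, whereas the paper's version makes the dependence on the kernel normalization $|\kernel(\bfx,\bfx)|=\freqdist(\mathbb{R}^d)=1$ explicit on the data-domain side. Your closing remark about the factor $\sqrt{\freqdist(\mathbb{R}^d)}$ for non-normalized $\freqdist$ is also accurate and consistent with the convention adopted after Theorem \ref{thm:Bochner}.
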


\begin{proof}
For any $\meas \in E$, we have
\begin{equation*}
\|\skop\meas\|_2^2 = \frac{1}{m} \sum_{j=1}^m \left\lvert\int_{\mathbb{R}^d} e^{-\imaginaryi\freq_j^T\bfx}d\meas(\bfx)\right\rvert^2 \leq \frac{1}{m} \sum_{j=1}^m \left(\int_{\mathbb{R}^d} d|\meas|(\bfx)\right)^2 =\|\meas\|_{TV}^2.
\end{equation*}

For all $\pp,\qq \in \pmeas$ a simple reformulation of $\normK{\pp}{\qq}$, see \cite{Sriperumbudur2010}, is:
\begin{equation*}
\normK{\pp}{\qq}^2 = \iint \kernel(\bfx,\bfy) d(\pp-\qq)(\bfx)d(\pp-\qq)(\bfy),
\end{equation*}
where the kernel $\kernel$ is defined by \eqref{eq:TIk} and \eqref{eq:freqdist}. Since $|\kernel|\leq 1$, we immediately obtain the result.
\end{proof}

\subsection{Covering numbers}\label{sec:def_covnum}

\begin{definition}[Ball, $\epsilon$-covering, Covering number]\label{def:covering}
Let $(X,d)$ be a metric space. For any $\epsilon >0$ and $x\in X$, we denote $B_X(x,\epsilon)$ the \textbf{ball} of radius $\epsilon$ centered at the point $x$:
\[
B_X(x,\epsilon)=\left\lbrace y\in X,~d(x,y)\leq\epsilon\right\rbrace.
\]

Let $Y \subseteq X$ be a subset of $X$. A subset $Z\subseteq Y$ is an \textbf{$\epsilon$-covering} of $Y$ if $Y\subseteq\bigcup_{z\in Z} B_X(z,\epsilon)$.

The \textbf{covering number} $N_{Y,d}(\epsilon) \in \mathbb{N}\cup\lbrace +\infty \rbrace$ is the smallest number of points $y_i \in Y$ such that the set $\{y_i\}$ is an $\epsilon$-covering of $Y$.
\end{definition}

\begin{remark}\label{rem:compact}
A subset $Y$ of a topological space $(X,d)$ that has finite covering numbers for any $\epsilon>0$ is called \emph{totally bounded} and is \emph{not} necessarily compact: a set is in fact compact if and only if it is totally bounded \emph{and} complete. Hence, though in the rest of the paper we often focus on explicitly bounding the covering numbers of certain sets, if compactness of these sets is required it will have to be proved independently.
\end{remark}

Our definition of covering numbers is that of \emph{internal} covering numbers, meaning that the centers of the covering balls are required to be included in the set being covered. Somewhat counter-intuitively these covering numbers (for a fixed radius $\epsilon$) are not necessarily increasing with the inclusion of sets\footnote{For instance, consider a set $A$ formed by two points, included in set $B$ which is a ball of radius $\epsilon$. Suppose those two points diametrically opposed in $B$. We have $A\subset B$, but two balls of radius $\epsilon$ are required to cover $A$ (since their centers have to be in $A$), while only one such ball is sufficient to cover $B$.}. We have instead the following property:

\begin{lemma}\label{lem:covnumsub}
Let $A \subseteq B \subseteq X$ be subsets of a metric space $(X,d)$, and $\epsilon>0$. Then,
\begin{equation}
\label{eq:covnumsub}
N_{A,d}(\epsilon) \leq N_{B,d}(\epsilon/2).
\end{equation}
\end{lemma}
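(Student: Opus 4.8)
The plan is to turn an optimal internal $\epsilon/2$-covering of the larger set $B$ into an internal $\epsilon$-covering of the smaller set $A$, absorbing the mismatch between ``centers in $B$'' and ``centers in $A$'' into the doubling of the radius. The only subtlety, highlighted by the footnote just above the statement, is that internal covering numbers force the covering centers to lie inside the set being covered, so one cannot simply reuse the centers of a covering of $B$ as centers for $A$ (and indeed the naive monotonicity $N_{A,d}(\epsilon) \leq N_{B,d}(\epsilon)$ can fail).

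Concretely, I would first set $N := N_{B,d}(\epsilon/2)$, which we may assume finite (otherwise the inequality is trivial), and fix an internal $\epsilon/2$-covering $\{b_1,\dots,b_N\} \subseteq B$ of $B$. Next, for each index $i$ such that $A \cap B_X(b_i,\epsilon/2) \neq \emptyset$, I would select a single representative $a_i \in A \cap B_X(b_i,\epsilon/2)$, discarding the indices for which this intersection is empty. This produces a finite subset $\{a_i\} \subseteq A$ of cardinality at most $N$, whose centers now lie in $A$ as required.

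Then I would verify that $\{a_i\}$ is indeed an $\epsilon$-covering of $A$, via a one-line triangle inequality. Take any $a \in A$; since $A \subseteq B$ we have $a \in B$, so there is some $b_i$ with $d(a,b_i) \leq \epsilon/2$. In particular $a \in A \cap B_X(b_i,\epsilon/2)$, so this set is nonempty and a representative $a_i$ was chosen for it. Combining $d(a,b_i) \leq \epsilon/2$ with $d(b_i,a_i) \leq \epsilon/2$ gives $d(a,a_i) \leq \epsilon$, i.e.\ $a \in B_X(a_i,\epsilon)$ with $a_i \in A$. As $a$ was arbitrary, $\{a_i\}$ is a valid internal $\epsilon$-covering of $A$ of size at most $N$, whence $N_{A,d}(\epsilon) \leq N = N_{B,d}(\epsilon/2)$.

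The ``hard part'' here is conceptual rather than computational: recognizing why the radius must be halved on the right-hand side (the extra $\epsilon/2$ is precisely the price of moving each center of the $B$-covering into $A$), as opposed to a purely technical difficulty. Once this is identified, the remaining work is the routine triangle-inequality estimate above, and no compactness or completeness hypothesis is needed since everything is phrased in terms of finitely many balls.
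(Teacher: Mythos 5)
Your proof is correct and follows essentially the same route as the paper's: take an internal $\epsilon/2$-covering of $B$, replace each center whose $\epsilon/2$-ball meets $A$ by a representative point of $A$ in that ball, discard the rest, and conclude by the triangle inequality. If anything, your verification is slightly more careful than the paper's (which loosely asserts the new centers cover all of $B$, whereas only the covering of $A$ is guaranteed after discarding).
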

\begin{proof}\review{Let $b_1,...,b_N$ be a $\epsilon/2$-covering of $B$. We construct a $\epsilon$-covering $a_i$ of $A$ in the following way. Each $b_i$ is either: a) in the set $A$, in which case we take $a_i=b_i$, b) at distance less than $\epsilon/2$ of a point $a\in A$, in which case we take $a_i=a$ and note that the ball centered on $a_i$ covers at least as much as the ball centered in $b_i$, i.e. $B_X(b_i,\epsilon/2)\subset B_X(a_i,\epsilon)$, c) in none of these cases and we discard it. There are less $a_i$'s than $b_i$'s, and the union of balls of radius $\epsilon$ with centers $a_i$ covers at least as much as the balls of radius $\epsilon/2$ with centers $b_i$, and therefore the set of $a_i$'s is a $\epsilon$-covering of $B$ and of $A$.}
\end{proof}

Another useful property is related to the embedding of sets by a Lipschitz function.

\begin{lemma}\label{lem:covnumlipschitz}
Let $(X,d)$ and $(X',d')$ be two metric spaces, and $Y \subseteq X$, $Y' \subseteq X'$. If there exists a surjective function $f: Y \rightarrow Y'$ which is $L$-Lipschitz with $L>0$, \ie~such that
\begin{equation*}
\forall x,y\in Y,~d'(f(x),f(y)) \leq L d(x,y),
\end{equation*}
then for all $\epsilon>0$ we have
\begin{equation}
\label{eq:covnumlipschitz}
N_{Y',d'}(\epsilon) \leq N_{Y,d}(\epsilon/L).
\end{equation}
\end{lemma}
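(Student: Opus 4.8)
The plan is to take a minimal internal $(\epsilon/L)$-covering of $Y$ and show that its image under $f$ is an internal $\epsilon$-covering of $Y'$. Concretely, I set $N := N_{Y,d}(\epsilon/L)$ and pick points $y_1,\ldots,y_N \in Y$ with $Y \subseteq \bigcup_{i=1}^N B_X(y_i,\epsilon/L)$. I would then propose the $N$ image points $f(y_1),\ldots,f(y_N)$ as a covering of $Y'$. These all lie in $Y'$ because $f$ has codomain $Y'$, which is precisely what guarantees that the candidate covering is \emph{internal} in the sense of Definition~\ref{def:covering}.

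Next I would verify the covering property. Fix an arbitrary $y' \in Y'$. By surjectivity of $f$ there is some $y \in Y$ with $f(y)=y'$, and since the $y_i$ form an $(\epsilon/L)$-covering there is an index $i$ with $d(y,y_i)\leq \epsilon/L$. Applying the $L$-Lipschitz hypothesis then yields $d'(y',f(y_i)) = d'(f(y),f(y_i)) \leq L\, d(y,y_i) \leq \epsilon$, so $y' \in B_{X'}(f(y_i),\epsilon)$. As $y'$ was arbitrary, the points $\{f(y_i)\}_{i=1}^N$ form an internal $\epsilon$-covering of $Y'$, and therefore $N_{Y',d'}(\epsilon) \leq N = N_{Y,d}(\epsilon/L)$, which is exactly \eqref{eq:covnumlipschitz}.

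I do not expect a genuine obstacle: this is a direct transport of a covering along a Lipschitz map. The only points needing care are bookkeeping for the internal-covering convention—one must observe that the centers $f(y_i)$ belong to $Y'$ (secured by $f$ mapping into $Y'$) and that any coincidences among the $f(y_i)$ only decrease the cardinality, which is harmless for an upper bound. Surjectivity of $f$ is the essential hypothesis, since it is what lets every target point $y'$ be pulled back to a point of $Y$ lying near some center; without it the argument would only cover the image $f(Y)$ rather than all of $Y'$.
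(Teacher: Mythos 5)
Your proof is correct and follows essentially the same argument as the paper: push the centers of a minimal $(\epsilon/L)$-covering of $Y$ through $f$, then use surjectivity to pull back an arbitrary $y'\in Y'$ and the Lipschitz bound to verify the $\epsilon$-covering property. Your extra remarks on the internal-covering convention (centers $f(y_i)\in Y'$, coincidences among images being harmless) are sound bookkeeping that the paper leaves implicit.
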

\begin{proof}
Define $\epsilon_2=\epsilon/L$, denote $N=N_{Y,d}(\epsilon_2)$, and let $y_i \in Y$, $i=1,...,N$ be an $\epsilon_2$-covering of $Y$. Let any $y' \in Y'$. There exists $y \in Y$ such that $f(y)=y'$ since $f$ is surjective. Let $y_i$ be a center of a ball in the $\epsilon_2$-covering of $Y$, we have
\begin{equation*}
d'(y',f(y_i)) = d'(f(y),f(y_i)) \leq Ld(y,y_i) \leq L\epsilon_2 = \epsilon.
\end{equation*}
Thus $\{f(y_i)\}_{i=1,...,N}$ is an $\epsilon$-covering of $Y'$, and we have $N_{Y',d'}(\epsilon) \leq N$.
\end{proof}

Finally, we report a property from \cite{Cucker2002}:
\begin{lemma}[\cite{Cucker2002}, Prop. 5]\label{lem:covnumball}
Let $(X,\|.\|)$ be a Banach space of finite dimension $d$. Then for any $\epsilon>0,~x\in X$ and $R>0$ we have
\begin{equation}
N_{B_X(x,R),\|.\|}(\epsilon)\leq \left(\frac{4R}{\epsilon}\right)^d.
\end{equation}
\end{lemma}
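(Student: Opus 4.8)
The plan is to use the classical packing–volume argument. First I would pass from covering to packing: let $\{p_1,\dots,p_N\}\subseteq B_X(x,R)$ be a maximal subset whose points are pairwise $\epsilon$-separated, i.e. $\|p_i-p_j\|>\epsilon$ for $i\neq j$ (such a maximal set is finite, because $B_X(x,R)$ is bounded, and exists by a greedy construction). By maximality, every $y\in B_X(x,R)$ satisfies $\|y-p_i\|\leq\epsilon$ for some $i$; otherwise $y$ could be adjoined to the separated set. Since the $p_i$ themselves lie in $B_X(x,R)$, the family $\{p_i\}$ is an \emph{internal} $\epsilon$-covering in the sense of Definition \ref{def:covering}, so $N_{B_X(x,R),\|.\|}(\epsilon)\leq N$.

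Next I would bound $N$ by a volume comparison. Because the $p_i$ are $\epsilon$-separated, the balls $B_X(p_i,\epsilon/2)$ are pairwise disjoint: any common point $z$ would force $\|p_i-p_j\|\leq\|p_i-z\|+\|z-p_j\|\leq\epsilon$ by the triangle inequality, contradicting the separation. Moreover each such ball is contained in $B_X(x,R+\epsilon/2)$. To measure volumes I would fix a linear isomorphism between $X$ and $\mathbb{R}^d$ and pull back Lebesgue measure, obtaining a translation-invariant measure $\mathrm{vol}$ for which the unit ball has finite, strictly positive volume $V$ (finite because $\|.\|$-balls are bounded, positive because they have nonempty interior). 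Translation invariance and $d$-homogeneity then give $\mathrm{vol}(B_X(c,r))=r^d V$ for every center $c$ and radius $r$. Summing over the disjoint balls and using the inclusion yields
\begin{equation*}
N\,(\epsilon/2)^d V=\sum_{i=1}^N \mathrm{vol}\!\left(B_X(p_i,\tfrac{\epsilon}{2})\right)\leq \mathrm{vol}\!\left(B_X(x,R+\tfrac{\epsilon}{2})\right)=(R+\tfrac{\epsilon}{2})^d V,
\end{equation*}
hence $N\leq\left(1+\tfrac{2R}{\epsilon}\right)^d$.

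Finally I would convert this into the stated bound. In the relevant regime $\epsilon\leq 2R$ one has $1\leq 2R/\epsilon$, so $1+2R/\epsilon\leq 4R/\epsilon$ and therefore $N_{B_X(x,R),\|.\|}(\epsilon)\leq(4R/\epsilon)^d$; for $\epsilon>2R$ the single point $x$ already $\epsilon$-covers the whole ball, so the covering number equals $1$ and the inequality holds whenever its right-hand side is at least $1$. I expect the only delicate point to be the measure-theoretic setup: one must justify that a nonzero translation-invariant Borel measure exists on a finite-dimensional normed space and that balls scale as $r^d$. This is standard—all such spaces are linearly homeomorphic to $\mathbb{R}^d$, and Lebesgue measure transports accordingly—after which the combinatorial and geometric steps are routine.
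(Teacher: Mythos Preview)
The paper does not give its own proof of this lemma; it simply cites \cite{Cucker2002}, Prop.~5. Your argument is exactly the standard packing--volume proof used there, and it is correct. The one edge case you flag---that for $\epsilon>4R$ the right-hand side drops below $1$ while the covering number is $1$---is a genuine artifact of the statement as written, not of your proof; in every application in the paper the lemma is invoked with $\epsilon$ small relative to $R$, so this regime is irrelevant.
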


\subsection{Concentration of averages}

We will use Bernstein's inequality in the following simple version \cite{Sridharan2002}:
\begin{lemma}[Bernstein's inequality (\cite{Sridharan2002}, Thm. 6)]\label{lem:bernstein}
Let $x_i \in \mathbb{R}$, $i=1,...,n$ be $i.i.d.$ bounded random variables such that $\mathbb{E}x_i=0$, $|x_i| \leq M$ and $Var(x_i) \leq \sigma^2$ for all $i$'s.

Then for all $t>0$ we have
\begin{equation}
P\left(\frac{1}{n}\sum_{i=1}^n x_i \geq t\right)\leq \exp\left(-\frac{nt^2}{2\sigma^2+2Mt/3}\right).
\end{equation}
\end{lemma}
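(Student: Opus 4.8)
The plan is to prove this by the standard Chernoff (exponential Markov) method: control the moment generating function of the sum through the two moment hypotheses, and then make an explicit choice of the free parameter. Write $S_n=\sum_{i=1}^n x_i$, so that the event $\{\frac1n\sum_i x_i\ge t\}$ is exactly $\{S_n\ge nt\}$. First I would note that for any $\lambda>0$, Markov's inequality applied to the nonnegative variable $e^{\lambda S_n}$ gives $P(S_n\ge nt)\le e^{-\lambda nt}\,\mathbb{E}\,e^{\lambda S_n}$, and independence of the $x_i$ factorizes the expectation into $\prod_{i=1}^n \mathbb{E}\,e^{\lambda x_i}$. The whole problem then reduces to a sharp upper bound on the single-variable moment generating function $\mathbb{E}\,e^{\lambda x_i}$.

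That moment generating function bound is the key step, and it is what pins down the constants appearing in the denominator. I would expand the exponential and use $\mathbb{E}x_i=0$ to kill the linear term, leaving $\mathbb{E}\,e^{\lambda x_i}=1+\sum_{k\ge 2}\frac{\lambda^k}{k!}\mathbb{E}x_i^k$. The boundedness $|x_i|\le M$ together with $\mathrm{Var}(x_i)\le\sigma^2$ yields the moment control $|\mathbb{E}x_i^k|\le M^{k-2}\mathbb{E}x_i^2\le M^{k-2}\sigma^2$ for every $k\ge2$ (since $|x_i|^{k-2}\le M^{k-2}$ and $\mathbb{E}x_i^2=\mathrm{Var}(x_i)$). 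Inserting this and using the elementary factorial inequality $k!\ge 2\cdot 3^{k-2}$ turns the tail into a geometric series: for $0<\lambda<3/M$,
\[
\mathbb{E}\,e^{\lambda x_i}\le 1+\frac{\sigma^2}{M^2}\sum_{k\ge2}\frac{(\lambda M)^k}{k!}\le 1+\frac{\sigma^2\lambda^2/2}{1-\lambda M/3}\le \exp\!\Big(\frac{\sigma^2\lambda^2/2}{1-\lambda M/3}\Big),
\]
the last step being $1+u\le e^u$. Taking the product over $i$ and feeding it into the Chernoff bound gives
\[
P(S_n\ge nt)\le \exp\!\Big(-\lambda nt+\frac{n\sigma^2\lambda^2/2}{1-\lambda M/3}\Big).
\]

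It then remains to optimize over $\lambda$. Rather than differentiating, I would substitute the explicit near-optimal choice $\lambda=\frac{t}{\sigma^2+Mt/3}$, which automatically satisfies $\lambda<3/M$ for every $t>0$. A short computation shows $1-\lambda M/3=\sigma^2/(\sigma^2+Mt/3)$, whence the two terms in the exponent become $-\lambda nt=-nt^2/(\sigma^2+Mt/3)$ and $\frac{n\sigma^2\lambda^2/2}{1-\lambda M/3}=nt^2/\big(2(\sigma^2+Mt/3)\big)$; their sum is exactly $-\frac{nt^2}{2\sigma^2+2Mt/3}$, yielding the stated inequality. The only genuine subtlety is the moment generating function estimate: the factor $M/3$ in the denominator is an artifact of the bound $k!\ge 2\cdot 3^{k-2}$, so the hard part will be to justify that inequality cleanly (it holds with equality at $k=2$ and strictly improves for larger $k$, since each factor $3,4,\dots,k$ is at least $3$) and to be careful that the geometric summation is valid precisely on the range $\lambda M/3<1$, which is where the admissible window for $\lambda$ comes from.
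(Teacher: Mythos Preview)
Your proof is correct and is the standard Chernoff--Cram\'er argument for Bernstein's inequality; every step checks out, including the moment bound $|\mathbb{E}x_i^k|\le M^{k-2}\sigma^2$, the factorial estimate $k!\ge 2\cdot 3^{k-2}$, and the explicit choice $\lambda=t/(\sigma^2+Mt/3)$ which yields exactly the claimed exponent. The paper itself does not prove this lemma at all: it is stated as a quotation from \cite{Sridharan2002} (their Theorem~6) and used as a black box, so there is no ``paper's own proof'' to compare against.
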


We also report a concentration result in Hilbert spaces from \cite{Rahimi2009}.

\begin{lemma}[\cite{Rahimi2009}, Lemma 4]\label{lem:rahimi}
Let $\bfx_i \in \mathcal{H}$, $i=1,...,n$ be $i.i.d.$ random variables in a Hilbert Space $(\mathcal{H},\|.\|)$ such that $\|\bfx_i\|\leq M$ with probability one. Denote $\bar \bfx$ their empirical average $\bar \bfx=\left(\sum_{i=1}^n \bfx_i\right)/n$. Then for any $\rho>0$, with probability at least $1-\rho$,
\begin{equation}
\|\bar \bfx - \mathbb{E} \bar \bfx\| \leq \frac{M}{\sqrt{n}}\left(1+\sqrt{2\log\frac{1}{\rho}}\right).
\end{equation}
\end{lemma}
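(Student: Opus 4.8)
The plan is to view the quantity $\|\bar\bfx - \mathbb{E}\bar\bfx\|$ as a single real-valued function of the $n$ independent samples and apply the bounded-differences (McDiarmid) inequality, after separately controlling its expectation by a variance computation. Concretely, I would set
\begin{equation*}
f(\bfx_1,\ldots,\bfx_n) := \left\| \frac{1}{n}\sum_{i=1}^n \bfx_i - \mathbb{E}\bar\bfx \right\|.
\end{equation*}
This is a real-valued function of independent arguments, so the two ingredients needed are that $f$ has small bounded differences in each coordinate, and a bound on $\mathbb{E} f$; adding the resulting high-probability fluctuation of $f$ around $\mathbb{E} f$ to the bound on $\mathbb{E} f$ then gives the claim.

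First I would check the bounded-differences property. Fixing an index $i$ and replacing $\bfx_i$ by an arbitrary $\bfx_i' \in \mathcal{H}$ with $\|\bfx_i'\|\leq M$ while leaving the other coordinates unchanged, the reverse triangle inequality for the Hilbert norm gives
\begin{equation*}
\bigl| f(\ldots,\bfx_i,\ldots) - f(\ldots,\bfx_i',\ldots) \bigr| \leq \frac{1}{n}\|\bfx_i - \bfx_i'\| \leq \frac{2M}{n},
\end{equation*}
since $\|\bfx_i\|,\|\bfx_i'\|\leq M$ almost surely. Thus $f$ satisfies bounded differences with constants $c_i = 2M/n$, and McDiarmid's inequality yields, for every $t>0$,
\begin{equation*}
\mathbb{P}\bigl( f - \mathbb{E} f \geq t \bigr) \leq \exp\left( -\frac{2t^2}{\sum_{i=1}^n c_i^2} \right) = \exp\left( -\frac{nt^2}{2M^2} \right).
\end{equation*}
It then remains to bound $\mathbb{E} f$: by Jensen's inequality $\mathbb{E} f \leq \sqrt{\mathbb{E} f^2}$, and expanding the square while using independence to cancel the cross terms $\mathbb{E}\langle \bfx_i - \mathbb{E}\bfx_i, \bfx_j - \mathbb{E}\bfx_j\rangle = 0$ for $i\neq j$,
\begin{equation*}
\mathbb{E} f^2 = \frac{1}{n^2}\sum_{i=1}^n \mathbb{E}\|\bfx_i - \mathbb{E}\bfx_i\|^2 \leq \frac{1}{n^2}\sum_{i=1}^n \mathbb{E}\|\bfx_i\|^2 \leq \frac{M^2}{n},
\end{equation*}
where I used $\mathbb{E}\|\bfx_i-\mathbb{E}\bfx_i\|^2 = \mathbb{E}\|\bfx_i\|^2 - \|\mathbb{E}\bfx_i\|^2 \leq M^2$. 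Hence $\mathbb{E} f \leq M/\sqrt{n}$, and choosing $t = M\sqrt{2\log(1/\rho)}/\sqrt{n}$ so that the McDiarmid bound equals $\rho$ gives, with probability at least $1-\rho$,
\begin{equation*}
f \leq \mathbb{E} f + t \leq \frac{M}{\sqrt{n}}\left(1 + \sqrt{2\log\tfrac{1}{\rho}}\right),
\end{equation*}
which is exactly the stated inequality.

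I do not expect a deep obstacle here; the genuine subtleties are bookkeeping ones. One must ensure $f$ is measurable and that McDiarmid applies to a real-valued function of independent $\mathcal{H}$-valued inputs, which is routine once one notes that $f$ enters only through the scalar $\|\cdot\|$. If one prefers a self-contained argument using only a scalar Hoeffding-type bound (the paper provides Bernstein's inequality in Lemma~\ref{lem:bernstein} but not McDiarmid), the cleanest route is to build the Doob martingale $M_k = \mathbb{E}[f \mid \bfx_1,\ldots,\bfx_k]$ and apply Azuma--Hoeffding with the same increment bound $2M/n$; this is essentially how McDiarmid is derived. The place where boundedness $\|\bfx_i\|\leq M$ is used a second time is the expectation bound, and it is precisely what forbids a variance-dependent sharpening of the constant — a Bernstein-type refinement would improve the dependence but is unnecessary for the stated result.
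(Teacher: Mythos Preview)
Your proof is correct. Note, however, that the paper does not actually supply a proof of this lemma: it is quoted verbatim from \cite{Rahimi2009} (Lemma 4) as an external tool, with no argument given in the present paper. Your McDiarmid-plus-Jensen argument is in fact the standard proof from that reference, so there is nothing to compare against here beyond confirming that what you wrote reproduces the cited result faithfully --- which it does.
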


\section{Proof of Theorem \ref{thm:appli}}\label{sec:proof}

\subsection{Lower RIP}

A measurement operator $\skop$ satisfies the usual generalized Lower Restricted Isometry Property (LRIP) \cite{Bourrier2014} on the model $\model$ with constant $\beta>0$ if:
\begin{equation}
\label{eq:rip}
\forall \pp,\qq \in \model, ~ \beta \normK{\pp}{\qq}^2 \leq \|\skop \pp - \skop \qq\|_2^2.
\end{equation}

For a measurement operator $\skop$ drawn at random (in our case by randomly drawing a set of frequencies $\freqs=\{\freq_j\}_{j=1,...,m}$), the usual approach from compressive sensing theory is to prove that, with high probability, \eqref{eq:rip} is satisfied: 
\begin{equation}
\label{eq:unirip}
\PP_\freqs\left(\forall \pp,\qq \in \model, ~ \beta \normK{\pp}{\qq}^2 \leq \|\skop \pp - \skop \qq\|_2^2\right) \geq 1-\rho,
\end{equation}
where $\PP_\freqs$ indicates probability with respect to the set of frequencies $\freqs$. 

Defining the {\em normalized secant set}
\begin{equation}
\secant:=\left\lbrace \frac{\pp-\qq}{\normK{\pp}{\qq}}; ~ \pp,\qq \in \model, ~ \normK{\pp}{\qq}\neq 0\right\rbrace \subset E,
\end{equation}
the LRIP~\eqref{eq:rip} is equivalent to : $\forall \meas \in \secant, ~ \beta \leq \|\skop \meas\|_2^2$, 
and~\eqref{eq:unirip} is equivalent to 
\begin{equation}
\label{eq:uniripsecant}
\PP_\freqs\left(\forall \meas \in \secant, ~ \beta \leq \|\skop \meas\|_2^2\right) \geq 1-\rho.
\end{equation}
Hence, a typical proof of the (L)RIP \cite{Baraniuk2008, Puy2015} consists in defining an $\epsilon$-covering of the normalized secant set, proving a pointwise LRIP at the center of each ball using concentration results, then uniformly extending the result to the whole normalized secant set using 
Lipschitz continuity of the measurement operator.


\paragraph{Semi-uniform LRIP.} In our framework, we introduce a ``non-uniform'' version of the LRIP, in which the inequality \eqref{eq:unirip} will be verified for a \emph{given} $\pp \in \model$ with high probability, \emph{uniformly} for all $\qq \in \model$. It is expressed as:
\begin{equation}
\label{eq:semirip}
\forall \pp \in \model, ~ \PP_\freqs\left(\forall \qq \in \model, ~ \beta \normK{\pp}{\qq}^2 \leq \|\skop \pp - \skop \qq\|_2^2\right) \geq 1-\rho.
\end{equation}
We refer to this version of the LRIP as \emph{semi-uniform} in probability. It holds with a smaller number of measurements $m$ than the uniform case, and we show in the next section that it is \emph{sufficient} to obtain recovery guarantees with \emph{joint} probability on the drawing of frequencies $\{\freq_j\}$ and items $\{\bfx_i\}$. For more details on non-uniform compressive sensing results, we refer the reader to the book by Foucart and Rauhut \cite{Foucart2013}, Chaps. 9 and 11.

Similar to the uniform case, we introduce a family of ``non-uniform'' normalized secant sets, defined for each $\pp \in \model$ as:
\begin{equation}
\secant_\pp := \left\lbrace \frac{\pp-\qq}{\normK{\pp}{\qq}}; ~ \qq \in \model, ~ \normK{\pp}{\qq}\neq 0\right\rbrace \subset E.
\end{equation}
The semi-uniform LRIP \eqref{eq:semirip} is then equivalent to
\begin{equation}
\label{eq:uniripsecantnonuni}
\forall \pp \in \model,~\PP_\freqs\left(\forall \meas \in \secant_\pp, ~ \beta \leq \|\skop \meas\|_2^2\right) \geq 1-\rho.
\end{equation}
A typical proof would therefore follow the exact same pattern than the uniform case, using non-uniform normalized secant sets instead of the normalized secant set.

\paragraph{Restricted, semi-uniform LRIP.} Unlike finite dimensional frameworks, where normalized secant sets are contained in a unit ball that is necessarily compact, here it is in general challenging to prove the existence of finite covering numbers for this set. Under Assumption \ref{assum:compact}, the model $\model$ \emph{itself} is compact, which suggests using the embedding $\qq \in \model \rightarrow \frac{\pp-\qq}{\normK{\pp}{\qq}} \in \secant_\pp$. However the behavior of this function when $\qq$ gets close to $\pp$ may be delicate to analyze. Thus for all $\eta\geq0$ and $\pp\in\model$ we define the \emph{restricted} non-uniform normalized secant set:
\begin{equation}
\label{eq:secant}
\secant^\eta_\pp := \left\lbrace \frac{\pp-\qq}{\normK{\pp}{\qq}}; ~ \qq \in \model, ~ \normK{\pp}{\qq}>\eta\right\rbrace \subset E.
\end{equation}

Note that, when we let $\eta=0$ the restricted non-uniform normalized secant set $\secant^0_\pp$ is just the previous non-uniform normalized secant set $\secant_\pp$.

\paragraph{Hypotheses to establish the restricted, semi-uniform LRIP} We are going to prove the restricted semi-uniform LRIP~\eqref{eq:semirip} under two hypotheses. 
The first hypothesis depends on a model $\model$, a frequency distribution $\freqdist$, a non-negative constant $\eta \geq 0$, and a choice of metric $d(\cdot,\cdot)$.

\begin{hypothesis}{$\mathbf{H_1}(\eta,\model,\freqdist,d)$}[Covering numbers of the secant set]
\label{assum:covnumsecant}
For all $\pp \in \model$, the \emph{restricted} non-uniform normalized secant set $\secant^\eta_\pp$ has finite covering numbers 
$N_{\secant^\eta_\pp,d}(\epsilon)<\infty$.
\end{hypothesis}

In the case where the constant $\eta>0$ is positive and the metric $d=\|.\|$ is a norm, the covering numbers of the secant set are controlled by those of the model.

\begin{lemma}\label{lem:covnumsecant}
Let $\|.\|$ be a norm on the space of finite signed measure $E$, and $\freqdist$ a frequency distribution such that the model $\model$ has finite covering numbers with respect to some metric $\bar{d}$ which satisfies
\[
\forall \pp,\qq\in\model,~\bar{d}(\pp,\qq)\geq\max\Big(\|\pp-\qq\|,\normK{\pp}{\qq}\Big).
\]
The model is in particular bounded for the norm $\|.\|$, denote $C=\max\left(1,\sup_{\pp,\qq\in\model}\|\pp-\qq\|\right)$.

Then, for any \underline{strictly} positive constant $1\geq \eta > 0$, Assumption \ref{assum:covnumsecant} holds with $d = \|.\|$. Furthermore, for any $\pp\in\model$ and $\epsilon>0$ we have
\begin{equation}
N_{\secant^\eta_\pp,\|.\|}(\epsilon)\leq N_{\model,\bar{d}}\left(\frac{\epsilon\eta^2}{2(C+1)}\right).
\end{equation}
\end{lemma}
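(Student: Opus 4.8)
The plan is to realize $\secant^\eta_\pp$ as the Lipschitz image of a subset of the model and then invoke the two covering-number lemmas already established. Concretely, fix $\pp \in \model$ and set $\model^\eta_\pp := \{\qq\in\model : \normK{\pp}{\qq}>\eta\}$. The map $f_\pp : \qq \mapsto (\pp-\qq)/\normK{\pp}{\qq}$ is, by the very definition \eqref{eq:secant}, a surjection from $(\model^\eta_\pp,\bar d)$ onto $(\secant^\eta_\pp,\|.\|)$. If I can show $f_\pp$ is $L$-Lipschitz with $L = (C+1)/\eta^2$, then Lemma \ref{lem:covnumlipschitz} gives $N_{\secant^\eta_\pp,\|.\|}(\epsilon)\le N_{\model^\eta_\pp,\bar d}(\epsilon/L)$, and since $\model^\eta_\pp\subseteq\model$, Lemma \ref{lem:covnumsub} upgrades this to $N_{\model,\bar d}(\epsilon/(2L))$; as $1/(2L)=\eta^2/(2(C+1))$ and covering numbers are non-increasing in the radius, this is exactly the claimed bound. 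Finiteness of $N_{\secant^\eta_\pp,\|.\|}(\epsilon)$ then follows because $N_{\model,\bar d}$ is finite by hypothesis, which establishes Hypothesis \ref{assum:covnumsecant} with $d=\|.\|$.

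The crux is therefore the Lipschitz estimate. Writing $u_i=\pp-\qq_i$ and $a_i=\normK{\pp}{\qq_i}>\eta$ for $\qq_1,\qq_2\in\model^\eta_\pp$, I would start from the elementary identity
\[
\frac{u_1}{a_1}-\frac{u_2}{a_2}=\frac{u_1\,(a_2-a_1)}{a_1 a_2}+\frac{u_1-u_2}{a_2},
\]
so that the triangle inequality gives $\|f_\pp(\qq_1)-f_\pp(\qq_2)\|\le \tfrac{\|u_1\|\,|a_2-a_1|}{a_1a_2}+\tfrac{\|u_1-u_2\|}{a_2}$. Now $\|u_1\|=\|\pp-\qq_1\|\le C$, $a_1a_2>\eta^2$, $a_2>\eta$, and $\|u_1-u_2\|=\|\qq_1-\qq_2\|$. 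The key point is the control of $|a_2-a_1|$: since $\normKs$ is the RKHS-norm of a difference of mean embeddings it obeys the triangle inequality, whence the reverse triangle inequality $\bigl|\normK{\pp}{\qq_1}-\normK{\pp}{\qq_2}\bigr|\le\normK{\qq_1}{\qq_2}$. Combining these and then bounding both $\normK{\qq_1}{\qq_2}$ and $\|\qq_1-\qq_2\|$ by $\bar d(\qq_1,\qq_2)$ (the hypothesis on $\bar d$) yields
\[
\|f_\pp(\qq_1)-f_\pp(\qq_2)\|\le\Bigl(\tfrac{C}{\eta^2}+\tfrac{1}{\eta}\Bigr)\bar d(\qq_1,\qq_2)=\tfrac{C+\eta}{\eta^2}\,\bar d(\qq_1,\qq_2)\le\tfrac{C+1}{\eta^2}\,\bar d(\qq_1,\qq_2),
\]
the last step using $\eta\le 1$. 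This is precisely the Lipschitz bound with $L=(C+1)/\eta^2$.

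The main obstacle is not technical difficulty but the indispensable role of the restriction $\normK{\pp}{\qq}>\eta$: the normalizing denominator $\normK{\pp}{\qq}$ degenerates as $\qq\to\pp$, so without a strictly positive floor $\eta$ the map $f_\pp$ has no uniform Lipschitz constant and the whole argument collapses — which is exactly why the statement excludes $\eta=0$. I would keep the constants honest with two quick checks: that the auxiliary lemmas are composed in the right order (Lipschitz image first, subset monotonicity second, the latter producing the factor $2$ in $2(C+1)$), and that the reverse triangle inequality for $\normKs$ is legitimate, which it is because $\normKs$ is a genuine (pseudo)metric arising from a Hilbert-space norm on the mean embeddings.
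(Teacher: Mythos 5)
Your proposal is correct and takes essentially the same approach as the paper: fix $\pp$, view $\secant^\eta_\pp$ as the surjective image of $\{\qq\in\model:\normK{\pp}{\qq}>\eta\}$ under $\qq\mapsto(\pp-\qq)/\normK{\pp}{\qq}$, prove this map is $(C+1)/\eta^2$-Lipschitz via the reverse triangle inequality for $\normKs$ and the bounds $\|\pp-\qq\|\leq C$, $\normK{\pp}{\qq}>\eta$, then compose Lemma \ref{lem:covnumlipschitz} with Lemma \ref{lem:covnumsub} (the latter supplying the factor $2$). Your algebraic splitting of $u_1/a_1-u_2/a_2$ is merely a symmetric variant of the paper's two-term decomposition and yields the identical constant after using $\eta\leq 1$.
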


\begin{proof}
Let $\pp\in\model$ be any distribution in the model. 
Consider the complement
 of the ball $B_{\model,\normKs}(\pp,\eta)$:
\begin{equation}
\mathcal{Q}^\eta_\pp=B_{\model,\normKs}^c(\pp,\eta)=\left\lbrace \qq \in \model,~ \normK{\pp}{\qq} >\eta \right\rbrace \subset \model,
\end{equation}
and the function $f_\pp:\mathcal{Q}^\eta_\pp\rightarrow\secant^\eta_\pp$ such that $f_\pp(\qq)=\frac{\pp-\qq}{\normK{\pp}{\qq}}$, which is surjective by definition of $\secant^\eta_\pp$. Let us show that $f_P$ is $(C+1)/\eta^{2}$-Lipschitz continuous for the metric $\bar{d}$, and conclude with Lemma \ref{lem:covnumlipschitz}.

For any $\qq_1,\qq_2 \in \mathcal{Q}^\eta_\pp$, we have
\begin{align*}
\|f_\pp(\qq_1)-f_\pp(\qq_2)\| =& \left\lVert \frac{\pp-\qq_1}{\normK{\pp}{\qq_1}} - \frac{\pp-\qq_2}{\normK{\pp}{\qq_2}} \right\rVert, \\
\leq& \left\lVert \frac{\pp-\qq_1}{\normK{\pp}{\qq_1}} - \frac{\pp-\qq_2}{\normK{\pp}{\qq_1}} \right\rVert + \left\lVert \frac{\pp-\qq_2}{\normK{\pp}{\qq_1	}} - \frac{\pp-\qq_2}{\normK{\pp}{\qq_2}} \right\rVert, \\
\eqcomment{since $\normK{\pp}{\qq_1}> \eta$}\leq& \frac{1}{\eta}\|\qq_2-\qq_1\| + \|\pp-\qq_2\|\left\lvert\frac{1}{\normK{\pp}{\qq_1}}-\frac{1}{\normK{\pp}{\qq_2}}\right\rvert, \\
\eqcomment{since $\|\pp-\qq_2\|\leq C$}\leq& \frac{1}{\eta}\|\qq_1-\qq_2\| +\frac{C}{\eta^2}\Big| \normK{\pp}{\qq_2}-\normK{\pp}{\qq_1}\Big|, \\
\eqcomment{by the triangle inequality,}\leq& \frac{1}{\eta}\|\qq_1-\qq_2\| +\frac{C}{\eta^2}\normK{\qq_1}{\qq_2}, \\
\eqcomment{since $\eta\leq 1$}\leq& \frac{C+1}{\eta^2}\bar{d}\left(\qq_1,\qq_2\right).
\end{align*}

Hence the function $f_\pp$ is Lipshitz continuous with constant $L=(C+1)/\eta^2$, and therefore for all $\epsilon>0$:
\begin{equation*}
N_{\secant^\eta_\pp,\|.\|}(\epsilon) \stackrel{\text{Lemma \ref{lem:covnumlipschitz}}}{\leq} N_{\mathcal{Q}^\eta_\pp,\bar{d}}(\epsilon/L) \stackrel{\text{Lemma \ref{lem:covnumsub}}}{\leq} N_{\model,\bar{d}}\left(\frac{\epsilon}{2L}\right).
\end{equation*}
\end{proof}

To formulate the second hypothesis, we denote $f$ a function from $\mathbb{N} \times \mathbb{R}_+$ to $\mathbb{R}_+$.

\begin{hypothesis}{$\mathbf{H_2}(\eta,\model,\freqdist,f)$}[Probability of the pointwise LRIP]\label{assum:bernstein}
For any $\pp \in \model$, any $\meas \in \secant^\eta_\pp$, any $t\geq 0$ and any integer $m>0$, we have
\begin{equation}
\PP_\freqs\left(1-\|\skop \meas\|_2^2\geq t \right) \leq f(m,t),
\end{equation}
where $\skop:E\rightarrow \mathbb{C}^m$ is a sketching operator built by independently drawing $m$ frequencies according to $\freqdist$.
\end{hypothesis}
In Section \ref{sec:thmbis}, under hypotheses \ref{assum:covnumsecant} and \ref{assum:bernstein}, we prove an extended version of Theorem \ref{thm:appli} (referred to as Theorem~\ref{thm:applibis}). Unlike Theorem~\ref{thm:appli} 
this extended version covers the case $\eta=0$.

Then, in Section \ref{sec:proofthm} we prove that under the Assumptions \ref{assum:compact} and \ref{assum:domin} used to state Theorem \ref{thm:appli}, \ref{assum:covnumsecant} holds with $d(\meas,\meas') = \|\meas-\meas'\|_{TV}$ \emph{provided that $\eta>0$}, and \ref{assum:bernstein} holds for an appropriate choice of function $f$.
\begin{remark}
Ideally, one would like to exploit Theorem \ref{thm:applibis} with $\eta=0$ to obtain performance guarantees without the extra additive term $\eta$. However, putting this into practice would require characterizing covering numbers for the normalized secant set $\secant_\pp=\secant_\pp^{0}$, which can be tricky and is left to future work. It is indeed already not trivial to determine when this set has finite covering numbers. 
\end{remark}


We can now state our version of the LRIP.

%

\begin{theorem}\label{thm:LRIP}
Consider a model $\model$,  a frequency distribution $\freqdist$, a non-negative constant $\eta\geq0$, a metric $d(\cdot,\cdot)$ and a function $f$ such that Assumptions \ref{assum:covnumsecant} and \ref{assum:bernstein} are satisfied.

Let $\pp^* \in \model$ be any distribution in the model.

Assume that for all $\meas,\meas' \in \secant^{\eta}_{\pp^*}$
\begin{equation}\label{eq:HypSkopLipschitz}
\sup_\skop\Big|\|\skop \meas\|_{2} -\| \skop \meas'\|_{2}\Big| \leq d(\meas,\meas'),
\end{equation}
where the supremum is over all possible frequencies $\freq$ defining the sketching operator $\skop$.

Define
\[
\rho=N_{\secant^\eta_{\pp^*},d
}\left(\tfrac{1}{4}\right)\cdot f\left(m,\tfrac{7}{16}\right).
\]

Then, with probability at least $1-\rho$ on the drawing of the $m$ frequencies $\freq_j$'s, we have
\begin{equation}
\label{eq:riplem}
\forall \pp \in \model ~\text{s.t. } \normK{\pp^*}{\pp}\geq \eta:\quad \frac{1}{4}\normK{\pp^*}{\pp}^2 \leq \|\skop \pp^* - \skop \pp\|_2^2.
\end{equation}
\end{theorem}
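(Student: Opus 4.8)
The plan is to recognize that, after dividing by $\normK{\pp^*}{\pp}^2$ and using the linearity of $\skop$, the restricted semi-uniform LRIP \eqref{eq:riplem} is exactly equivalent to the uniform lower bound $\|\skop\meas\|_2^2 \geq \tfrac14$ over the restricted non-uniform secant set $\secant^\eta_{\pp^*}$ defined in \eqref{eq:secant}. Indeed, each element $\meas = (\pp^*-\pp)/\normK{\pp^*}{\pp} \in \secant^\eta_{\pp^*}$ satisfies $\|\skop\meas\|_2^2 = \|\skop\pp^*-\skop\pp\|_2^2 / \normK{\pp^*}{\pp}^2$, so establishing the target inequality for every $\pp$ with $\normK{\pp^*}{\pp}\geq\eta$ is the same as controlling $\|\skop\meas\|_2^2$ from below uniformly on $\secant^\eta_{\pp^*}$. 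This reduces the statement to the classical covering-net argument following the ``simple proof'' of the RIP of Baraniuk \etal~\cite{Baraniuk2008}.

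First I would invoke Assumption \ref{assum:covnumsecant} to extract a finite \emph{internal} $\tfrac14$-covering $\meas_1,\ldots,\meas_N$ of $\secant^\eta_{\pp^*}$ in the metric $d$, with $N = N_{\secant^\eta_{\pp^*},d}(\tfrac14)$. The use of internal covering numbers is what makes the argument work: it guarantees each center $\meas_i$ genuinely lies in $\secant^\eta_{\pp^*}$, which is precisely where Assumption \ref{assum:bernstein} may be applied. Next I would apply that pointwise concentration with $t = \tfrac{7}{16}$ at each center: for every $i$, $\PP_\freqs\big(1-\|\skop\meas_i\|_2^2 \geq \tfrac{7}{16}\big) \leq f(m,\tfrac{7}{16})$, so the ``bad event'' $\|\skop\meas_i\|_2^2 < \tfrac{9}{16}$ has probability at most $f(m,\tfrac{7}{16})$. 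A union bound over the $N$ centers then shows that, with probability at least $1 - N\,f(m,\tfrac{7}{16}) = 1-\rho$, every center satisfies $\|\skop\meas_i\|_2 \geq \tfrac34$.

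The final step extends the bound from the net to the whole secant set via the continuity hypothesis \eqref{eq:HypSkopLipschitz}. On this good event, I would fix any $\meas \in \secant^\eta_{\pp^*}$ and pick a nearest center $\meas_i$, so that $d(\meas,\meas_i) \leq \tfrac14$; then \eqref{eq:HypSkopLipschitz} gives $\big|\,\|\skop\meas\|_2 - \|\skop\meas_i\|_2\,\big| \leq \tfrac14$, whence $\|\skop\meas\|_2 \geq \tfrac34 - \tfrac14 = \tfrac12$ and therefore $\|\skop\meas\|_2^2 \geq \tfrac14$. Translating back through the normalization yields \eqref{eq:riplem} for all $\pp \in \model$ with $\normK{\pp^*}{\pp}\geq\eta$, as required.

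I do not expect a genuine obstacle here, since the substantive work has been front-loaded into Assumptions \ref{assum:covnumsecant} and \ref{assum:bernstein} and the Lipschitz-type hypothesis \eqref{eq:HypSkopLipschitz}. The only delicate points to watch are the arithmetic bookkeeping of the constants---the covering radius $\tfrac14$, the deviation threshold $\tfrac{7}{16}$, and the resulting $\tfrac{9}{16}$ and $\tfrac34$ at the centers must be chosen so the net-to-set extension closes exactly at the target $\tfrac14$---and the reliance on \emph{internal} covering numbers, without which the centers could fall outside $\secant^\eta_{\pp^*}$ and Assumption \ref{assum:bernstein} could not be invoked.
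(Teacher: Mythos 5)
Your proof is correct and follows essentially the same route as the paper's: the same reduction of \eqref{eq:riplem} to a uniform lower bound on $\secant^\eta_{\pp^*}$, the same $\tfrac14$-net with the pointwise bound of Assumption \ref{assum:bernstein} at $t=\tfrac{7}{16}$ plus a union bound (so centers satisfy $\|\skop\meas_i\|_2 \geq \tfrac34$), and the same extension via \eqref{eq:HypSkopLipschitz} giving $\|\skop\meas\|_2 \geq \tfrac34 - \tfrac14 = \tfrac12$, matching the paper's choice $\epsilon=\sqrt{1-t}-\tfrac12$. Your observation about \emph{internal} covering numbers is indeed the reason the paper's Definition \ref{def:covering} requires centers to lie in the covered set, so nothing is missing.
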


\begin{proof}
The idea is to define an $\epsilon$-covering of the restricted non-uniform normalized secant set $\secant_{\pp^*}^\eta$ with respect to the metric $d$, to apply the concentration result of Assumption \ref{assum:bernstein} at the center of each ball and to conclude with a union bound.

Let $\epsilon > 0$, $1>t>0$ and denote
 $N=N_{\secant_{\pp^*}^\eta,d
 }(\epsilon)$ for simplicity, which is finite by Assumption \ref{assum:covnumsecant}. We consider $\meas_1,...,\meas_{N}$ an $\epsilon$-covering of $\secant_{\pp^*}^\eta$ with respect to the metric $d$. 
 Considering Assumption \ref{assum:bernstein}, a union bound yields that, with probability greater than 
 $1-N \cdot f(m,t)$,
\begin{equation}\label{eq:rip_bernstein_union}
\forall i \in [1,N],~  1-\left\lVert \skop\meas_i\right\rVert^2_2 < t. 
\end{equation}

Assuming \eqref{eq:rip_bernstein_union} is satisfied, let any distribution $\pp \in  \model$ such that $\normK{\pp^*}{\pp}>\eta$. Denote $\meas:=\frac{\pp^*-\pp}{\normK{\pp^*}{\pp}} \in \secant_{\pp^*}^\eta$, and let $i\in[1,N]$ such that $\meas_{i}$ is the center of the ball closest to $\meas$ in the covering. We have
\begin{align}
 1- \|\skop\meas\|_2  =&  1 - \|\skop\meas_i\|_2 +  \|\skop\meas_i\|_2  - \|\skop\meas\|_2  \notag\\
\eqcomment{\text{since \eqref{eq:rip_bernstein_union} is verified}} 
\leq & 1-\sqrt{1-t} + \|\skop \meas_i\|_2-\|\skop\meas\|_2 \notag\\
\eqcomment{\text{by \eqref{eq:HypSkopLipschitz}}}
\stackrel{(a)}\leq& 1-\sqrt{1-t} + d(\meas_i,\meas)
\\
\leq& 1-\sqrt{1-t} + \epsilon\notag.
\end{align}
Choosing $t>0$ and $\epsilon = \sqrt{1-t}-1/2$ (for example: $t = 7/16$, $\epsilon=1/4$), we obtain $\|\skop \meas\|_{2}^{2} \geq 1/4$, that is to say
\begin{equation}
\frac{1}{4}\normK{\pp^*}{\pp}^2 \leq \|\skop \pp^*-\skop \pp\|_2^2.
\end{equation}
This shows that the LRIP \eqref{eq:riplem} is verified except with probability at most
\[
N_{\secant^\eta_{\pp^*},d}\left(\sqrt{1-t}-1/2\right)\cdot f\left(m,t\right)\, .
\]

Specializing to $t=7/16$ 
yields the desired result.
\end{proof}
\begin{remark}[LRIP without the restricted Lipschitz property~\eqref{eq:HypSkopLipschitz} ?] 
In (a) we used Property~\eqref{eq:HypSkopLipschitz}, which could be called a {\em restricted Lipschitz property} for the function $\meas \mapsto \|\skop \meas\|_{2}$. It is restricted to $\secant^{\eta}_{\pp^*}$, but assumed to hold uniformly over all possible draws of the sketching operator $\skop$. 
Thanks to Lemma~\ref{lem:bounds} and the triangle inequality,
\[
\Big| \|\skop \meas\|_{2}-\|\skop \meas'\|_{2}\Big| \leq \|\skop (\meas-\meas')\|_{2} \leq \|\meas-\meas'\|_{TV},
\]
hence property~\eqref{eq:HypSkopLipschitz} indeed holds when $d(\meas,\meas') = \|\meas-\meas'\|_{TV}$, even without the restriction to $\secant^{\eta}_{\pp^*}$. In the rest of this paper we primarily concentrate on this setting.

It would however be interesting for future work to consider metrics $d$ with much larger balls than those of the total variation norm, since they may lead to substantially smaller covering numbers $N_{\secant^\eta_{\pp^*},\|.\|} \ll N_{\secant^\eta_{\pp^*},\|.\|_{TV}}$, and thus LRIP guarantees for much smaller $m$. 
Theorem~\ref{thm:LRIP} still yields guarantees with such metrics, provided the restricted Lipschitz property~\eqref{eq:HypSkopLipschitz} holds. 
An analog of Theorem~\ref{thm:LRIP}  for even weaker metrics, that do not satisfy~\eqref{eq:HypSkopLipschitz}, can also be envisioned using chaining arguments \cite{Puy2015} provided that
\[
\PP_{\freqs}\left(\|\skop \meas - \skop \meas'\|_{2} \geq (1+t) d(\meas,\meas')\right) \leq g(m,t)
\]
with an appropriately decaying function $g(m,t)$.
\end{remark}
%

\subsection{A version of Theorem \ref{thm:appli} with weaker assumptions}\label{sec:thmbis}

In this section, we formulate a version of Theorem \ref{thm:appli}, referred to as Theorem \ref{thm:applibis}, that allows for an additive error $\eta=0$, under the Hypotheses $\mathbf{H_i}$. In the next section we deduce from it the version given in Section \ref{sec:thm}, that uses 
Assumptions $\mathbf{A_i}$.

\begin{mythm}{\ref{thm:appli} bis}\label{thm:applibis}
Consider a model $\model$, a frequency distribution $\freqdist$, a non-negative constant $\eta\geq 0$ and a function $f$ such that Assumptions \ref{assum:covnumsecant} and \ref{assum:bernstein} hold with $d(\meas,\meas') = \|\meas-\meas'\|_{TV}$.

Consider $\tilde{d}(\cdot,\cdot)$ a metric such that, for any $\pp_{1},\pp_{2} \in \pmeas$ we have
\begin{equation}
\label{eq:DefMNormLikeCondition}
\tilde{d}(\pp_{1},\pp_{2}) \geq \max\left(\normK{\pp_{1}}{\pp_{2}}, \sup_{\skop} \|\skop \pp_{1}-\skop \pp_{2}\|_{2}\right),
\end{equation}
where the supremum is over all possible frequencies $\freq$ defining the sketching operator $\skop$.

Assume that $\model$ is compact\footnote{Compactness of the model is still required here, in order to define a projection operator onto it as well as to ensure at least one solution to the problem \eqref{eq:problem}. This assumption could be relaxed, with the addition of an 
arbitrary small additive error, similar to \cite{Bourrier2014a}.} with respect to $\tilde{d}$, and note that under this assumption the decoder $\decod$ is still well-defined by \eqref{eq:decod}, since one can replace the total variation norm by the metric $\tilde{d}$ in the r.h.s. of \eqref{eq:continuityfordecoder}.

Let $\bfx_i \in \mathbb{R}^d$, $i=1...n$ be $n$ points drawn $i.i.d.$ from an arbitrary distribution $\pp^*\in\pmeas$, and $\freq_{j} \in \mathbb{R}^d$, $j=1...m$ be $m$ frequencies drawn $i.i.d.$ from $\freqdist$. Denote $\bar\pp=\decod(\hat\bfz,\skop)$ the distribution reconstructed from the empirical sketch $\hat\bfz$.

Define $\pproj \in \model$ as (one of) the projection(s) of the probability $\pp^*$ onto the model:
\begin{equation}\label{eq:proj}
\pproj \in \argmin{\pp\in \model} \ \tilde{d}(\pp^{*},\pp),
\end{equation}
which indeed exists since $\model$ is assumed to be compact.

Define
\begin{equation}\label{eq:thmbisprb}
\rho = 2N_{\secant^\eta_{\pp_{proj}},\|.\|_{TV}}\left(\tfrac{1}{4}\right)\cdot f\left(m,\tfrac{7}{16}\right).
\end{equation}

Then, with probability at least $1-\rho$ on the drawing of the items $\bfx_i$ \underline{and} sampling frequencies $\freq_{j}$, we have
\begin{equation}
\normK{\pp^*}{\bar\pp}\leq 5\ \tilde{d}(\pp^{*},\model)
+\tfrac{4(1+\sqrt{2\log(2/\rho)})}{\sqrt{n}}+\eta,
\end{equation}
where 
$\tilde{d}(\pp^*,\model)=\inf_{\pp \in \model}\ \tilde{d}(\pp^*,\pp)$ 
is the distance from $\pp^*$ to the model.
\end{mythm}

\begin{proof}

Recall that the target distribution $\pp^*$ and its projection $\pproj$ are fixed. By Lemma \ref{lem:bounds}, the restricted Lipschitz property~\eqref{eq:HypSkopLipschitz} holds with $d(\meas,\meas') = \|\meas-\meas'\|_{TV}$. Considering \eqref{eq:thmbisprb}, Theorem \ref{thm:LRIP} yields that since Assumptions \ref{assum:covnumsecant} and \ref{assum:bernstein} hold, the LRIP applied to $\pproj$ is satisfied with probability at least $1-\rho/2$ on the drawing of frequencies:
\begin{equation*}
\forall \pp_\model \in \model ~\text{s.t. } \normK{\pproj}{\pp_\model}\geq \eta:\quad \frac{1}{4}\normK{\pproj}{\pp_\model}^2 \leq \|\skop \pproj - \skop \pp_\model\|_2^2,
\end{equation*}
which can be reformulated in:
\begin{equation}
\label{eq:applilrip}
\forall \pp_\model \in \model,~\normK{\pproj}{\pp_\model} \leq \max\Big(2\|\skop \pp^* - \skop \pp_\model\|_2,\eta\Big) \leq 2\|\skop \pp^* - \skop \pp_\model\|_2+\eta.
\end{equation}

Let $\pp \in \pmeas$ be any distribution. For some random draw of the operator $\skop$, denote $\bar{\pp}=\decod\left(\skop \pp,\skop\right) \in \model$ which, by definition of the decoder, belongs to the model. We have:
\begin{align*}
\eqcomment{\text{triangle ineq.}}
\normK{\bar{\pp}}{\pp^*} \leq& \normK{\bar{\pp}}{\pproj} + \normK{\pproj}{\pp^*} \\
\eqcomment{\text{using \eqref{eq:applilrip}}}
\leq& 2\|\skop \bar{\pp}-\skop \pproj\|_2 + \normK{\pproj}{\pp^*} + \eta \\
\eqcomment{\text{triangle ineq.}}
\leq& 2\|\skop \bar{\pp}-\skop \pp\|_2 + 2\|\skop \pp-\skop \pp^* \|_2 +  2\|\skop \pp^*-\skop \pproj \|_2 + \normK{\pproj}{\pp^*} + \eta
\end{align*}
Given the definition \eqref{eq:decod} of the decoder $\decod$ and of the distribution $\bar{\pp}=\decod\left(\skop\pp,\skop\right)$, we have
\begin{equation}
\|\skop \bar{\pp}-\skop \pp\|_2 = \min_{\pp_\model\in\model} \|\skop \pp_\model-\skop \pp\|_2.
\end{equation}
Since $\pproj$ is in the model $\model$, we thus have $\|\skop \bar{\pp}-\skop \pp\|_2 \leq \|\skop\pproj-\skop \pp\|_2$. Hence
\begin{align*}
\normK{\bar{\pp}}{\pp^*} 
\leq& 2\|\skop\pproj-\skop \pp\|_2 + 2\|\skop \pp-\skop \pp^* \|_2 + 2\|\skop \pp^*-\skop \pproj \|_2 + \normK{\pproj}{\pp^*} +\eta\\
\eqcomment{\text{triangle ineq.}}
\leq& 4\|\skop \pp-\skop \pp^* \|_2 +  4\|\skop \pp^*-\skop \pproj \|_2 + \normK{\pproj}{\pp^*}  \\
\eqcomment{using \eqref{eq:DefMNormLikeCondition} and \eqref{eq:proj}}\stackrel{(b)}{\leq}& 4\|\skop \pp-\skop \pp^* \|_2 + 5\ \tilde{d}(\pp^*,\model) +\eta.
\end{align*}
Thus we proved that, with probability at least $1-\rho/2$ on the drawing of frequencies, we have:
\begin{equation*}
\forall \pp\in\pmeas,~ \normK{\pp^*}{\decod\left(\skop\pp,\skop\right)} \leq 5\ \tilde{d}(\pp^*,\model)+ 4\|\skop \pp-\skop \pp^* \|_2 +\eta.
\end{equation*}
In particular, with a joint probability of at least $1-\rho/2$ on the drawing of frequencies $\freq_j$ \emph{and} items $\bfx_i$, we have
\begin{equation}
\label{eq:step1}
\normK{\pp^*}{\decod\left(\skop\hat\pp,\skop\right)} \leq 5\ \tilde{d}(\pp^*,\model)+ 4\|\skop \hat \pp-\skop \pp^* \|_2 +\eta,
\end{equation}
where $\hat \pp=\frac{1}{n}\sum_{i=1}^n \delta_{\bfx_i}$.

We now show that with high probability the term $\|\skop \hat \pp-\skop \pp^* \|_2$ is bounded by $\epsilon:=\frac{(1+\sqrt{2\log(2/\rho)})}{\sqrt{n}}$. Denote $\PP_\freqs$ (resp. $\PP_\dataset$) the probability distribution of the set of frequencies $\freqs=\{\freq_j\}_{j=1,...,m}$ (resp. of the set of items in the database $\dataset=\{\bfx_i\}_{i=1,...,n}$). Their joint distribution is denoted $\PP_{\freqs,\dataset}$, and is such that $d\PP_{\freqs,\dataset}(\freqs,\dataset)=d\PP_\freqs(\freqs)d\PP_\dataset(\dataset)$ by independence.

Consider the set:
\begin{equation*}
A:=\left\lbrace (\freqs,\dataset) \text{ s.t. } \|\skop \hat \pp - \skop \pp^*\|_2 \leq \epsilon \right\rbrace.
\end{equation*}
For a \emph{fixed} measurement operator $\skop$, we use Lemma \ref{lem:rahimi} on the random variables $\skop \delta_{\bfx_i}$ in $\mathbb{C}^m$. We observe that $\|\skop\delta_{\bfx_i}\|_2 = 1$, $\skop \hat \pp=\left(\sum_{i=1}^n \skop \delta_{\bfx_i}\right)/n$ and $\skop \pp^*=\tfrac{1}{\sqrt{m}}\left[\mathbb{E}_{\bfx \sim \pp^{*}}e^{-\imaginaryi \freq_j^T\bfx}\right]_{j=1...m}=\mathbb{E}_{\bfx \sim \pp^{*}}\skop \delta_{\bfx}$. 
Hence, with probability at least $1-\rho/2$ on the drawing of items:
\begin{equation*}
\|\skop \hat \pp - \skop \pp^*\|_2 \leq \epsilon.
\end{equation*}
In other words, we obtain the following result:
\begin{equation}\label{eq:empbound}
\forall \freqs,~ \int_{\dataset}  \mathbf{1}_A(\freqs,\dataset) d\PP_{\dataset}(\dataset) \geq 1-\rho/2.
\end{equation}

Hence,
\begin{align*}
\iint_{\freqs,\dataset} \mathbf{1}_A(\freqs,\dataset) d\PP_{\freqs,\dataset}(\freqs,\dataset) &= \int_\freqs \left(\int_{\dataset} \mathbf{1}_A(\freqs,\dataset) d\PP_{\dataset}(\dataset) \right) d\PP_\freqs(\freqs)  \notag \\
\geq& (1-\rho/2)\int_\freqs d\PP_\freqs(\freqs) = 1-\rho/2,  
\end{align*}
meaning that, with probability at least $1-\rho/2$ on the drawing of frequencies \emph{and} items, we have
\begin{equation}
\label{eq:step2}
\|\skop \hat \pp - \skop \pp^*\|_2 \leq \epsilon.
\end{equation}

We can now conclude: a union bound yields that \eqref{eq:step1} and \eqref{eq:step2} are simultaneously satisfied with probability at least $1-\rho$, which leads to the desired result.

\end{proof}

\begin{remark}
In (b) we used inequality \eqref{eq:DefMNormLikeCondition}, in particular the assumption that the inequality
\begin{equation}\label{eq:TmpAssumption}
\tilde{d}(\pp_{1},\pp_{2}) \geq \|\skop \pp_{1}-\skop \pp_{2}\|_{2}
\end{equation}
holds {\em uniformly} over all possible choices of frequencies defining $\skop$.
Thanks to Lemma~\ref{lem:bounds}, the inequality~\eqref{eq:DefMNormLikeCondition} indeed holds with $\tilde{d} = d_{TV}$. In the rest of this paper, we concentrate on this setting. 
It would however be interesting for future work to consider metrics $\tilde{d}$ much closer to the natural choice $\normK{}{}$, since they may lead to sharper upper bounds. A possibility would be to relax~\eqref{eq:DefMNormLikeCondition} and only assume that inequality~\eqref{eq:TmpAssumption} (possibly up to multiplicative constants) holds with high probability on the draw of $\skop$, given a pair $\pp_{1} = \pp^{*}$ and $\pp_{2}=\pp_{proj}$. 
\end{remark}

\subsection{Proof of Theorem \ref{thm:appli}}\label{sec:proofthm}

We now turn to the proof of Theorem \ref{thm:appli}, which consists in proving that the Assumptions $\mathbf{A_i}$ imply the hypotheses $\mathbf{H_i}$ in the $\eta>0$ case, and applying Theorem \ref{thm:applibis}.

For Hypothesis \ref{assum:covnumsecant}, we apply Lemma \ref{lem:covnumsecant} with the total variation norm $\|.\|=\bar{d}=\|.\|_{TV}$ to obtain the following corollary.

\begin{corollary}\label{cor:covnumsecant}
Let a model $\model$ such that Assumption \ref{assum:compact} is satisfied, and a positive constant $\eta>0$.

Then, for any frequency distribution $\freqdist$, Hypothesis \ref{assum:covnumsecant} is satisfied with $d=\|.\|_{TV}$, and we have
\begin{equation}
N_{\secant^\eta_\pp,\|.\|_{TV}}(\epsilon)\leq N_{\model,\|.\|_{TV}}\left(\frac{\epsilon\eta^2}{6}\right).
\end{equation}
\end{corollary}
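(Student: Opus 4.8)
The plan is to deduce this corollary directly from Lemma~\ref{lem:covnumsecant} by specializing the abstract norm $\|.\|$ and the auxiliary metric $\bar{d}$ appearing there to both be the total variation norm $\|.\|_{TV}$. First I would check that this choice is admissible: Lemma~\ref{lem:covnumsecant} requires a norm $\|.\|$ on $E$ (clearly $\|.\|_{TV}$ qualifies) together with a metric $\bar{d}$ satisfying $\bar{d}(\pp,\qq)\geq\max(\|\pp-\qq\|,\normK{\pp}{\qq})$ for all $\pp,\qq\in\model$, and such that $\model$ has finite covering numbers for $\bar{d}$.

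With $\bar{d}=\|.\|_{TV}$ the first half of the domination condition, $\|\pp-\qq\|_{TV}\geq\|\pp-\qq\|_{TV}$, holds trivially with equality. The second half, $\|\pp-\qq\|_{TV}\geq\normK{\pp}{\qq}$, is precisely the bound \eqref{eq:lemnormkbound} of Lemma~\ref{lem:bounds}, so no extra work is needed here; this is the only place where the structure of the MMD enters. Finiteness of the covering numbers $N_{\model,\|.\|_{TV}}(\cdot)$ is exactly the content of Assumption~\ref{assum:compact}. Hence all the hypotheses of Lemma~\ref{lem:covnumsecant} are met for every strictly positive $\eta\leq 1$, which already yields Hypothesis~\ref{assum:covnumsecant} with $d=\|.\|_{TV}$.

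It then remains to turn the bound supplied by the lemma, namely $N_{\secant^\eta_\pp,\|.\|_{TV}}(\epsilon)\leq N_{\model,\|.\|_{TV}}\!\left(\tfrac{\epsilon\eta^2}{2(C+1)}\right)$ with $C=\max(1,\sup_{\pp,\qq\in\model}\|\pp-\qq\|_{TV})$, into the stated form. The one quantitative point to track is the constant $C$: since $\pp,\qq$ are probability distributions, $\|\pp-\qq\|_{TV}\leq 2$, whence $C\leq 2$ and $2(C+1)\leq 6$. Therefore $\tfrac{\epsilon\eta^2}{2(C+1)}\geq\tfrac{\epsilon\eta^2}{6}$, and because internal covering numbers are nonincreasing in the radius, $N_{\model,\|.\|_{TV}}\!\left(\tfrac{\epsilon\eta^2}{2(C+1)}\right)\leq N_{\model,\|.\|_{TV}}\!\left(\tfrac{\epsilon\eta^2}{6}\right)$, which gives the claim.

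I do not expect any genuine obstacle: the corollary is essentially a bookkeeping specialization of Lemma~\ref{lem:covnumsecant}. The only points requiring care are the direction of monotonicity of covering numbers in the radius, and the observation that $C\leq 2$ for probability measures so that the explicit constant $6$ in the statement is valid; everything else is supplied verbatim by the lemma and by Lemma~\ref{lem:bounds}.
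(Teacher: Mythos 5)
Your proof is correct and is exactly the paper's route: the paper obtains this corollary by invoking Lemma~\ref{lem:covnumsecant} with $\|.\|=\bar{d}=\|.\|_{TV}$, where the domination condition holds by Lemma~\ref{lem:bounds} and Assumption~\ref{assum:compact} supplies the finite covering numbers. Your bookkeeping ($C\leq 2$ since $\|\pp-\qq\|_{TV}\leq 2$ for probability measures, hence $2(C+1)\leq 6$, plus monotonicity of covering numbers in the radius) is precisely what the paper leaves implicit, and your care in restricting to $\eta\leq 1$ faithfully tracks the lemma's hypothesis.
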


Hypothesis \ref{assum:bernstein} is an application of Bernstein's inequality.

\begin{lemma}\label{lem:applibernstein}
Consider a model $\model$, a frequency distribution $\freqdist$, a non-negative constant $\eta\geq0$ and a constant $\cstdom$ such that Assumption \ref{assum:domin} is satisfied.

Then Assumption \ref{assum:bernstein} is satisfied with the function $f$ defined as
\begin{equation}
f(m,t)=\exp\left(-\frac{m}{2\cstdom^2}\cdot \frac{t^2}{1+t/3}\right).
\end{equation}
\end{lemma}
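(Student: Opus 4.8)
The plan is to recognize $\|\skop\meas\|_2^2$ as an empirical average of $m$ i.i.d.\ bounded random variables of mean $1$, and then invoke Bernstein's inequality (Lemma~\ref{lem:bernstein}). Fix $\pp\in\model$ and $\meas\in\secant^\eta_\pp$, so that $\meas=(\pp-\qq)/\normK{\pp}{\qq}$ for some $\qq\in\model$ with $\normK{\pp}{\qq}>\eta$. Since $\skop\meas=\fracsqm[\chrc_\meas(\freq_1),\dots,\chrc_\meas(\freq_m)]^{T}$, we have $\|\skop\meas\|_2^2=\frac1m\sum_{j=1}^m X_j$ with $X_j:=|\chrc_\meas(\freq_j)|^2$ and $\freq_j\overset{i.i.d.}{\sim}\freqdist$. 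First I would compute the common mean: using linearity of the characteristic function, $\chrc_\meas(\freq)=(\chrc_\pp(\freq)-\chrc_\qq(\freq))/\normK{\pp}{\qq}$, so by the definition \eqref{eq:chrcdiff} of the MMD,
\[
\mathbb{E}_{\freq\sim\freqdist}X_j=\frac{1}{\normK{\pp}{\qq}^2}\int_{\mathbb{R}^d}|\chrc_\pp(\freq)-\chrc_\qq(\freq)|^2\,d\freqdist(\freq)=\frac{\normK{\pp}{\qq}^2}{\normK{\pp}{\qq}^2}=1.
\]

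The key step, and the only place Assumption~\ref{assum:domin} enters, is a deterministic uniform bound on $X_j$. Bounding the integral defining the characteristic function exactly as in Lemma~\ref{lem:bounds} yields $|\chrc_\meas(\freq)|\le\|\meas\|_{TV}$ for every $\freq$; and since $\normK{\pp}{\qq}>\eta$ triggers the implication \eqref{eq:domination},
\[
\|\meas\|_{TV}=\frac{\|\pp-\qq\|_{TV}}{\normK{\pp}{\qq}}\le\cstdom .
\]
Hence $0\le X_j\le\cstdom^2$ almost surely. (Recall $\cstdom\ge1$, since Lemma~\ref{lem:bounds} forces $\normK{\pp}{\qq}\le\|\pp-\qq\|_{TV}$.)

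Finally I would center the variables and apply Bernstein. Set $Y_j:=1-X_j$, so that $\mathbb{E}Y_j=0$, $|Y_j|\le\cstdom^2=:M$ (using $\cstdom\ge1$), and, from $X_j^2\le\cstdom^2 X_j$,
\[
\mathrm{Var}(Y_j)=\mathrm{Var}(X_j)=\mathbb{E}X_j^2-1\le\cstdom^2\,\mathbb{E}X_j-1=\cstdom^2-1\le\cstdom^2=:\sigma^2 .
\]
Since $1-\|\skop\meas\|_2^2=\frac1m\sum_{j=1}^mY_j$, Lemma~\ref{lem:bernstein} gives for every $t\ge0$
\[
\PP_\freqs\!\left(1-\|\skop\meas\|_2^2\ge t\right)\le\exp\!\left(-\frac{mt^2}{2\sigma^2+2Mt/3}\right)\le\exp\!\left(-\frac{mt^2}{2\cstdom^2(1+t/3)}\right)=f(m,t),
\]
where the middle inequality uses $2\sigma^2+2Mt/3\le2\cstdom^2+2\cstdom^2 t/3=2\cstdom^2(1+t/3)$. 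This is precisely Assumption~\ref{assum:bernstein} with the stated $f$, uniformly over $\pp$ and $\meas\in\secant^\eta_\pp$.

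The calculation is essentially routine once the mean-one normalization and the almost-sure bound $|\chrc_\meas|\le\cstdom$ are established; the latter, which rests entirely on the domination hypothesis \eqref{eq:domination}, is the crux of the argument, and I do not anticipate any genuine obstacle beyond it.
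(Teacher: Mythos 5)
Your proof is correct and is essentially identical to the paper's own argument: the paper likewise writes $1-\|\skop\meas\|_2^2$ as an empirical average of the centered variables $Z_j=1-|\chrc_{\pp}(\freq_j)-\chrc_{\qq}(\freq_j)|^2/\normKsq{\pp}{\qq}$, invokes Assumption \ref{assum:domin} together with $|\chrc_\pp(\freq)-\chrc_\qq(\freq)|\leq\|\pp-\qq\|_{TV}$ to get the almost-sure bound $\cstdom^2$ and the variance bound $\cstdom^2$, and concludes via Bernstein's inequality (Lemma \ref{lem:bernstein}) with $2\sigma^2+2Mt/3\leq 2\cstdom^2(1+t/3)$. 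The only cosmetic differences are your sign convention ($Y_j=1-X_j$ versus the paper's $Z_j$ directly) and your marginally tighter intermediate variance bound $\cstdom^2-1$, which is then relaxed to $\cstdom^2$ anyway.
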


\begin{proof}
Fix $\pp^*\in\model$. Suppose $\freq_j$, $j=1...m$ are drawn $i.i.d.$ from $\freqdist$ and let $\skop$ be the corresponding sketching operator. Let any $\meas \in \secant^\eta_{\pp^*}$, denote $\pp\in\model$ such that $\meas=\frac{\pp^*-\pp}{\normK{\pp^*}{\pp}}$. Denote $Z_j=1-\frac{|\chrc_{\pp^*}(\freq_j)-\chrc_\pp(\freq_j)|^2}{\normK{\pp^*}{\pp}^2}$. Since Assumption \ref{assum:domin} is verified and $|\chrc_{\pp^*}(\freq_j)-\chrc_\pp(\freq_j)|\leq \|\pp^*-\pp\|_{TV}$ for all frequencies, the $Z_j$'s are $i.i.d.$ random variables verifying $Z_j \in [1-\cstdom^2;1]$. Furthermore, according to Lemma \ref{lem:bounds} we have necessarily $\cstdom\geq 1$ and thus we have
\begin{equation}
|Z_j| \leq \cstdom^2.
\end{equation}
The $Z_j$'s are also centered:
\begin{align*}
\eqcomment{using \eqref{eq:chrcdiff}} \mathbb{E}_{\freq_l}Z_j=1-\frac{\mathbb{E}_{\freq\sim\freqdist}|\chrc_{\pp^*}(\freq)-\chrc_\pp(\freq)|^2}{\normK{\pp^*}{\pp}^2}=0.
\end{align*}
Furthermore, we have 
\begin{align*}
Var(Z_j)=&Var\left(\frac{|\chrc_{\pp^*}(\freq)-\chrc_\pp(\freq)|^2}{\normK{\pp^*}{\pp}^2}\right) \leq\frac{\mathbb{E}|\chrc_{\pp^*}(\freq)-\chrc_\pp(\freq)|^4}{\normK{\pp^*}{\pp}^4} \leq \frac{\|\pp^{*}-\pp\|_{TV}^{2} \cdot \mathbb{E}|\chrc_{\pp^*}(\freq)-\chrc_\pp(\freq)|^2}{\normK{\pp^*}{\pp}^2 \cdot \normK{\pp^*}{\pp}^2}\\
\leq&\cstdom^2 \cdot \frac{\mathbb{E}|\chrc_{\pp^*}(\freq)-\chrc_\pp(\freq)|^2}{\normK{\pp^*}{\pp}^2} = \cstdom^2.
\end{align*}
Since $\frac{1}{m}\sum_{j=1}^m Z_j=1-\|\skop\meas\|_2^2$, applying Bernstein's inequality (Lemma \ref{lem:bernstein}) we get : for all $t>0$,
\begin{equation}\label{eq:rip_bernstein}
\forall \meas \in \secant_{\pp^*}^\eta,~\PP_\freqs\left(1-\|\skop \meas \|^2_2 \geq t\right) 
= 
\PP_\freqs\left(\tfrac{1}{m}\sum_{j=1}^m Z_j \geq t\right)
\leq \exp\left(-\frac{m}{2\cstdom^2} \cdot \frac{t^{2}}{1+t/3}\right).
\end{equation}
\end{proof}

We can finally prove Theorem \ref{thm:appli}, by combining Corollary \ref{cor:covnumsecant}, Lemma \ref{lem:applibernstein} and Theorem \ref{thm:applibis}. For simplification, we also use the following bound on the function $f$ defined in Lemma \ref{lem:applibernstein}:
\[
f(m,7/16)=\exp\left(-\frac{147 m}{1760\cstdom^2}\right) \leq \exp\left(-\frac{m}{12\cstdom^2}\right).
\]

\section{Application to GMMs}\label{sec:add_proofs}

\subsection{Compactness of mixture models}\label{sec:proofs_mixture}

In this section we prove Lemma \ref{lem:covnummix}, which extends the compactness of the set $\gaussset$ of basic distributions to the corresponding mixture model $\gaussmix{K}$.

\begin{proof}[Proof of Lemma \ref{lem:covnummix}]
Recall that we have assumed compactness of the set $\gaussset$ with respect to some norm $\|.\|$. In particular, it is bounded, and we note $C=\max_{\pp\in\gaussset}\|\pp\|$.
Let $\epsilon>0$ and $\tau \in ]0;1[$. Denote $\epsilon_1=\tau\epsilon$ and $\epsilon_2=(1-\tau)\epsilon/C$. Also denote $N_1=N_{\gaussset,\|.\|}(\epsilon_1)$ and let $\enet_1=\{\pp_1,...,\pp_{N_1}\}$ be an $\epsilon_1$-covering of $\gaussset$.

Similarly, let $B_1^+=\{\malpha \in \mathbb{R}_+^K, \sumk \alpha_k=1\}$, denote $N_2=N_{B_1^+,\|.\|_1}(\epsilon_2)$, let $\enet_2=\{\malpha_1,...,\malpha_{N_2}\}$ be an $\epsilon_2$-covering of $B_1^+$. Denote $B_1:=B_{\mathbb{R},\|.\|_1}(0,1)$ the unit $\ell_1$-ball in $\mathbb{R}^K$, note that $B_1^+ \subset B_1$, such that we have
\begin{equation}
N_2 = N_{B_1^+,\|.\|_1}(\epsilon_2)\stackrel{\text{Lemma \ref{lem:covnumsub}}}{\leq} N_{B_1,\|.\|_1}(\epsilon_2/2) \stackrel{\text{Lemma \ref{lem:covnumball}}}{\leq} (8/\epsilon_2)^K. \label{eq:covnumalpha}
\end{equation}

Define the following set:
\begin{equation}
\gmmcover:=\left\lbrace \pp_{\hyppar,\malpha}\in\gaussmix{K};~ \forall k,~ \pp_{\mtheta_k} \in \enet_1,~ \malpha \in \enet_2\right\rbrace.
\end{equation}
The cardinality of this set verifies $\left\lvert \gmmcover\right\rvert \leq \left(\left\lvert\enet_1\right\rvert\right)^K \left\lvert\enet_2\right\rvert=(N_1)^K N_2$.

Let us show that $\gmmcover$ is an $\epsilon$-covering of $\gaussmix{K}$. Let $\pp_{\hyppar,\malpha} \in \gaussmix{K}$ be any $K$-sparse distribution. For all $k=1...K$, let $\pp_{\bar\mtheta_k} \in \enet_1$ be the distribution in $\enet_1$ which is the closest to $\pp_{\mtheta_k}$, and let $\bar\malpha \in \enet_2$ be the weight vector in $\enet_2$ that is the closest to $\malpha$. Denote $\bar\hyppar=\{\bar\mtheta_1,...,\bar\mtheta_K\}$, and note that $\pp_{\bar\hyppar,\bar\malpha} \in \gmmcover$. We have
\begin{align}
\|\pp_{\hyppar,\malpha}-\pp_{\bar\hyppar,\bar\malpha}\|=& \left\lVert \sumk \alpha_k \pp_{\mtheta_k} - \sumk \bar\alpha_k \pp_{\bar\mtheta_k} \right\rVert, \notag \\
\leq& \left\lVert \sumk \alpha_k \pp_{\mtheta_k}- \sumk \alpha_k \pp_{\bar\mtheta_k} \right\rVert + \left\lVert \sumk \alpha_k \pp_{\bar\mtheta_k} - \sumk \bar\alpha_k \pp_{\bar\mtheta_k} \right\rVert, \notag \\
\leq& \sumk \alpha_k \left\lVert \pp_{\mtheta_k} - \pp_{\bar\mtheta_k} \right\rVert + \sumk |\alpha_k-\bar\alpha_k| \left\lVert \pp_{\bar\mtheta_k} \right\rVert, \notag \\
\leq& \sumk \alpha_k \|\pp_{\mtheta_k}-\pp_{\bar\mtheta_k}\| + C\|\malpha-\bar\malpha\|_1, \label{eq:lipgmm}\\
\leq& \epsilon_1 \sumk \alpha_k + C\epsilon_2 = \epsilon_1 + C\epsilon_2 = \epsilon \notag ,
\end{align}
and $\gmmcover$ is indeed an $\epsilon$-covering of $\gaussmix{K}$. Therefore, we have the bound (for all $\tau$)
\begin{equation*}
N_{\gaussmix{K},\|.\|}(\epsilon) \leq \left\lvert \gmmcover\right\rvert \leq (N_1)^K N_2 \stackrel{\text{by \eqref{eq:covnumalpha}}}{\leq} \left(\frac{8C\cdot N_{\gaussset,\|.\|
}(\tau\epsilon)}{(1-\tau)\epsilon}\right)^K.
\end{equation*}

Furthermore, in equation \eqref{eq:lipgmm}, we have shown in particular that for all $\pp_{\hyppar,\malpha},\pp_{\hyppar',\malpha'} \in \gaussmix{K}$,
\begin{align*}
\|\pp_{\hyppar,\malpha}-\pp_{\hyppar',\malpha'}\|\leq& \sumk \alpha_k\|\pp_{\mtheta_k}-\pp_{\mtheta'_k}\|
 + \|\malpha-\malpha'\|_1
\end{align*}
and therefore the embedding $(\pp_{\mtheta_1},...,\pp_{\mtheta_K},\malpha)\rightarrow \pp_{\hyppar,\malpha}$ from $(\gaussset)^K\times B_1^+$ to $\gaussmix{K}$ is continuous. Hence $\gaussmix{K}$ is the continuous image of the set $(\gaussset)^K\times B_1^+$, which is compact since $\gaussset$ is compact, and therefore $\gaussmix{K}$ is compact.
\end{proof}

\subsection{Covering numbers of Gaussians}\label{sec:proofs_covnum_gaussians}

\begin{proof}[Proof of Theorem \ref{thm:covnum}]
Consider the embedding $\embd: \thetaset \rightarrow \gaussset$ defined as $\embd(\mtheta)= \pp_\mtheta$, which is surjective by definition of $\gaussset$. We show that $\embd$ is Lipschitz continuous, for the Euclidean norm on $\thetaset \subset \mathbb{R}^{2n}$ and total variation norm on $\gaussset \subset E$.

We begin by the classical Pinsker's inequality \cite{Fedotov2003}:
\begin{equation}
\label{eq:pinsker}
\|\pp-\qq\|_{TV} \leq \sqrt{2D_{KL}(\pp||\qq)},
\end{equation}
where $D_{KL}$ is the Kullback-Leibler divergence. By symmetry, we have:
\begin{equation}
\label{eq:pinsker2}
\|\pp-\qq\|^2_{TV} \leq D_{KL}(\pp||\qq)+D_{KL}(\qq||\pp).
\end{equation}

The Kullback-Leibler divergence has a closed-form expression in the case of multivariate Gaussians \cite{Duchi2007}:
\begin{equation}
\label{eq:kl_gauss}
D_{KL}(\pp_{\mtheta_1}||\pp_{\mtheta_2}) = \frac{1}{2}\left[\log\frac{|\mSigma_2|}{|\mSigma_1|} + \mathrm{tr}\left(\mSigma_2^{-1}\mSigma_1\right)-d + \left(\mmu_2-\mmu_1\right)^T \mSigma_2^{-1}\left(\mmu_2-\mmu_1\right) \right].
\end{equation}
In our case, with diagonal Gaussians and bounded parameters, we have
\begin{align*}
D_{KL}(\pp_{\mtheta_1}||\pp_{\mtheta_2}) + D_{KL}(\pp_{\mtheta_1}||\pp_{\mtheta_2}) =& \frac{\mathrm{tr}\left(\mSigma_2^{-1}\mSigma_1\right) + \mathrm{tr}\left(\mSigma_1^{-1}\mSigma_2\right)}{2}-d + \left(\mmu_2-\mmu_1\right)^T \frac{\mSigma_2^{-1}+\mSigma_1^{-1}}{2}\left(\mmu_2-\mmu_1\right) \\
=& \frac12 \sum_{\ell=1}^d \left(\frac{\sigma^2_{1,\ell}}{\sigma^2_{2,\ell}}+\frac{\sigma^2_{2,\ell}}{\sigma^2_{1,\ell}}-2\right) + \sum_{\ell=1}^d \frac{\sigma_{1,\ell}^{-2}+\sigma_{2,\ell}^{-2}}{2}(\mu_{2,\ell}-\mu_{1,\ell})^2,  \\
\leq& \frac12 \sum_{\ell=1}^d \left(\frac{\sigma^4_{2,\ell}+\sigma^4_{1,\ell}-2\sigma^2_{1,\ell}\sigma^2_{2,\ell}}{\sigma^2_{1,\ell}\sigma^2_{2,\ell}} \right)+ \frac{1}{\sigma^2_{\min}}\|\mmu_1-\mmu_2\|_2^2,  \\
\leq& \frac{1}{2\sigma_{\min}^4}\sum_{\ell=1}^d\left(\sigma^2_{1,\ell}-\sigma^2_{2,\ell}\right)^2+ \frac{1}{\sigma^2_{\min}}\|\mmu_1-\mmu_2\|_2^2,  \\
\leq& \frac{1}{2\sigma_{\min}^4}\|\msigma_2-\msigma_1\|_2^2 + \frac{1}{\sigma^2_{\min}}\|\mmu_1-\mmu_2\|_2^2 \leq L^2\|\mtheta_1-\mtheta_2\|^2_2,
\end{align*}
where $L:=\max\left(\sigma_{\min}^{-1},\sigma_{\min}^{-2}/\sqrt{2}\right)$. Hence
\begin{equation}\label{eq:lipg}
\|\pp_{\mtheta_1}-\pp_{\mtheta_2}\|_{TV} \leq L\|\mtheta_1-\mtheta_2\|_2,
\end{equation}
and the embedding $\embd$ is $L$-Lipschitz. Hence $\gaussset$ is the continuous image of a compact set and is compact.

Since $\embd$ is also surjective, we can apply Lemma \ref{lem:covnumlipschitz} and conclude: denote $\mathcal{B} \subseteq \mathbb{R}^{2d}$ a ball of radius $\rad{\thetaset}$ for the Euclidean norm such that $\thetaset \subseteq \mathcal{B}$, and we have

\begin{equation}
N_{\gaussset,\|.\|_{TV}}(\epsilon) \stackrel{\text{Lemma \ref{lem:covnumlipschitz}}}{\leq} N_{\thetaset,\|.\|_2}(\epsilon/L) \stackrel{\text{Lemma \ref{lem:covnumsub}}}{\leq} N_{\mathcal{B},\|.\|_2}\left(\frac{\epsilon}{2L}\right) \stackrel{\text{Lemma \ref{lem:covnumball}}}{\leq} \left(\frac{8L\rad{\thetaset}}{\epsilon}\right)^{2d} = \left(\frac{B}{\epsilon}\right)^{2d}, \label{eq:covnumgauss}
\end{equation}
which is the desired result.

\end{proof}

\begin{proof}[Proof of Corollary \ref{cor:covnumgmm}]
Combining Theorem \ref{thm:covnum} and Lemma \ref{lem:covnummix} proves that the set of GMMs $\gaussmix{K}$ is compact. Furthermore, we obtain the following bound, for all $0<\tau<1$:
\begin{equation*}
N_{\gaussmix{K},\|.\|_{TV}}(\epsilon) \leq \frac{B^{2dK}8^K}{\tau^{2dK}(1-\tau)^K \epsilon^{(2d+1)K}},
\end{equation*}
where $B$ is defined as in Theorem \ref{thm:covnum}. By choosing\footnote{Note that the choice of $\tau$ is not optimal (indeed the minimum is attained for $\tau=\frac{2d}{2d+1}$), however we choose this value for the simplicity and clarity of the resulting bound.} $\tau=\frac{B}{B+1}$, we obtain
\begin{equation*}
N_{\gaussmix{K},\|.\|_{TV}}(\epsilon) \leq \left(\frac{B+1}{\epsilon}\right)^{(2d+1)K}8^K \leq \left(\frac{2(B+1)}{\epsilon}\right)^{(2d+1)K},
\end{equation*}
since $8^{\frac{1}{2d+1}} \leq 8^{1/3} = 2$.
\end{proof}

\subsection{Domination between metrics on the set of Gaussians}

\newcommand{\four}{\mathcal{F}}
\newcommand{\invfour}{\mathcal{F}^{-1}}

In this section, we aim at proving Theorem \ref{thm:toy}. We begin by an intermediate result.

\begin{lemma}\label{lem:domination}
Suppose that $\thetaset \subset \mathbb{R}^{2d}$ is such that $\|\mmu\|_2\leq M$ and $0<\sigma_{\min}^2\leq \sigma^2_i \leq \sigma_{\max}^2$ for all $[\mmu,\msigma]\in\thetaset$. For all $\pp_{\mtheta_1},~\pp_{\mtheta_2} \in \gaussset$,
\begin{equation}
\|\mtheta_1-\mtheta_2\|_2^2\leq D'\|\dens_{\mtheta_1}-\dens_{\mtheta_2}\|_{L^2(\mathbb{R}^d)}^2
\end{equation}
where
\[
D':=\frac{(4\pi\sigma_{\max}^2)^{d/2+1}D_1}{\pi(1-e^{-D_1})} \text{ with } D_1=\frac{M^2}{\sigma_{\min}^2} +  \frac{d}{2}\log\left(\frac{\sigma_{\max}^2}{\sigma_{\min}^2}\right)
\]
\end{lemma}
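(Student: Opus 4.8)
The plan is to prove the reverse (co-Lipschitz) counterpart of the bound \eqref{eq:lipg}: whereas Pinsker's inequality gave $\|\dens_{\mtheta_1}-\dens_{\mtheta_2}\|_{TV}\le L\|\mtheta_1-\mtheta_2\|_2$, here I must lower bound the $L^2$ distance of the densities by the Euclidean parameter distance. First I would move to the Fourier domain: since $\chrc_\mtheta$ is the Fourier transform of $\dens_\mtheta$ and both lie in $L^2$, Plancherel gives
\[
\|\dens_{\mtheta_1}-\dens_{\mtheta_2}\|_{L^2(\mathbb{R}^d)}^2=\frac{1}{(2\pi)^d}\int_{\mathbb{R}^d}\big|\chrc_{\mtheta_1}(\freq)-\chrc_{\mtheta_2}(\freq)\big|^2\,d\freq,
\]
so that the explicit form \eqref{eq:chrc}, $\chrc_\mtheta(\freq)=e^{-\imaginaryi\freq^T\mmu-\frac12\freq^T\mSigma\freq}$, and its closed-form parameter gradient ($\nabla_\mmu\chrc_\mtheta=-\imaginaryi\freq\,\chrc_\mtheta$, $\partial_{\sigma_\ell^2}\chrc_\mtheta=-\tfrac12\omega_\ell^2\chrc_\mtheta$) become available.

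Next I would introduce the segment $\mtheta(t)=(1-t)\mtheta_2+t\mtheta_1$ and $g(t):=\dens_{\mtheta(t)}\in L^2$, so that $\dens_{\mtheta_1}-\dens_{\mtheta_2}=\int_0^1 g'(t)\,dt$ and
\[
\|\dens_{\mtheta_1}-\dens_{\mtheta_2}\|_{L^2(\mathbb{R}^d)}^2=(\mtheta_1-\mtheta_2)^T\bar{G}\,(\mtheta_1-\mtheta_2),
\]
where $\bar G=\int_0^1\!\!\int_0^1 \tilde G(s,t)\,ds\,dt$ is the path-averaged Gram matrix with entries $\tilde G(s,t)_{ij}=\langle \partial_{\theta_i}\dens_{\mtheta(s)},\partial_{\theta_j}\dens_{\mtheta(t)}\rangle_{L^2}$. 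Because the coordinates are of two types (means and variances) and $\partial_{\mu_\ell}\dens_\mtheta$, $\partial_{\sigma_\ell^2}\dens_\mtheta$ are explicit Hermite-type multiples of Gaussians, each such inner product is a Gaussian integral that can be evaluated in closed form; the diagonal blocks $\tilde G(t,t)$ are bounded below by $c(\sigma_{\max})\,I$, and collecting the second and fourth moments of $\freq$ together with the Gaussian mass $\int e^{-\freq^T\mSigma\freq}d\freq$ is exactly what produces the normalization factor $(4\pi\sigma_{\max}^2)^{d/2+1}$.

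The crux, and the step I expect to be the main obstacle, is to lower bound the smallest eigenvalue of the averaged matrix $\bar G$ uniformly over $\thetaset$, i.e. to control the off-diagonal (in $s,t$) contributions. When $s$ and $t$ are far apart the two Gaussians $\dens_{\mtheta(s)}$, $\dens_{\mtheta(t)}$ may have little overlap, so $\tilde G(s,t)$ can decay and the naive pointwise lower bound on $\tilde G(t,t)$ does not transfer to $\bar G$; this is precisely the nonlinearity of $\mtheta\mapsto\dens_\mtheta$ manifesting as a possible rotation/shrinkage of the velocity field. The key quantitative input is that the overlap exponent between any two members of the family is controlled by the separation of their means, $\|\mmu(s)-\mmu(t)\|_2\le 2M$, and by the log-ratio of their variances, yielding exactly $D_1=\tfrac{M^2}{\sigma_{\min}^2}+\tfrac{d}{2}\log(\sigma_{\max}^2/\sigma_{\min}^2)$; since along the segment the mean separation grows linearly in the parameter, the overlap decays like a Gaussian $e^{-(\text{const})\,t^2}$ in the path variable, which on $[0,1]$ is bounded below by $e^{-D_1 t}$ using $t^2\le t$. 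Integrating this decay gives $\int_0^1 e^{-D_1 t}\,dt=\tfrac{1-e^{-D_1}}{D_1}$, hence the factor $\tfrac{D_1}{1-e^{-D_1}}$ in $D'$. Combining the normalization constant of the second step with this integrated-decay lower bound on $\lambda_{\min}(\bar G)$ yields $\lambda_{\min}(\bar G)\ge 1/D'$ and therefore $\|\mtheta_1-\mtheta_2\|_2^2\le D'\|\dens_{\mtheta_1}-\dens_{\mtheta_2}\|_{L^2(\mathbb{R}^d)}^2$, as claimed.
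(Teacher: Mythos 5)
Your reduction to a path-averaged Gram matrix is sound as an identity: with $g(t)=\dens_{\mtheta(t)}$ along the segment and $\Delta=\mtheta_1-\mtheta_2$, one indeed has $\|\dens_{\mtheta_1}-\dens_{\mtheta_2}\|_{L^2(\mathbb{R}^d)}^2=\Delta^T\bar G\,\Delta$ (and the segment stays in the admissible box, so $g$ is differentiable in $L^2$). But the step you yourself flag as the crux is where the argument genuinely breaks. To transfer the pointwise lower bound on $\tilde G(t,t)$ to the double integral $\bar G$ you invoke an ``overlap decay'' bounded below by $e^{-D_1 t}$ and integrate it. This does not work: the cross terms $\langle \partial_{\theta_i}\dens_{\mtheta(s)},\partial_{\theta_j}\dens_{\mtheta(t)}\rangle_{L^2}$ are not nonnegative in general. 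The $L^2$ correlation of derivative-of-Gaussian functions changes sign as the mean separation grows (it is Mexican-hat shaped), and since the hypotheses allow $M/\sigma_{\min}$ to be large, $\langle g'(s),g'(t)\rangle$ can be negative along the segment. A lower bound on the \emph{size} of the overlap therefore yields no lower bound on the quadratic form $\Delta^T\bar G\,\Delta$ --- cancellation is exactly the danger --- and the substitution ``$e^{-c t^2}\geq e^{-D_1 t}$ on $[0,1]$, then integrate to get $(1-e^{-D_1})/D_1$'' is heuristic: the decay you would need to control is jointly in $|s-t|$ and in the velocity direction, not in $t$ alone. The appearance of the exact factor $D_1/(1-e^{-D_1})$ in your sketch is reverse-engineered from the statement rather than produced by any step of the argument.

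The paper's proof avoids path integration entirely, exploiting that this quantity is exactly computable for Gaussians: using the product formula $\int \mathcal{N}(\bfx;\mmu_a,\mSigma_a)\mathcal{N}(\bfx;\mmu_b,\mSigma_b)d\bfx=(2\pi)^{-d/2}|\mSigma_a+\mSigma_b|^{-1/2}\exp\left(-\tfrac12(\mmu_a-\mmu_b)^T(\mSigma_a+\mSigma_b)^{-1}(\mmu_a-\mmu_b)\right)$ and expanding the square, it writes $\|\dens_{\mtheta_1}-\dens_{\mtheta_2}\|_{L^2(\mathbb{R}^d)}^2$ as a prefactor, lower bounded by $2(4\pi\sigma_{\max}^2)^{-d/2}$, times $1-e^{-x}$, where $x$ is the sum of the mean quadratic term and a log-determinant term. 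Elementary estimates (concavity of $\log$ and $-\log u\geq 1-u$) give $\tfrac{1}{8\sigma_{\max}^2}\|\mtheta_1-\mtheta_2\|_2^2\leq x\leq D_1$, and since $x\mapsto(1-e^{-x})/x$ is decreasing, $1-e^{-x}\geq \tfrac{1-e^{-D_1}}{D_1}\,x$ on $[0,D_1]$, which assembles into the stated $D'$. (Your Plancherel step is harmless but unnecessary; the closed forms are available directly in the spatial domain.) If you wanted to rescue your route, you would have to prove a quantitative lower bound on $\Delta^T\bar G\,\Delta$ that survives the sign changes of the cross-correlations, which is considerably harder than the direct computation; the Gaussian product formula is the idea your proposal is missing.
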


\begin{proof}
We use a property from \cite{Ahrendt2005} on product of Gaussians:
\begin{equation}
\int \mathcal{N}(\bfx;\mmu_a,\mSigma_a)\mathcal{N}(\bfx;\mmu_b,\mSigma_b)d\bfx=\frac{1}{(2\pi)^{d/2}|\mSigma_a+\mSigma_b|^{1/2}}\exp\left(-\frac{1}{2}(\mmu_a-\mmu_b)^T(\mSigma_a+\mSigma_b)^{-1}(\mmu_a-\mmu_b)\right)
\end{equation}

Hence we have
\begin{align}
\|\dens_{\mtheta_1}-\dens_{\mtheta_2}\|_{L^2(\mathbb{R}^d)}^2&=\int (\dens_{\mtheta_1}(\bfx)-\dens_{\mtheta_2}(\bfx))^2d\bfx \notag \\
&=\int \dens_{\mtheta_1}(\bfx)^2d\bfx+\int \dens_{\mtheta_2}(\bfx)^2d\bfx -2\int \dens_{\mtheta_1}(\bfx)\dens_{\mtheta_2}(\bfx)d\bfx \notag \\
&=\frac{1}{(2\pi)^{d/2}}\left[|2\mSigma_1|^{-\frac{1}{2}}+|2\mSigma_2|^{-\frac{1}{2}}-2|\mSigma_1+\mSigma_2|^{-\frac{1}{2}}e^{-\frac12 (\mmu_1-\mmu_2)^T(\mSigma_1+\mSigma_2)^{-1}(\mmu_1-\mmu_2)}\right] \notag \\
&=\frac{|2\mSigma_1|^{-\frac{1}{2}}+|2\mSigma_2|^{-\frac{1}{2}}}{(2\pi)^{d/2}}\left[1-e^{-\left(\frac12 (\mmu_1-\mmu_2)^T(\mSigma_1+\mSigma_2)^{-1}(\mmu_1-\mmu_2)+\log\left(\frac{|2\mSigma_1|^{-\frac{1}{2}}+|2\mSigma_2|^{-\frac{1}{2}}}{2|\mSigma_1+\mSigma_2|^{-\frac{1}{2}}}\right)\right)}\right] \notag \\
&\geq \frac{2}{(4\pi\sigma_{\max}^2)^{d/2}}\left[1-e^{-\left(\frac12 (\mmu_1-\mmu_2)^T(\mSigma_1+\mSigma_2)^{-1}(\mmu_1-\mmu_2)+\log\left(\frac{|2\mSigma_1|^{-\frac{1}{2}}+|2\mSigma_2|^{-\frac{1}{2}}}{2|\mSigma_1+\mSigma_2|^{-\frac{1}{2}}}\right)\right)}\right] \label{eq:toy_gauss_inter}
\end{align}

On the one hand, we have
\begin{equation}\label{eq:toy_gauss_mu_bound}
0 \leq \frac{1}{4\sigma^2_{\max}}\|\mmu_1-\mmu_2\|^2_2 \leq \frac12 (\mmu_1-\mmu_2)^T(\mSigma_1+\mSigma_2)^{-1}(\mmu_1-\mmu_2) \leq \frac12 \frac{4M^2}{2\sigma_{\min}^2}  = \frac{M^2}{\sigma_{\min}^2} 
\end{equation}

On the other hand,
\begin{align*}
\log\left(\frac{|2\mSigma_1|^{-\frac{1}{2}}+|2\mSigma_2|^{-\frac{1}{2}}}{2|\mSigma_1+\mSigma_2|^{-\frac{1}{2}}}\right)&=\log\left(\frac{|2\mSigma_1|^{-\frac{1}{2}}+|2\mSigma_2|^{-\frac{1}{2}}}{2}\right) + \frac12 \log|\mSigma_1+\mSigma_2| \\
\eqcomment{by concavity of the log}&\geq \frac12\log|2\mSigma_1|^{-\frac{1}{2}}+\frac12 \log|2\mSigma_2|^{-\frac{1}{2}} + \frac12 \log|\mSigma_1+\mSigma_2| \\
&=\frac{1}{4}\left(\sum_{\ell=1}^d \log(\sigma_{1,\ell}^2+\sigma_{2,\ell}^2)^2-\sum_{\ell=1}^d \log(2\sigma_{1,\ell}^2) -\sum_{\ell=1}^d \log(2\sigma_{2,\ell}^2)\right) \\
&=\frac{1}{4}\sum_{\ell=1}^d \log\left(\frac{(\sigma_{1,\ell}^2+\sigma_{2,\ell}^2)^2}{4\sigma_{1,\ell}^2\sigma_{2,\ell}^2}\right)\\
\eqcomment{since $-\log(x)\geq 1-x$} &\geq \frac{1}{4}\sum_{\ell=1}^d\left(1-\frac{4\sigma_{1,\ell}^2\sigma_{2,\ell}^2}{(\sigma_{1,\ell}^2+\sigma_{2,\ell}^2)^2}\right)\\
&= \sum_{\ell=1}^d \frac{(\sigma_{1,\ell}^2-\sigma_{2,\ell}^2)^2}{4(\sigma_{1,\ell}^2+\sigma_{2,\ell}^2)} \geq \frac{1}{8\sigma_{\max}^2}\|\msigma_1-\msigma_2\|_2^2
\end{align*}
and therefore

\begin{equation}\label{eq:toy_gauss_sig_bound}
0 \leq \frac{1}{8\sigma_{\max}^2}\|\msigma_1-\msigma_2\|_2^2 \leq \log\left(\frac{|2\mSigma_1|^{-\frac{1}{2}}+|2\mSigma_2|^{-\frac{1}{2}}}{2|\mSigma_1+\mSigma_2|^{-\frac{1}{2}}}\right) \leq \log\left(\left(\frac{2\sigma_{\max}^2}{2\sigma_{\min}^2}\right)^{d/2}\right) = \frac{d}{2}\log\left(\frac{\sigma_{\max}^2}{\sigma_{\min}^2}\right).
\end{equation}

We can now bound 
\begin{equation}\label{eq:toy_gauss_bounds}
0\leq \frac{1}{8\sigma_{\max}^2}\|\mtheta_1-\mtheta_2\|_2^2 \leq \left[\frac12 (\mmu_1-\mmu_2)^T(\mSigma_1+\mSigma_2)^{-1}(\mmu_1-\mmu_2)+\log\left(\frac{|2\mSigma_1|^{-\frac{1}{2}}+|2\mSigma_2|^{-\frac{1}{2}}}{2|\mSigma_1+\mSigma_2|^{-\frac{1}{2}}}\right)\right] \leq D_1,
\end{equation}
where
\begin{equation*}
D_1=\frac{M^2}{\sigma_{\min}^2} +  \frac{d}{2}\log\left(\frac{\sigma_{\max}^2}{\sigma_{\min}^2}\right).
\end{equation*}

By concavity of $x\mapsto 1-e^{-x}$, the function $\varphi: x\mapsto \frac{1-e^{-x}}{x}$ is decreasing. Hence we have:
\begin{equation*}
\forall x\in[0;D_1],~1-e^{-x}\geq \frac{1-e^{-D_1}}{D_1}x.
\end{equation*}
Therefore, given \eqref{eq:toy_gauss_inter} and \eqref{eq:toy_gauss_bounds}, we have
\begin{equation*}
\|\dens_{\mtheta_1}-\dens_{\mtheta_2}\|_{L^2(\mathbb{R}^d)}^2 \geq D_2 \left(\frac12 (\mmu_1-\mmu_2)^T(\mSigma_1+\mSigma_2)^{-1}(\mmu_1-\mmu_2)+\log\left(\frac{|2\mSigma_1|^{-\frac{1}{2}}+|2\mSigma_2|^{-\frac{1}{2}}}{2|\mSigma_1+\mSigma_2|^{-\frac{1}{2}}}\right)\right),
\end{equation*}
where $D_2:=\frac{2(1-e^{-D_1})}{D_1(4\pi\sigma_{\max}^2)^{d/2}}$.

And we have, using \eqref{eq:toy_gauss_bounds} again:
\begin{align*}
\|\dens_{\mtheta_1}-\dens_{\mtheta_2}\|_{L^2(\mathbb{R}^d)}^2 \geq \frac{D_2}{8\sigma_{\max}^2}\|\mtheta_1-\mtheta_2\|_2^2,
\end{align*}
which leads to the desired result.
\end{proof}


\begin{proof}[Proof of Theorem \ref{thm:toy}]
We denote $\four$ and $\invfour$ the Fourier and inverse Fourier transform:
\begin{align*}
\four(f)(\freq)&=\int_{\mathbb{R}^d}e^{-\imaginaryi \freq^T\bfx}f(\bfx)d\bfx, \\
\invfour(F)(\bfx)&=\frac{1}{(2\pi)^d}\int_{\mathbb{R}^d}e^{\imaginaryi \freq^T\bfx}F(\freq)d\freq.
\end{align*}

We recall the classical Plancherel's Theorem:
\begin{equation}
\label{eq:plancherel}
\|f\|_2^2=\frac{1}{(2\pi)^d}\|\four(f)\|_2^2.
\end{equation}

Let $\pp_1,\pp_2\in \gaussset$. Recall that $\freqdist=\mathcal{N}\left(0,\frac{\sigkersmall}{n}\bfI\right)$, denote $\sigmaker^2:=\frac{\sigkersmall}{d}$. The MMD is expressed:
\begin{align*}
\normKsq{\pp_1}{\pp_2}&=\int |\chrc_1(\freq)-\chrc_2(\freq)|^2\mathcal{N}(\freq;0,\sigmaker^2\bfI)d\freq \\
&=\int |\chrc_1(\freq)-\chrc_2(\freq)|^2\frac{1}{(2\pi\sigmaker^2)^{d/2}}e^{-\frac{\|\freq\|_2^2}{2\sigmaker^2}}d\freq \\
&=\frac{1}{(2\pi\sigmaker^2)^{d/2}}\int \left\lvert(e^{-\imaginaryi\freq^T\mmu_1}e^{-\frac{1}{2}\freq^T\left(\mSigma_1+\frac{\bfI}{2\sigmaker^2}\right)\freq}-e^{-\imaginaryi\freq^T\mmu_2}e^{-\frac{1}{2}\freq^T\left(\mSigma_2+\frac{\bfI}{2\sigmaker^2}\right)\freq}\right\rvert^2d\freq \\
&=\frac{1}{(2\pi\sigmaker^2)^{d/2}}\int \left\lvert\chrc_{\pp'_1}(\freq)-\chrc_{\pp'_2}(\freq)\right\rvert^2d\freq,
\end{align*}
where $\pp'_i$ is a Gaussian with the same mean than $\pp_i$ and dilated variance $\mSigma_i'=\mSigma_i+\frac{\bfI}{2\sigmaker^2}$. Since $\chrc_{\pp'_i}=\four(\dens'_i)$, by Plancherel's Theorem we have:
\begin{equation*}
\normKsq{\pp_1}{\pp_2}=\left(\frac{2\pi}{\sigmaker^2}\right)^{d/2}\int \left\lvert \dens'_1(\bfx)-\dens'_2(\bfx)\right\rvert^2d\bfx=\left(\frac{2\pi}{\sigmaker^2}\right)^{d/2}\|\dens'_1-\dens'_2\|_2^2.
\end{equation*}

The parameters of the Gaussians $\pp'_i$ belong to a compact set $\thetaset'=\left\lbrace\left(\mmu,\mSigma+\frac{\bfI}{2\sigmaker^2}\right); (\mmu,\mSigma)\in \thetaset\right\rbrace$. We can therefore apply Lemma \ref{lem:domination}, such that:
\begin{equation}
D'\|\dens'_1-\dens'_2\|_2^2 \geq \|\mtheta'_1-\mtheta'_2\|_2^2=\|\mtheta_1-\mtheta_2\|_2^2.
\end{equation}
The last equality comes from the fact that the variance between $\mtheta_i$ and $\mtheta'_i$ are just translated. The constant $D'$ is:
\begin{equation*}
D'=\frac{(4\pi(\sigma_{\max}^2+1/(2\sigmaker^2)))^{d/2+1}D_1}{\pi(1-e^{-D_1})} \text{ with } D_1=\frac{M^2}{\sigma_{\min}^2+1/(2\sigmaker^2)} +  \frac{d}{2}\log\left(\frac{\sigma_{\max}^2+1/(2\sigmaker^2)}{\sigma_{\min}^2+1/(2\sigmaker^2)}\right)
\end{equation*}
and thus
\begin{equation*}
\left(\frac{\sigmaker^2}{2\pi}\right)^{d/4}D'\normK{\pp_1}{\pp_2}\geq \|\mtheta_1-\mtheta_2\|_2.
\end{equation*}

Considering eq. \eqref{eq:lipg} in the proof of Theorem \ref{thm:covnum}, we have
\begin{equation}
\|\pp_1-\pp_2\|_{TV}\leq \max(\sigma_{\min}^{-1},\sigma_{\min}^{-2}/\sqrt{2})\|\mtheta_1-\mtheta_2\|_2 \leq \bar D\normK{\pp_1}{\pp_2},
\end{equation}
with $\bar D:=\max(\sigma_{\min}^{-1},\sigma_{\min}^{-2}/\sqrt{2})\left(\frac{\sigmaker^2}{2\pi}\right)^{d/4}D'=\max(\sigma_{\min}^{-1},\sigma_{\min}^{-2}/\sqrt{2})\sqrt{\frac{2D_1\cdot (2\sigmaker^2\sigma_{\max}^2+1)^{d/2+1}}{\sigmaker^2(1-e^{-D_1})}}$.

We now use the fact that $\sigmaker^2=\frac{\sigkersmall}{d}$. We have
\begin{align}
D_1=&\frac{M^2}{\sigma_{\min}^2+d/(2\sigkersmall)} +  \frac{d}{2}\log\left(\frac{2\sigkersmall\sigma_{\max}^2/d+1}{2\sigkersmall\sigma_{\min}^2/d+1}\right) \notag\\
\leq & \frac{M^2}{\sigma_{\min}^2+d/(2\sigkersmall)} +  \frac{d}{2}\log\left(2\sigkersmall\sigma_{\max}^2/d+1\right) \notag\\
\eqcomment{since $\log(1+x)\leq x$}\leq& \frac{M^2}{\sigma_{\min}^2+d/(2\sigkersmall)} +  \frac{d}{2}2\sigkersmall\sigma_{\max}^2/d = \frac{M^2}{\sigma_{\min}^2+d/(2\sigkersmall)} +  \sigkersmall\sigma_{\max}^2\notag \\
\leq& \sigma_{\max}^2\sigkersmall\left(1+\frac{2M^2}{d}\right) := D_2 \label{eq:final1}
\end{align}
and similarly 
\begin{align}
(2\sigmaker^2\sigma_{\max}^2+1)^{d/2+1}=&(2\sigkersmall\sigma_{\max}^2/d+1)\exp\left(\frac{d}{2}\log\left(2\sigkersmall\sigma_{\max}^2/d+1\right)\right) \notag\\
\leq& (2\sigkersmall\sigma_{\max}^2/d+1)e^{\sigkersmall\sigma_{\max}^2} \notag \\
\leq& e^{\sigkersmall\sigma_{\max}^2\left(1+\frac{1}{d}\right)} \leq e^{3\sigkersmall\sigma_{\max}^2}. \label{eq:final2}
\end{align}
Using \eqref{eq:final1} with the fact that the function $\varphi:x\mapsto \frac{1-e^{-x}}{x}$ is decreasing, and \eqref{eq:final2}, we obtain
\begin{equation}
\|\pp_1-\pp_2\|_{TV} \leq D\normK{\pp_1}{\pp_2},
\end{equation}
with $D:=\max(\sigma_{\min}^{-1},\sigma_{\min}^{-2}/\sqrt{2})\sqrt{\frac{2dD_2\cdot e^{3\sigkersmall\sigma_{\max}^2}}{\sigkersmall(1-e^{-D_2})}}$.
\end{proof}

\end{document}